\documentclass{article}
\pdfoutput=1

\usepackage{arxiv}

\usepackage[utf8]{inputenc}
\usepackage[T1]{fontenc}
\usepackage{hyperref}
\hypersetup{hidelinks}
\usepackage{url}
\usepackage{placeins}
\usepackage{booktabs}
\usepackage{longtable}
\usepackage{amsfonts}
\usepackage{nicefrac}
\usepackage{microtype}
\usepackage{graphicx}
\usepackage{amsmath}
\usepackage{amssymb}
\usepackage{amsthm}
\usepackage{mathtools}
\usepackage{algorithm}
\usepackage{algpseudocode}
\usepackage{multirow}
\usepackage[numbers,sort&compress]{natbib}
\usepackage{tikz}
\usetikzlibrary{arrows.meta,positioning,calc}

\graphicspath{ {./figures/}{../figures/} }

\newtheorem{theorem}{Theorem}
\newtheorem{proposition}{Proposition}
\newtheorem{corollary}{Corollary}
\newtheorem{lemma}{Lemma}
\theoremstyle{definition}
\newtheorem{definition}{Definition}
\theoremstyle{remark}

\newcommand{\method}{\textsc{Locks}}%
\newcommand{\R}{\mathbb{R}}

\title{\method{}: Page-Local Compact Key Summaries for Efficient Long-Context Decoding}

\author{Junsung Hwang}%

\begin{document}
\maketitle

\begin{abstract}
Serving large language models at long context is bottlenecked by the
key-value (KV) cache, which is read in full at every decode step.
Attention keys are locally low-rank though globally high-rank: a fixed
low-rank sketch shared across pages is provably blind to page-specific
directions, while at the same summary size a page's own basis ranks
pages and keeps carriers far better. \method{} gives every page its own
rank-$r$ spectral summary (resident, a tenth of the cache at $r{=}8$ and
a twenty-fifth at $r{=}2$), reconstructs
within-page logits, estimates each page's attention mass by
log-sum-exp, and attends only the top pages; selection itself reads no
candidate keys or values. Selecting on this summary alone stays within about a point of
the full cache on long-document QA (LongBench-v1; Llama-3.1-8B), tracks the
read-every-key exact-LSE oracle on retrieval-dense RULER down to the smallest
budgets, and holds quality furthest into the small-budget regime on
long-form reasoning (AIME26, MATH-500; Qwen3-4B), where selectors and
eviction-based reasoning compressors both fall away. At a
$2048$-token budget \method{} matches FullKV aggregate quality at
$100$K$+$ context (GLM-4-9B-Chat-1M) while attending about $2\%$ of
the tokens; since the summary is scanned in full each step, the
per-step KV read falls by about $10$--$25\times$ across that rank range, and
this halves
per-token decode latency ($2.0\times$ at $1$M tokens on one H200 NVL,
measured at $r{=}8$) against dense attention.
\method{} ships as a
drop-in plugin for unmodified vLLM, with batched decode running in
full CUDA graphs.
\end{abstract}

\section{Introduction}
\label{sec:intro}
Every decode step reads two things: the model's weights and the
sequence's key-value (KV) cache. Which one dominates depends on the
serving regime. At short context and low concurrency the step is
weight-bound, since the cache is small. Production serving increasingly
runs at the other end, where a growing batch amortizes the weight read
across requests but not the cache read, because every request reads its
own. The cache also binds capacity, not only bandwidth: for
Llama-$3.1$-$8$B with bf$16$ KV, a $128$K context occupies roughly $16$
GiB per request, more than the model's own weights, so the cache caps
how much context and how many requests an accelerator can serve.

Decode attention is sparse: a small fraction of the cache carries almost
all of a query's attention mass at each step, and attending only that
fraction recovers most of the full-cache output
\citep{zhang2023h2o,tang2024quest,ribar2024sparq}. The difficulty is
knowing which fraction. It depends on the query, and scoring pages
exactly would take a dot product against every key, the very traffic
sparsity is meant to remove. Selection must therefore run on a small
resident summary instead.

A summary is fit over some span of the cache: a single page, one
sequence, or the cache as a whole. We call that span its \emph{scope},
and it decides what the summary can preserve. Attention keys are locally
low-rank though globally high-rank: over a page of $16$ tokens, the
page's own rank-$8$ eigenbasis captures most of its key energy, so the
page's logits can be reconstructed from that basis alone. Ranking pages
on those reconstructions, from a resident summary an order of magnitude
smaller than the cache, tracks the read-every-key oracle down to the
smallest budgets. We measure that chain end to end: page-local spectral
concentration, logit and ranking fidelity (Fig.~\ref{fig:banner}a,c),
carrier retention on retrieval (Fig.~\ref{fig:banner}b), task quality at
aggressive sparsity (Fig.~\ref{fig:banner}d), and end-to-end
acceleration.

\begin{figure}[t]\centering
\includegraphics[width=\linewidth]{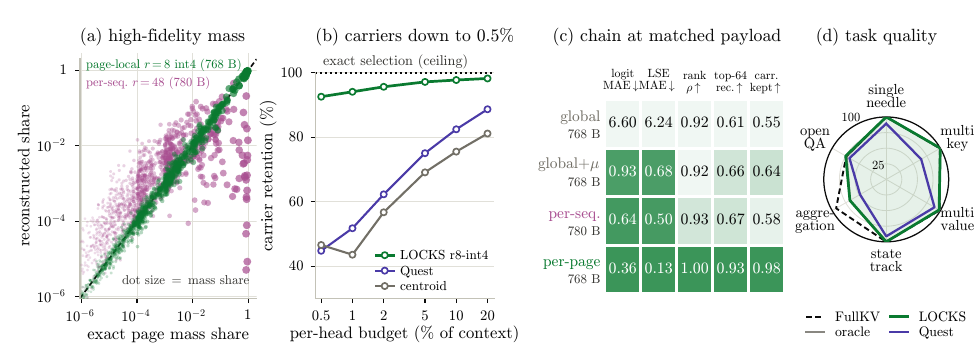}
\caption{\textbf{The locality thesis in one figure} (Llama-3.1-8B;
Qwen3-4B twin and protocol in App.~\ref{app:microscope}).
\textbf{(a)}~Exact vs.\ reconstructed page-mass share at matched
payload. \textbf{(b)}~Carrier retention vs.\ per-head budget.
\textbf{(c)}~The error ladder, one scope per row; \emph{global} is
offline Loki~\citep{singhania2024loki} as published, \emph{global}$+\mu$
the same basis with its calibration mean restored.
\textbf{(d)}~RULER-16K capability families at $b{=}256$: page scope
traces the exact-LSE oracle on all six, and on aggregation both sit
${\sim}25$ points below FullKV, a limit of the value-blind target rather
than of the estimator (\S\ref{sec:eval-acc}).}
\label{fig:banner}
\end{figure}

A page summary is worth only what selection can read back from it. Quest
scores a page by its per-dimension key extremes, the min and max a query
can align with \citep{tang2024quest}; COBS scores it by a centroid and a
spread quadratic over the page's keys \citep{cobs2026}; HiLS learns a
surrogate for the chunk log-sum-exp directly \citep{hils2026}. Existing
summaries keep a coarse statistic, reuse one representation across
pages, keep a few representative keys whose mass estimate sees only
those keys, or need training. \method{} instead reconstructs the
within-page logits needed to evaluate the established log-sum-exp (LSE)
mass target directly, with a training-free page-local representation
(\S\ref{sec:setup}, \S\ref{sec:method}).

The open question is the scope at which that representation is shared.
Loki uses a shared PCA basis calibrated offline
\citep{singhania2024loki}; ShadowKV constructs a sequence-level low-rank
representation \citep{sun2025shadowkv}. Neither scope is where the
structure lives: a fixed projection shared across pages is provably
blind to page-specific directions (Prop.~\ref{prop:blind}), and
broadening a shared basis to the same or greater per-page state still
does not recover the fidelity of the page-local factor (\S\ref{sec:law}).

A page-local representation must still be cheap enough to deploy, and it
should not require touching the model: learned selectors, by contrast,
train a gate onto it \citep{gao2024seerattention}. \method{} ships as a
drop-in attention-backend plugin for unmodified vLLM that builds each
page's basis from one small eigendecomposition when the page completes,
and scores and ranks every page in one fused kernel inside a full CUDA
graph (\S\ref{sec:method}, \S\ref{sec:eval}).

Each link in that chain can fail. \S\ref{sec:law} exhibits rival
summaries failing at different links, and the page-local representation
is the one for which every link holds. We adopt exact page-level
log-sum-exp mass as the established value-blind selection target
(App.~\ref{app:proofs-coverage}). \method{} makes three contributions:
\begin{itemize}
\item \textbf{Locality finding and matched-state evidence.} We show that
the geometry relevant to page selection is concentrated at page scope
rather than sequence or cache scope, and that the gap persists after
matching logical summary payload (\S\ref{sec:law}). The comparison moves
the basis and the centroid to page scope together, so it establishes
that page-local \emph{state} is what matters without isolating which of
the two carries it.
\item \textbf{Shared-projection impossibility and local construction.}
We prove that every fixed shared low-rank linear projection has
page-content blind directions (Prop.~\ref{prop:blind}), a worst-case
statement that predicts no effect size, and introduce a per-page
spectral summary whose unquantized form carries a composed bound from
reconstruction error to coverage and attention-output error
(Thm.~\ref{thm:retain}). The deployed integer summary is evaluated
empirically rather than bounded (\S\ref{sec:eval-abl}).
\item \textbf{Quality and deployability.} We show oracle-tracking
quality under aggressive budgets and integrate \method{} into unmodified
vLLM, obtaining long-context end-to-end acceleration (\S\ref{sec:eval}).
\end{itemize}

\section{What a Page Summary Must Preserve}
\label{sec:setup}

\paragraph{Decode-time attention.} A decoder layer holds per-head caches
$\{(\mathbf k_i,\mathbf v_i)\}_{i=1}^{T}$; a decode step forms a query
$\mathbf q$ and outputs
$\mathbf o=\sum_i \alpha_i \mathbf v_i$ with
$\alpha=\mathrm{softmax}(\mathbf q^\top\mathbf k/\sqrt d)$. Paged
serving \citep{kwon2023vllm} stores the cache in pages of $B$ consecutive
tokens, page $j$'s tokens written $P_j$; we operate at page granularity.
The knob is a \emph{fixed per-head token budget} $b$ (page budget
$k=\lceil b/B\rceil$; the always-kept sink and most recent pages fill
two of the $k$ slots), held constant as the context grows: unlike a
fixed fraction, it fixes the attended working set as the context grows.

Coverage governs sparse decoding's output error. For a page set $S$, let
$\gamma_S(\mathbf q)$ be the fraction of total attention mass that falls
in $S$, and let $\mathbf o_S$ be attention restricted to $S$ with the
softmax renormalized. Dropping the complement shifts the output by
exactly $\mathbf o-\mathbf o_S
=\varepsilon\,(\bar{\mathbf v}_{S^c}-\bar{\mathbf v}_S)$, with
$\varepsilon=1-\gamma_S(\mathbf q)$ and
$\bar{\mathbf v}_S,\bar{\mathbf v}_{S^c}$ the mass-weighted mean values
of the kept and dropped sets. The top-$k$ pages by exact log-sum-exp
mass maximize the retained mass, and since selection precedes the value
fetch, this ranking is worst-case optimal among deterministic rules
that do not read the values $\{\mathbf v_i\}$
(App.~\ref{app:proofs-coverage}).
The exact identity is due to \citet{tzachristas2025topk} and,
independently, \citet{cobs2026}; the latter also derives the
value-agnostic mass criterion. We claim neither, and adopt the target
they establish. Our statement of it adds only a formalization: once
value-blindness is imposed, every candidate set incurs the same
worst-case value factor, so mass is the only remaining lever, and the
minimax property follows in a line (App.~\ref{app:proofs-coverage}).
The substantive observation there is the converse, that a
\emph{randomized} value-blind rule strictly beats the deterministic
worst case, which bounds how much the criterion can be defended. Computing that exact mass, though, reads every
key, the very traffic sparse decoding exists to avoid, so deployable
selection needs a compact resident summary.

The identity's error term is exact, not just bounded. Shedding
interchangeable bulk ($\bar{\mathbf v}_{S^c}\approx\bar{\mathbf v}_S$) is
benign at any coverage, while shedding a single distinctive
\emph{carrier} token realizes the penalty near its ceiling however
little mass that token carries. Exact mass remains the target;
carrier-page recall is the diagnostic: small average mass errors can conceal
consequential ranking errors on the few task-critical pages, and the
task collapses of \S\ref{sec:eval} are the downstream consequence;
\S\ref{sec:law} measures the split directly (full microscope in
App.~\ref{app:microscope}).

\section{Representation Scope Determines Selection Fidelity}
\label{sec:law}

Whether a resident summary preserves the mass target of
\S\ref{sec:setup} is a question of \emph{scope}, the set of pages it is
fit to. Attention keys are locally low-rank though globally high-rank: a page's
own eigenbasis captures most of its within-page key energy at
rank $8$, while
a per-sequence basis (ShadowKV~\citep{sun2025shadowkv}) captures a
fraction of that and a single offline cache-wide basis
(Loki~\citep{singhania2024loki}) a smaller fraction still
(Table~\ref{tab:scope-main}).
This does not contradict either method's own spectra: the quantity here
is \emph{page-centered deviation} energy, whereas the global concentration
those papers report is dominated by the cache-wide mean and by
massive-activation directions, both of which a page centroid already
absorbs. The pattern is
a dose-response that survives a matched-\emph{payload} control: grown to
\method{}'s own per-page state, shared bases still trail the
page-local basis on captured energy and recall
(Table~\ref{tab:scope-main}, matched-payload rows;
Fig.~\ref{fig:banner}a,c), so state size alone does not close the gap.

One caveat bounds what that control isolates. The shared arms carry a
shared centroid while the page-local arm carries its own, so scope of
the basis and of the mean vary together, and a page centroid alone, at
rank $0$ and $128$ B, already recovers a substantial part of the recall
gap (Table~\ref{tab:appB-selection}). Separating the two needs a shared
basis stacked on a page-local centroid at matched payload, which we do not
run, so the claim here is about page-local \emph{state} and the sharper
one about the subspace remains open.

\begin{table}[t]\centering
\caption{\textbf{The dose-response of scope.} Captured energy (median
within-page key variance preserved), retained mass, and top-page recall
at $1024$ tokens/head, rank $r{=}8$, both models; bytes are logical
payload per (page, KV head), amortized for the shared arms.
Broadening scope from page to sequence
to cache degrades captured energy monotonically, and recall at matched
payload; shared scopes grown to \method{}'s own payload ($r{=}48$ rows)
still trail, so state size alone does not close the gap. Mass does not
separate the two shared scopes, and neither does recall at their native
sizes on Llama. The shared arms carry a shared centroid while the page
arm carries its own, so these rows vary locality of basis and of mean
together (\S\ref{sec:law}); the Loki
rows follow the offline-calibration protocol of
App.~\ref{app:microscope}. ShadowKV and Loki denote the corresponding
representation scopes; the full ShadowKV method is evaluated in
\S\ref{sec:eval}.}
\label{tab:scope-main}
\footnotesize
\setlength{\tabcolsep}{4.5pt}
\begin{tabular}{l rr rrr rrr}
\toprule
& & & \multicolumn{3}{c}{Llama-3.1-8B} & \multicolumn{3}{c}{Qwen3-4B}\\
\cmidrule(lr){4-6}\cmidrule(lr){7-9}
Representation & B/pg & \% KV & capt.\ E & mass & recall & capt.\ E & mass & recall \\
\midrule
\method{} local per-page basis & $768$ & $9.4$ & $\mathbf{89\%}$ & $\mathbf{0.88}$ & $\mathbf{0.93}$ & $\mathbf{89\%}$ & $\mathbf{0.89}$ & $\mathbf{0.92}$ \\
per-sequence basis (ShadowKV)  & $130$ & $1.6$ & $15\%$ & $0.23$ & $0.29$ & $28\%$ & $0.43$ & $0.38$ \\
offline global basis (Loki)    & $128$ & $1.6$ & $11\%$ & $0.25$ & $0.29$ & $23\%$ & $0.43$ & $0.35$ \\
per-sequence, matched payload ($r{=}48$) & $780$ & $9.5$ & $66\%$ & $0.62$ & $0.67$ & $72\%$ & $0.71$ & $0.73$ \\
offline global, matched payload ($r{=}48$) & $768$ & $9.4$ & $57\%$ & $0.59$ & $0.61$ & $67\%$ & $0.72$ & $0.71$ \\
\bottomrule
\end{tabular}
\end{table}

Three objections fail. First, a $16$-token page ($B{=}16$) has at most $B-1=15$ nonzero modes once
centered, so an unbounded pool of structure was never on offer;
measured against synthetic flat-spectrum (isotropic) pages at the same
rank, real pages concentrate well above this isotropic null
(App.~\ref{app:microscope}), so the concentration reflects genuine local
structure rather than an artifact of the mode-count cap. Second, the
per-page fit is not overfitting: the summary only ever scores the page
it was built from, so a page-specific fit is the right inductive bias.
Third, the concentration is a property of the keys themselves, not of
position encoding: it holds whether the basis is fit before or after
rotary embedding, and position-bucketed shared bases interpolate
fidelity monotonically from page toward sequence scope
(Table~\ref{tab:appD-rope}, App.~\ref{app:extstudies-rope}).

Proposition~\ref{prop:blind} makes the failure structural rather than
quantitative on a specific family: there is an at least
$(d-\rho)$-dimensional family of within-page content differences on which
the first link of \S\ref{sec:intro}'s chain does not merely degrade, it
vanishes. It is a worst-case existence statement and predicts no
effect size on realized keys; the magnitude of the gap is an empirical
question, answered by Table~\ref{tab:scope-main}.
\begin{proposition}[Shared scope is content-blind]
\label{prop:blind}
Fix any $\mathbf W\in\R^{d\times\rho}$ with $\rho<d$ and any scoring rule
whose value on a page depends only on the query, the page centroid
$\boldsymbol\mu$, and the sketched keys $\{\mathbf W^{\top}\mathbf
k_i\}$. For every unit $\mathbf u\perp\operatorname{span}(\mathbf W)$,
every page $P$ with at least two keys, and every $c>0$, the page $P'$
obtained by adding $c\,\mathbf u$ to one key of $P$ and $-c\,\mathbf u$ to
another receives the same score as $P$ for every query, while for every
query with $\mathbf q^{\top}\mathbf u>0$ the exact log-masses separate
without bound, $s_{P'}(\mathbf q)-s_P(\mathbf q)\to\infty$ as $c$ grows
(proof in App.~\ref{app:proofs-blindness}).
\end{proposition}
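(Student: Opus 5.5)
The plan has two halves: show the score is unchanged, then show the exact log-mass diverges.

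\emph{Invariance.} Let $P=\{\mathbf k_1,\dots,\mathbf k_m\}$ with $m\ge 2$. Form $P'$ by setting $\mathbf k_1'=\mathbf k_1+c\mathbf u$ and $\mathbf k_2'=\mathbf k_2-c\mathbf u$, leaving the other keys unchanged. I will check the three inputs of the scoring rule in turn.
\begin{itemize}
\item The query is the same for both pages.
\item The centroid is unchanged, because the perturbations sum to zero: $\boldsymbol\mu'=\boldsymbol\mu+(c\mathbf u-c\mathbf u)/m=\boldsymbol\mu$.
\item Each sketched key is unchanged. Since $\mathbf u\perp\operatorname{span}(\mathbf W)$, we have $\mathbf W^\top\mathbf u=\mathbf 0$, hence $\mathbf W^\top\mathbf k_i'=\mathbf W^\top\mathbf k_i$ for every $i$.
\end{itemize}
The score depends only on these inputs, so $P$ and $P'$ receive identical scores for every query. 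No property of the rule beyond this functional dependence is used. One point needs care: the rule must see the sketched keys as an indexed family, or a multiset. In either reading the family is literally identical, so this is fine.

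\emph{Divergence.} Write the exact log-mass as $s_P(\mathbf q)=\log\sum_{i\in P}\exp(\mathbf q^\top\mathbf k_i/\sqrt d)$, and set $a=\mathbf q^\top\mathbf u>0$. In $P'$, the term for $\mathbf k_1'$ becomes $\exp\bigl((\mathbf q^\top\mathbf k_1+ca)/\sqrt d\bigr)$. All terms are positive, so I can lower-bound the sum by this single term:
\[
s_{P'}(\mathbf q)\ \ge\ \frac{\mathbf q^\top\mathbf k_1+ca}{\sqrt d}.
\]
Meanwhile $s_P(\mathbf q)$ is a constant independent of $c$. The difference therefore grows at least like $ca/\sqrt d-\text{const}$, which tends to $\infty$. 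The key $\mathbf k_2'$ moves in the opposite direction and its logit goes to $-\infty$. That is harmless, because we only need a lower bound and its term is nonnegative.

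\emph{Main obstacle.} There is no real technical obstacle. The only subtlety is choosing the paired perturbation $\pm c\mathbf u$ so that the centroid is preserved. A single perturbation $+c\mathbf u$ would shift $\boldsymbol\mu$ by $c\mathbf u/m$, which the rule could detect. The pairing is exactly why the hypothesis requires at least two keys.

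Finally, I will note a consequence of $\rho<d$: the orthogonal complement of $\operatorname{span}(\mathbf W)$ has dimension at least $d-\rho\ge1$. So admissible $\mathbf u$ exist, and they form the $(d-\rho)$-dimensional family referred to in the text. For any such $\mathbf u$, the queries with $\mathbf q^\top\mathbf u>0$ form an open half-space, so the separation holds on a set of queries of positive measure.
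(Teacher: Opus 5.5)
Your proof is correct and follows the paper's argument essentially step for step. The paired $\pm c\,\mathbf u$ perturbation leaves the centroid and every sketch $\mathbf W^{\top}\mathbf k_i$ unchanged, since $\mathbf W^{\top}\mathbf u=\mathbf 0$; the divergence comes from lower-bounding the log-sum-exp of $P'$ by the single term of the key shifted by $+c\,\mathbf u$, while $s_P(\mathbf q)$ stays fixed. The paper also notes that the centered sketches are unchanged and that swapping the two perturbed keys covers queries with $\mathbf q^{\top}\mathbf u<0$, but the statement as posed needs neither.
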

The statement covers shared \emph{linear} sketches with a centroid, the
class instantiated by the global- and sequence-basis baselines
considered here once their basis is fixed (decode-time pages scored
with a prefill-fitted basis); extreme-based bounds receive the same
conclusion through the envelope lemma of
App.~\ref{app:proofs-blindness}. Codebooks \citep{pqcache2024,selfindexkv2026}, second-moment cores,
and adaptive procedures that refit $\mathbf W$ as the cache changes
fall outside the fixed-linear-sketch hypothesis; the moment core is
characterized separately in App.~\ref{app:proofs-moment}. \method{} falls
outside it for the same reason and only for that reason: were
$\mathbf V_j$ held fixed, the identical construction would give
\method{} at $r{=}8$ a $120$-dimensional blind family, wider than the
$80$-dimensional family of the $\rho{=}48$ arms it is contrasted with.
What rules the construction out is that $\mathbf V_j$ is refit from the
page's own keys and pages are immutable once finalized, so a summary is
never stale with respect to the keys it scores.

Rivals break the chain at different links, not merely with worse
constants. At the smallest budgets the envelope
(Quest~\citep{tang2024quest}) gives up little total mass against the
read-every-key ceiling yet keeps the carrier less than half as often on
both models (Fig.~\ref{fig:banner}b; Table~\ref{tab:appB-selection},
App.~\ref{app:microscope}): link 3, LSE ranking fidelity, fails while
captured mass stays nearly intact. The moment core
(COBS~\citep{cobs2026}, concurrent work; SPLA~\citep{spla2026} uses a
diagonal second-order variant) is exact in the Gaussian idealization
for the log-domain form yet has its weakest worst-case
control on the high-dynamic-range, peaky pages a carrier lives on
(App.~\ref{app:proofs-moment}). Measured head to head at matched rank it
is the strongest rival in Table~\ref{tab:appB-selection}, and the
separation is link-specific rather than wholesale: \method{} leads
top-$k$ recall by $11$--$25$ points in every cell, while on carrier
retention it leads at the tight budget on both models and trails by one
to two points at the $5\%$ budget on Llama. Link 2, \emph{token-logit} fidelity,
is therefore strictly more than second-order energy for ranking the
bulk of the page set, and comparable to it for keeping a carrier once the
budget is loose. The page-local representation is the one for which
every link holds at once (Thm.~\ref{thm:retain}; \S\ref{sec:eval}).

\section{\method{}: Every Page Its Own Basis}
\label{sec:method}
\method{} keeps the full cache and selects, at every decode step and per KV
head, the pages to attend under the fixed per-head token budget $b$ of
\S\ref{sec:setup}, from a resident per-page spectral summary of attention
mass. It is training-free and applies to standard GQA decoder models.
Since exact mass would read every key (\S\ref{sec:setup}) and every
fixed shared projection is blind to a family of within-page content
differences (Prop.~\ref{prop:blind}), \method{} scores each page from its
own resident factorization.

\phantomsection
\label{sec:lowrank}
\textbf{Build (once, when a page completes).} Center the page,
$\mathbf k_i=\boldsymbol\mu_j+\boldsymbol\delta_i$ with
$\sum_{i\in P_j}\boldsymbol\delta_i=\mathbf 0$, stack the deviations as
the rows of $\mathbf D_j\in\R^{B\times d}$, and eigendecompose the small
Gram
$\mathbf D_j^{\vphantom{\top}}\mathbf D_j^{\top}
=\mathbf U_j\boldsymbol\Sigma_j^{2}\mathbf U_j^{\top}\in\R^{B\times B}$
(eigenvalues $\sigma_{j,1}^2\ge\sigma_{j,2}^2\ge\cdots$, the squared
singular values of the page cloud, with right singular vectors
$\mathbf v_{j,\ell}$). Keep $r$ components, namely per-key coefficients
$\mathbf c_i\in\R^{r}$ (the rows of
$\mathbf C_j=\mathbf U_{j,r}\boldsymbol\Sigma_{j,r}$, so
$\mathbf c_i=\mathbf V_j^{\top}\boldsymbol\delta_i$) and the orthonormal
basis
$\mathbf V_j=\mathbf D_j^{\top}\mathbf U_{j,r}\boldsymbol\Sigma_{j,r}^{-1}
=[\mathbf v_{j,1}\cdots\mathbf v_{j,r}]\in\R^{d\times r}$ (zero modes, if
any, are dropped and their coefficients set to zero, so
$\boldsymbol\Sigma_{j,r}^{-1}$ is always applied to nonzero singular
values). Then
$\boldsymbol\delta_i=\mathbf V_j\mathbf c_i$ \emph{exactly} whenever
$r\ge\operatorname{rank}(\mathbf D_j)$, and up to the spectral tail
$\tau_{j,r}=\sum_{\ell>r}\sigma_{j,\ell}^2$ otherwise. The summary stores the basis
in int4 and the coefficients and centroid in int8 (per-column and per-row
scales). Its logical payload is $d\,r$ int4 basis entries plus $B\,r{+}d$ int8
bytes, $9.4\%$ of the page's KV at $r{=}8$, $B{=}16$; counting the
per-column and per-row scale factors raises it to $10.0\%$, and the same
accounting gives $6.0\%$ at $r{=}4$ and $4.0\%$ at $r{=}2$
(App.~\ref{app:extres-config}); each summary is one $B\times B$ eigendecomposition. A single batched pass at
the prefill--decode boundary summarizes the prompt,
and in steady state one incremental build per finalized page runs off the decode critical path; the model's prefill attention path is unchanged.

\textbf{Score (every step).} Reconstruct all $B$ within-page logits from
the summary and reduce them by log-sum-exp,
\begin{equation}
\hat s_j(\mathbf q)=\log\!\!\sum_{i\in P_j}\!\exp
\frac{\mathbf q^{\top}\boldsymbol\mu_j
+(\mathbf V_j^{\top}\mathbf q)^{\top}\mathbf c_i}{\sqrt d},
\label{eq:score}
\end{equation}
normalize into a page-mass share, and rank directly by the group-averaged
share $\bar{\hat m}_j$ defined next, with no exact rescore on the fast
path (justified by the retained-mass measurements of
\S\ref{sec:eval-abl}). Selection reads no candidate key or
value, and the per-page work, one $r$-dimensional projection plus a
length-$B$ log-sum-exp, fuses into the ranking kernel.

Across the GQA group, query $\mathbf q_g$ normalizes its scores into a
page-mass share $\hat m_g(j)=e^{\hat s_j(\mathbf q_g)}/\sum_{j'}e^{\hat
s_{j'}(\mathbf q_g)}$, and the \textbf{share-average combine} averages
them, $\bar{\hat m}_j=\tfrac1G\sum_{g\le G}\hat m_g(j)$. The sink and most
recent page fill two of the $k$ slots and the top-$(k{-}2)$ remaining
pages by $\bar{\hat m}_j$ fill the rest, one shared table all $G$ heads
attend at no extra traffic (KV is stored per KV head). With exact
shares, share-average maximizes the group-mean coverage
$\tfrac1G\sum_g\gamma_S(\mathbf q_g)$ among shared sets
(App.~\ref{app:proofs-combine}), and
Thm.~\ref{thm:retain}(ii) quantifies the estimated-share loss;
its normalized-share form is TokenSelect's head soft vote
\citep{tokenselect2024}%
, and our contribution is the exact-share optimality and the
worst-head measurements of \S\ref{sec:eval-abl}.

\begin{theorem}[Retention of the group-shared selection]
\label{thm:retain}
Fix a decode step, a KV head with pages $\{P_j\}$ and value-norm bound $R$,
and the group's queries $\{\mathbf q_g\}_{g\le G}$. Let $\hat s_j(\mathbf
q_g)$ be the scores of Eq.~\eqref{eq:score} from fully unquantized rank-$r$
summaries, and $\eta:=\max_{g,j}\bigl|\hat s_j(\mathbf q_g)-s_j(\mathbf
q_g)\bigr|$ the worst log-mass error. Then (i)
$\eta\le\max_{g,j}\lVert(\mathbf q_g)_{\perp}\rVert\,\sigma_{j,r+1}/\sqrt d$,
where $\sigma_{j,r+1}$ is page $j$'s first discarded singular value and $(\mathbf
q_g)_{\perp}$ the component of $\mathbf q_g$ outside page $j$'s basis, with
$\eta=0$ whenever $r\ge\operatorname{rank}(\mathbf D_j)$ for every $j$;
(ii) the shared top-$k$ set selected by the group-averaged estimated
shares $\bar{\hat m}_j$ retains at least $e^{-4\eta}$ of the group-average
coverage of the exact-share \emph{shared} selection, always-kept pages
included; (iii) hence
the group-average output error obeys
\[
\frac1G\sum_{g\le G}\bigl\lVert\mathbf o_S(\mathbf q_g)-\mathbf o(\mathbf
q_g)\bigr\rVert \;\le\; 2R\bigl(1-e^{-4\eta}\,\bar\gamma^{\star}\bigr),
\]
with $\bar\gamma^{\star}$ the group-average coverage of the best shared
set \emph{among those containing the sink and the most recent page},
matching the always-kept structure \method{} enforces
(App.~\ref{app:proofs}).
\emph{Proof.} App.~\ref{app:proofs}, composing a Lipschitz reconstruction
bound, the group-share sandwich, and a ranking-robustness bound at
$\Delta=2\eta$.
\end{theorem}

\textbf{Remark.} Theorem~\ref{thm:retain} applies to the fully
unquantized summary, and $\eta$ is a maximum over every page in the head
and every query in the group. Table~\ref{tab:appB-delta} evaluates the
r8fp control, whose basis is full precision but whose coefficients and
centroid remain int8, and the deployed r8i4 summary; neither instantiates
the theorem's $\eta$. The quantitative
weight is carried instead by the measured ranking fidelity and carrier
retention of \S\ref{sec:eval-abl} and App.~\ref{app:microscope}, where
the integer basis costs recall on bulk pages while carrier
retention is nearly unchanged.

Within the summary's fixed form, a centroid plus an orthonormal basis
plus the projected deviations, the per-page eigenbasis minimizes the
residual behind the reconstruction bound and is unique at a spectral gap
(Cor.~\ref{cor:ey}); that is optimality \emph{within that form}, and a
codebook or nonlinear summary of equal size is not excluded by it.
Lemma~\ref{lem:outlier} separately bounds the retained \emph{quadratic
energy} of a dominant key by the page's $(r{+}1)$-th mode, the moment
core's score functional rather than \method{}'s, so we do not offer it
as the mechanism behind carrier survival; that is measured
in App.~\ref{app:microscope}. The moment core is the
second-cumulant truncation of the same mass, with the weakest worst-case
control of \S\ref{sec:law} and the strongest measured rival performance
(App.~\ref{app:proofs-moment}, Table~\ref{tab:appB-selection}; cf.\
\citet{cobs2026}, and \citet{expectedattention2025}, whose Gaussian is
placed on future queries rather than the page's keys).

\paragraph{Implementation.}
\label{sec:impl}
\method{} loads into an unmodified vLLM~0.24~\citep{kwon2023vllm} as a
pip-installable plugin, a drop-in attention backend requiring no fork
or patch (App.~\ref{app:repro}). Because $r$, $B$, and $b$ (hence $k$)
are fixed, the ranking kernel of \S\ref{sec:lowrank} has fixed shapes
too, so the
entire batched decode step, selection included, replays inside full CUDA
graphs.

\section{Evaluation: Measuring Every Link}
\label{sec:eval}
We test quality at a fixed attended budget and decode efficiency
(\S\ref{sec:eval-eff}), both under the fixed per-(layer, KV-head)
token budget $b$ of \S\ref{sec:setup}. Every quality figure sweeps $b$
over the shared grid $b\in\{64,\dots,2048\}$ against FullKV, the
exact-LSE oracle (\S\ref{sec:setup}), \method{}, and a suite of
state-of-the-art selectors, evictors, and reasoning compressors named per
benchmark. All methods run
in the same engine against the same FullKV reference, on identical
records; the full protocol, model roster, budget accounting, and per-arm
sampling settings are in App.~\ref{app:extres}.

\subsection{Retrieval and long-document QA}
\label{sec:eval-acc}
Retrieval and long-document QA are the regime where selection must find
few, specific pages. \method{} tracks the read-every-key oracle (exact-LSE
selection, \S\ref{sec:setup}) down to the smallest budgets and stays
within about a point of the full cache on
LongBench-v1~\citep{bai2024longbench}, while four state-of-the-art
selectors and evictors break away as the budget
shrinks (Fig.~\ref{fig:q-retr}), namely Quest~\citep{tang2024quest},
KVzip~\citep{kim2025kvzip}, ShadowKV~\citep{sun2025shadowkv}, and
RocketKV~\citep{behnam2025rocketkv}.
The gap is widest on retrieval-dense RULER~\citep{hsieh2024ruler} at the
smallest budgets, where carrier-page retention decides the answer
(Fig.~\ref{fig:banner}b; \S\ref{sec:law}, App.~\ref{app:microscope}).
At $100$K+ context on InfiniteBench~\citep{infinitebench2024}
(GLM-4-9B-Chat-1M~\citep{glm4}) the same
ordering holds: \method{} tracks the exact-LSE oracle and has the highest measured
mean among deployable selectors at both a tight and a generous budget
(Table~\ref{tab:infinitebench-main}), though the bootstrap intervals of
the leading arms overlap, so that ordering is a point estimate rather
than a resolved separation.

Two qualifications on that oracle. It is a ceiling for
\emph{value-blind} selection at \method{}'s page granularity and group
combine, not an upper bound on achievable quality. A query-aware scorer
can exceed it on individual tasks: RocketKV does so by $16$ points on
NIAH-MK3 at RULER-16K $b{=}64$, while giving up $38$ to $58$ points on
the multi-value, multi-query and state-tracking tasks of the same sweep
(App.~\ref{app:extres}). Second, the target has a
failure class: on word-frequency aggregation the oracle collapses
\emph{with} us, scoring $5.0$ against \method{}'s $6.6$ and FullKV's
$87.4$ at $b{=}64$. Aggregation spreads its evidence over the whole
context, so no small page set carries the mass, and every method built
on the value-blind criterion of \S\ref{sec:setup} inherits this.

\begin{figure}[b]\centering
\includegraphics[width=\linewidth]{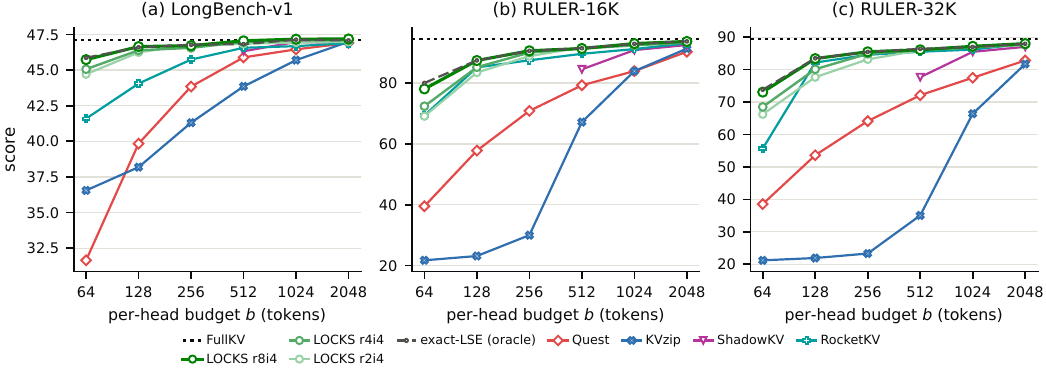}
\caption{\textbf{Retrieval and QA vs.\ budget} (Llama-3.1-8B~\citep{llama3_1model}, greedy,
same-engine FullKV, identical records across methods):
\textbf{(a)}~LongBench-v1 (14-subset mean, average context ${\sim}11$K
tokens), \textbf{(b)}~RULER-16K and \textbf{(c)}~RULER-32K (13-task means).
The $x$-axis is a common nominal budget; native budget semantics and measured
traffic and residency differ across methods, audited in
App.~\ref{app:extres-config}.}
\label{fig:q-retr}
\end{figure}

\begin{table}[tbp]\centering
\caption{\textbf{InfiniteBench at 100K+ context} (GLM-4-9B-Chat-1M~\citep{glm4},
\texttt{score}, seeded subset of $50$ records per task). Retrieval (Retr: passkey,
number-string, KV-retrieval), long-document QA (LongQA: book QA/choice/sum,
dialogue), and the $11$-task average (Avg). FullKV and the exact-LSE oracle
(\S\ref{sec:setup}) are references, above the rule; \method{} has the
highest measured mean among deployable selectors at both budgets (\textbf{bold}).
Bold marks the leading point estimate, not a resolved separation: the
intervals of the leading arms overlap at both budgets, and this
benchmark is additionally weak as a discriminator here, since several
tasks saturate for nearly every arm. Scores are by
capability group; the full $11$-task per-task grid is
Table~\ref{tab:appC-infinitebench}, App.~\ref{app:extres}. Each cell is the
mean with its $95\%$ bootstrap CI half-width ($\pm$); we bootstrap records
within each task and recompute the task-averaged capability score.}
\label{tab:infinitebench-main}
\providecommand{\cipm}[2]{#1\,{\scriptsize$\pm$#2}}
\setlength{\tabcolsep}{4pt}
\begin{tabular}{l rrr rrr}
\toprule
& \multicolumn{3}{c}{budget $b{=}512$} & \multicolumn{3}{c}{budget $b{=}2048$}\\
\cmidrule(lr){2-4}\cmidrule(lr){5-7}
Method & Retr & LongQA & Avg & Retr & LongQA & Avg \\
\midrule
FullKV             & \cipm{76.0}{4.1} & \cipm{35.3}{3.4} & \cipm{43.0}{2.1} & \cipm{76.0}{4.1} & \cipm{35.3}{3.4} & \cipm{43.0}{2.1} \\
Oracle (exact-LSE) & \cipm{74.7}{3.9} & \cipm{35.1}{3.7} & \cipm{42.3}{3.0} & \cipm{76.7}{4.2} & \cipm{36.2}{3.7} & \cipm{43.9}{2.8} \\
\midrule
\textbf{\method{}} & \cipm{\textbf{71.3}}{3.2} & \cipm{\textbf{33.6}}{3.7} & \cipm{\textbf{41.1}}{2.4} & \cipm{\textbf{76.0}}{4.1} & \cipm{\textbf{36.6}}{3.8} & \cipm{\textbf{43.6}}{2.6} \\
RocketKV           & \cipm{64.0}{5.2} & \cipm{33.6}{4.1} & \cipm{38.5}{2.7} & \cipm{75.3}{4.8} & \cipm{35.0}{4.2} & \cipm{42.3}{2.4} \\
ShadowKV           & \cipm{66.0}{1.3} & \cipm{29.1}{3.4} & \cipm{36.0}{2.1} & \cipm{71.3}{3.2} & \cipm{33.1}{3.8} & \cipm{40.3}{2.3} \\
Quest              & \cipm{62.7}{3.0} & \cipm{26.7}{3.5} & \cipm{34.7}{2.7} & \cipm{58.0}{4.0} & \cipm{31.0}{3.8} & \cipm{35.7}{2.6} \\
\bottomrule
\end{tabular}
\end{table}

\subsection{Long-form reasoning}
\label{sec:eval-reason}
Reasoning decodes thousands of tokens, so selection runs over a cache the
model itself is writing, and the working set is roughly fixed in size
rather than a fraction of the context. Quality holds only once the budget
covers it, so the sweep reveals a per-task \emph{budget floor}
(Fig.~\ref{fig:q-reason}). The read-every-key oracle marks that floor
for value-blind selection, and \method{}'s gap to the oracle isolates
estimation fidelity from it, while four state-of-the-art baselines
(Quest, R-KV, TriAttention, and LazyEviction) trail across the sweep.

The comparison spans two method classes, and that carries much of the
margin. R-KV, TriAttention and LazyEviction \emph{evict}, so a dropped
token is gone for the rest of a generation running to thousands of
steps, whereas \method{} keeps the cache resident and re-selects every
step (App.~\ref{app:extres-reasoning-baselines}); only Quest shares that
architecture, and the margin against it is still large. Rank is not what
produces it: r4i4 and r2i4 match or exceed r8i4 at several budgets on
both suites (Tables~\ref{tab:appC-aime26}, \ref{tab:appC-math500}), so
what these suites discriminate is reconstructed-LSE scoring over a
resident cache at any rank we tested, not rank-$8$ fidelity. The rank
dose \S\ref{sec:law} predicts appears on retrieval at tight budgets
(RULER-32K, $b{=}64$: $73.1$, $68.5$, $66.2$ at r8i4, r4i4, r2i4) and
there only. At $30$ problems and avg@$4$, AIME26 cannot resolve the
differences separating the top arms, and we draw no ordering from it.

\begin{figure}[t]\centering
\includegraphics[width=\linewidth]{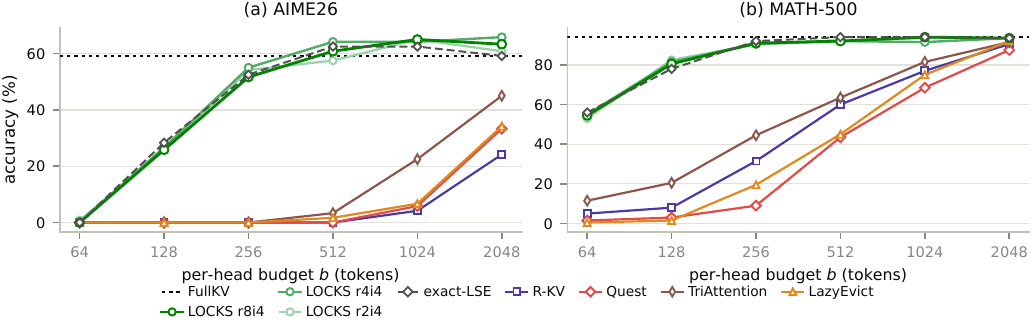}
\caption{\textbf{Reasoning vs.\ budget} (Qwen3-4B~\citep{qwen3}, thinking on; per-arm sampling settings in App.~\ref{app:extres}):
\textbf{(a)}~AIME26,
\textbf{(b)}~MATH-500~\citep{hendrycks2021math,lightman2024verify} ($16384$-token cap).}
\label{fig:q-reason}
\end{figure}

\subsection{Efficiency}
\label{sec:eval-eff}
\method{} keeps the cache resident, so its footprint is the full cache
plus the summary, traded for less per-step read traffic; since selection
reads only the summaries, offloading the rest is a natural extension we
leave to future work. The budget accounting of App.~\ref{app:extres-config} applies to
\method{} as it does to the baselines, and its dominant term is ours:
every step scans the whole summary, so in a context of $N$ the per-step
read is $\alpha N+b$, with $\alpha$ the summary's share of the cache,
linear in context rather than in the budget. The
reduction against dense is therefore capped near $1/\alpha$, which is
$10\times$ at $r{=}8$, $17\times$ at $r{=}4$ and $25\times$ at $r{=}2$,
and at $1$M with $b{=}2048$ the scan is $95$--$98\%$ of what \method{}
reads (App.~\ref{app:extres-config}). The
small-budget result of \S\ref{sec:eval-acc} and \S\ref{sec:eval-reason}
is thus about estimator fidelity, not a proportional traffic saving, and a
rival with a smaller resident summary has a higher ceiling here;
we measure decode speed against dense engines only.

Every latency and throughput number is measured with the r8i4
summary, the largest of the three and so the conservative choice on this
axis, in vLLM~0.24 on an H200 NVL node where FA3 is native
(App.~\ref{app:eff-protocol},~\ref{app:repro}); kernels and CUDA-graph
capture ship with the code. The one exception is
Fig.~\ref{fig:eff-glm-dense}(c), an analytic model from the cache
geometry rather than a counter reading, labelled as such. At a
quality-neutral budget \method{} reads far fewer KV bytes per decode step
than a dense engine and turns that into lower per-token latency, adding
only the one-time summary build to time to first token; both gaps widen
with context (Fig.~\ref{fig:eff-glm-dense}). Batching amplifies the gap,
since the dense engine's per-row cache read comes to dominate while
\method{}'s selected read does not: the speedup reaches $1.80\times$ at
$256$K (Table~\ref{tab:bs-batched}, App.~\ref{app:eff-protocol}).

\begin{figure}[tbp]\centering
\includegraphics[width=\linewidth]{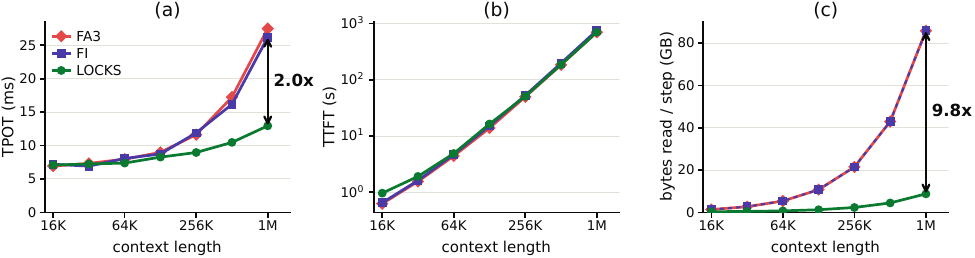}
\caption{\textbf{Decode efficiency at long and extreme context}
(GLM-4-9B-Chat-1M~\citep{glm4}, H200 NVL node; full measurement protocol in
App.~\ref{app:eff-protocol}, software and hardware in App.~\ref{app:repro}).
Measured with the r8i4 summary at the quality-neutral budget $b{=}2048$
(\S\ref{sec:eval-acc}, \S\ref{sec:eval-reason}).
\textbf{(a)}~Per-token latency versus context,
$2.0{\times}$ below the dense baselines (FA3, FlashInfer) at $1$M.
\textbf{(b)}~Time to first token; steady prefill is at parity, so
\method{}'s curve, drawn build-inclusive, sits above dense by the
one-time summary build, which amortizes with context. The build is
right-skewed, and this panel draws its \emph{median}; the gap read off
here is accordingly below the mean build reported in
Table~\ref{tab:appC-eff-full} at every rung
(App.~\ref{app:eff-protocol}).
\textbf{(c)}~Bytes read per decode step, \emph{derived analytically} from
the cache geometry rather than measured (panels (a) and (b) are measured);
for \method{} this is the summary
scan plus selected-page KV, drawn for r8i4 and $9.8{\times}$ below dense
at $1$M; the same accounting gives $16{\times}$ at r4i4 and $24{\times}$
at r2i4 (App.~\ref{app:extres-config}). Because the
whole summary is scanned every step, the ratio is capped near
$1/\alpha$ at any budget, so panel (c) sets the ceiling the latency
gap of panel (a) works against rather than being read off it.}
\label{fig:eff-glm-dense}
\end{figure}

\subsection{KV-cache quantization under extreme sparsity}
\label{sec:eval-kivi}
Selection reads a few pages per step; KV-cache quantization shrinks
every byte of them, and the composition is not obviously safe either
way: attending to a few quantized pages removes dense attention's
error averaging, and building the selection signal \emph{from} quantized
keys stacks a quantizer upstream of the int4 summaries. We measure both
with KIVI~\citep{liu2024kivi}, quantizing pages as they
finalize: \textsc{post} builds summaries from bf16 keys and attends
quantized KV, \textsc{pre} builds them from the quantized keys themselves,
against budget- and rank-matched bf16 controls
(Table~\ref{tab:kivi-main}; full $6{\times}3{\times}2{\times}2$ grid in
Table~\ref{tab:appC-kivi}, App.~\ref{app:extres-kivi}). The grid measures
quality only, not packed low-bit footprint or runtime. Within it, 4-bit
stays quality-compatible with rank-$8$ \method{} at every budget tested
in both stages, so the int4 summary is robust to the stacked quantizer.
The 2-bit tax is real and compounds with sparsity; the \textsc{post} arm
isolates it as an attend-side effect, since its selection is
byte-identical to the control's, and the residual \textsc{pre} penalty
widens as rank drops at the binding mid budget. So 2-bit should not be
stacked with tight budgets, and if it is, the summary should be built
from higher-precision keys.
\begin{table}[htbp]\centering
\caption{\textbf{KV-cache quantization under extreme sparsity.} RULER-32K
average ($13$ tasks, Llama-3.1-8B) at three budgets; quality only. KIVI
at $4$ and $2$ bits, applied after selection (\textsc{post}) or before it
(\textsc{pre}), against the bf16 \method{} control at matched budget and
rank. The $2$-bit \textsc{pre}\,--\,\textsc{post} gap widens as rank drops
($-1.2$, $-1.4$, $-1.7$ at $b{=}256$). Full grid: Table~\ref{tab:appC-kivi}.}
\label{tab:kivi-main}
\scriptsize
\setlength{\tabcolsep}{2.6pt}
\begin{minipage}[t]{0.32\linewidth}\centering
$r{=}8$\\[2.5pt]
\begin{tabular}{r r rr rr}
\toprule
& & \multicolumn{2}{c}{4-bit} & \multicolumn{2}{c}{2-bit}\\
\cmidrule(lr){3-4}\cmidrule(lr){5-6}
$b$ & bf16 & \textsc{post} & \textsc{pre} & \textsc{post} & \textsc{pre}\\
\midrule
$64$   & $73.6$ & $72.8$ & $73.0$ & $68.0$ & $67.6$\\
$256$  & $85.6$ & $85.2$ & $85.2$ & $81.9$ & $80.7$\\
$2048$ & $88.0$ & $87.6$ & $87.4$ & $85.0$ & $85.3$\\
\bottomrule
\end{tabular}
\end{minipage}\hfill
\begin{minipage}[t]{0.32\linewidth}\centering
$r{=}4$\\[2.5pt]
\begin{tabular}{r r rr rr}
\toprule
& & \multicolumn{2}{c}{4-bit} & \multicolumn{2}{c}{2-bit}\\
\cmidrule(lr){3-4}\cmidrule(lr){5-6}
$b$ & bf16 & \textsc{post} & \textsc{pre} & \textsc{post} & \textsc{pre}\\
\midrule
$64$   & $68.0$ & $68.6$ & $68.8$ & $64.9$ & $63.7$\\
$256$  & $84.7$ & $83.8$ & $84.4$ & $80.4$ & $79.0$\\
$2048$ & $87.7$ & $87.2$ & $87.6$ & $84.5$ & $84.9$\\
\bottomrule
\end{tabular}
\end{minipage}\hfill
\begin{minipage}[t]{0.32\linewidth}\centering
$r{=}2$\\[2.5pt]
\begin{tabular}{r r rr rr}
\toprule
& & \multicolumn{2}{c}{4-bit} & \multicolumn{2}{c}{2-bit}\\
\cmidrule(lr){3-4}\cmidrule(lr){5-6}
$b$ & bf16 & \textsc{post} & \textsc{pre} & \textsc{post} & \textsc{pre}\\
\midrule
$64$   & $66.3$ & $65.4$ & $65.9$ & $60.0$ & $60.5$\\
$256$  & $82.7$ & $82.7$ & $82.2$ & $78.2$ & $76.5$\\
$2048$ & $87.6$ & $87.5$ & $87.8$ & $84.8$ & $84.3$\\
\bottomrule
\end{tabular}
\end{minipage}
\end{table}

\subsection{Ablations}
\label{sec:eval-abl}
\S\ref{sec:law} established that page-local structure, at matched
state, is what tracks the exact-LSE ranking; here we ablate \method{}'s
own design choices. Rank and basis precision trade fidelity for state
independently (Fig.~\ref{fig:quant-ablation}, App.~\ref{app:microscope}):
an int8 basis closely tracks the bf16 basis, while an
int4 basis costs a few recall points and stops improving past moderate
rank, so added rank cannot buy back precision loss; quality is reported
at ranks $2$, $4$, and $8$ throughout. Larger pages need more rank at the same attended tokens,
consistent with \S\ref{sec:law} (Table~\ref{tab:appD-pagesize},
App.~\ref{app:extstudies-pagesize}). The share-average combine of
\S\ref{sec:method} retains more of \emph{every} measured statistic
(mean, worst head, p5) than group-max or group-mass, trailing a per-head
oracle that costs up to $G{\times}$ the selected-page traffic by a small
margin (Table~\ref{tab:appD-combine}, App.~\ref{app:extstudies-heads}).

\section{Related Work}
\label{sec:related}
KV-cache efficiency methods span orthogonal axes
\citep{nawrot2025sparsefrontier}, grouped by what is removed, what a
selector stores, and what is proved or trained. \emph{Eviction} discards entries
before the query that would use them arrives: attention-frequency
heuristics \citep{zhang2023h2o,li2024snapkv}, sink-and-window structure
\citep{xiao2024streamingllm}, adaptive head budgets \citep{feng2024adakv,yang2026hardkv},
reconstruction-scored eviction \citep{kim2025kvzip}, synthesis into
compact surrogates \citep{zweiger2026fastcompaction}, and full caches
for retrieval heads only \citep{duoattention2024}.

\paragraph{Query-aware selection and what a selector stores.}
These keep the full cache and attend a per-query subset, with
static patterns fixing it by position
\citep{child2019sparse,beltagy2020longformer,zaheer2020bigbird}.
They differ in \emph{what they store per page}: rank-$0$
statistics, extremes, centroids, or representative keys
\citep{tang2024quest,multipole2025,clusterkv2024,sun2025shadowkv,xiao2024infllm}; a
basis shared across pages, fit offline or per sequence
\citep{singhania2024loki,sals2025,lee2024infinigen,lrqk2025}; codebooks and quantized key sketches
\citep{pqcache2024,selfindexkv2026}; block moments
\citep{cobs2026,spla2026} (the latter trained); or scores and votes at token or head
granularity \citep{tokenselect2024}. \method{} occupies the remaining
cell: the page's own truncated spectral factorization, scored by
reconstructed log-sum-exp. Related lines train
chunk-retrieval surrogates \citep{mohtashami2023landmark,hils2026}, target
a coverage level \citep{lin2025twilight,tactic2025}, reuse one layer's
ranking \citep{yang2025tidaldecode,kascade2025}, or cheapen scoring
itself~\citep{ribar2024sparq,chen2025magicpig,liu2025retrievalattention,jiang2024minference}.

\paragraph{Theory, quantization, and architectural compression.} An
exact error identity for top-$k$ truncation
\citep{tzachristas2025topk} underlies this line;
Expected Attention \citep{expectedattention2025} estimates
per-token mass in closed form, value-aware criteria weigh importance
by value norm \citep{vatp2024}, vAttention \citep{desai2026vattention}
unifies top-$k$ with sampling under statistical guarantees, and BLASST
\citep{blasst2026} replaces the fixed budget with a calibrated
threshold. Orthogonal to \emph{which} pages are read,
quantization reduces bits per entry
\citep{liu2024kivi,hooper2024kvquant,svdq2025,zipcache2024,mikv2024},
with runtime-certified bounds \citep{calver2026quantcert};
grouped- and multi-query attention cache fewer projections
\citep{shazeer2019mqa,ainslie2023gqa}; low-rank and graded-precision storage compress
each entry \citep{palu2024,eigenattention2024,ojakv2025,kqsvd2025};
and guarantee-flavored kernels and two-stage selectors
\citep{gvr2026bitexact,behnam2025rocketkv} target the
exact-top-$k$ ceiling a resident summary approaches more cheaply.

\paragraph{Trained sparsity and hybrids.} The frontier bakes efficiency
into pretraining via trained sparsity, block gating, or
linear/recurrent attention
\citep{yuan2025nsa,moba2025,deepseek2026v4,zai2026glm52,qwen2026qwen35};
SeerAttention \citep{gao2024seerattention} distills
a sparse gate onto a pretrained model. A
reasoning-specific line manages the budget under long traces:
redundancy-scored eviction \citep{cai2025rkv}, hierarchical
allocation \citep{reasonalloc2026}, intermediate-quantity
importance \citep{longflow2026}, value-aware
stochastic eviction \citep{vase2026}, and region-wise mass quotas
\citep{amskv2026}.

\paragraph{Concurrent work.} The exact truncation identity appears
independently in \citep{tzachristas2025topk} and \citep{cobs2026}, the
latter also deriving the value-agnostic mass criterion; SPLA
\citep{spla2026} derives a second-order block-selection metric inside a
trained sparse-plus-linear architecture, and Prism \citep{prism2026}
scores blocks spectrally from pooled representations with a
frequency-domain correction, a pooled statistic rather than a per-page
factorization. What distinguishes \method{} here is per-page
token-level logit reconstruction and the scope evidence of
\S\ref{sec:law}; against the moment core the separation is measured
(Table~\ref{tab:appB-selection}).

\section{Conclusion}
\label{sec:conclusion}
Attention keys are locally low-rank though globally high-rank, so a
page's own rank-$8$ eigenbasis is a high-fidelity surrogate for its
attention mass. Across the retrieval, QA, and reasoning regimes
measured here, every link of \S\ref{sec:intro}'s chain held, and
\method{} ships as a training-free plugin for vLLM.

\paragraph{What this does not establish.} Four limits, argued at their
claim sites. The matched-payload control varies basis and centroid scope
together, so it does not isolate the subspace (\S\ref{sec:law});
Prop.~\ref{prop:blind} predicts no effect size and
Thm.~\ref{thm:retain} covers the unquantized summary
(\S\ref{sec:method}); the summary is scanned every step, so the read
reduction is capped at $10$--$25\times$ by rank and residency grows by
the same $4$--$10\%$ (\S\ref{sec:eval-eff}); and the value-blind target
fails as a class on aggregation (\S\ref{sec:eval-acc}).

\paragraph{Reproducibility statement.} Proofs are in
App.~\ref{app:proofs}, protocol and budget accounting in
App.~\ref{app:extres}; code, kernels, container stacks, and measurement
harnesses are released (App.~\ref{app:repro}).

\bibliographystyle{plainnat}
\bibliography{references}

\clearpage\appendix
\thispagestyle{plain}
\begin{center}
{\LARGE\bfseries Appendix}\\[2pt]
{\large\normalfont Table of Contents}
\end{center}
\vspace{0.4em}
\noindent\rule{\linewidth}{0.8pt}
\vspace{1.4em}

\newcommand{\tocentry}[3]{\noindent{\textbf{#1}\hspace{0.9em}\hyperref[#2]{#3}~\dotfill~\pageref{#2}}}
\newcommand{\subtocentry}[3]{\noindent{\itshape\hspace{2.0em}#1\hspace{0.8em}\hyperref[#2]{#3}~\dotfill~\pageref{#2}}}
\newcommand{\subsubtocentry}[3]{\noindent{\small\hspace{3.7em}#1\hspace{0.7em}\hyperref[#2]{#3}~\dotfill~\pageref{#2}}}

\tocentry{A}{app:proofs}{Proofs}\\[5pt]
\subtocentry{A.1}{appauto:A.1}{Coverage and mass-ranking (Lemma~\ref*{lem:mass})}\\[3pt]
\subtocentry{A.2}{appauto:A.2}{Content-blindness and the local spectral basis (Prop.~\ref*{prop:blind}, Lemma~\ref*{lem:env}, Cor.~\ref*{cor:ey})}\\[3pt]
\subtocentry{A.3}{appauto:A.3}{The composed retention guarantee (Thm.~\ref*{thm:retain})}\\[3pt]
\subtocentry{A.4}{appauto:A.4}{Group-shared selection (Cor.~\ref*{cor:comb})}\\[3pt]
\subtocentry{A.5}{appauto:A.5}{The moment core and score-error characterization (Prop.~\ref*{prop:quad}, Lemma~\ref*{lem:outlier})}\\[10pt]
\tocentry{B}{app:microscope}{The Microscope: Selection Fidelity and Estimator Error}\\[10pt]
\tocentry{C}{app:extres}{Detailed Results}\\[5pt]
\subtocentry{C.1}{appauto:C.1}{Per-method budget accounting}\\[3pt]
\subtocentry{C.2}{appauto:C.2}{Reasoning baselines (MATH-500, AIME26)}\\[3pt]
\subtocentry{C.3}{appauto:C.3}{Retrieval and long-document QA}\\[2pt]
\subsubtocentry{C.3.1}{appauto:C.3.1}{LongBench-v1}\\[2pt]
\subsubtocentry{C.3.2}{appauto:C.3.2}{RULER-16K}\\[2pt]
\subsubtocentry{C.3.3}{appauto:C.3.3}{RULER-32K}\\[3pt]
\subtocentry{C.4}{appauto:C.4}{Long-form reasoning}\\[2pt]
\subsubtocentry{C.4.1}{appauto:C.4.1}{MATH-500}\\[2pt]
\subsubtocentry{C.4.2}{appauto:C.4.2}{AIME26}\\[3pt]
\subtocentry{C.5}{appauto:C.5}{Long context: InfiniteBench}\\[3pt]
\subtocentry{C.6}{appauto:C.6}{KV-cache quantization (KIVI): full grid}\\[3pt]
\subtocentry{C.7}{appauto:C.7}{Decode efficiency: measurement protocol and full numbers}\\[10pt]
\tocentry{D}{app:extstudies}{Extended Studies}\\[5pt]
\subtocentry{D.1}{appauto:D.1}{GQA}\\[3pt]
\subtocentry{D.2}{appauto:D.2}{Rotary Position Embedding}\\[3pt]
\subtocentry{D.3}{appauto:D.3}{Page size}\\[3pt]
\subtocentry{D.4}{appauto:D.4}{Mixture-of-experts: ERNIE-4.5-21B-A3B-Thinking}\\[3pt]
\subtocentry{D.5}{appauto:D.5}{Large dense: DeepSeek-R1-Distill-Qwen-32B}\\[10pt]
\tocentry{E}{app:kernel}{Decode Kernel: Critical Path and Overlap Schedule}\\[10pt]
\tocentry{F}{app:repro}{Reproducibility}

\vspace{1.4em}
\noindent\rule{\linewidth}{0.4pt}

\makeatletter
\newif\ifapp@firstsub
\let\app@stdsection\section
\renewcommand{\section}[1]{\clearpage\app@stdsection{#1}%
  \setcounter{table}{0}\setcounter{figure}{0}\global\app@firstsubtrue}
\let\app@stdsubsection\subsection
\renewcommand{\subsection}[1]{%
  \ifapp@firstsub\global\app@firstsubfalse\else\clearpage\fi
  \app@stdsubsection{#1}%
  \protected@edef\app@lbl{appauto:\thesubsection}%
  \expandafter\label\expandafter{\app@lbl}}
\let\app@stdsubsubsection\subsubsection
\renewcommand{\subsubsection}[1]{%
  \app@stdsubsubsection{#1}%
  \protected@edef\app@lbl{appauto:\thesubsubsection}%
  \expandafter\label\expandafter{\app@lbl}}
\makeatother

\clearpage
\section{Proofs}
\label{app:proofs}
\FloatBarrier
This appendix collects the proofs of the results stated in the main text,
each placed adjacent to the construction it supports and given in order of
appearance.

\subsection{Coverage and mass-ranking (Lemma~\ref{lem:mass})}
\label{app:proofs-coverage}
\S\ref{sec:setup} introduces this result in prose. The identity in (i)
is due to \citet{tzachristas2025topk} and independently
\citet{cobs2026}, whose proofs are included for self-containedness;
\citet{cobs2026} obtains mass-ranking by assuming value-agnosticism.
The minimax statement in (ii), over the class of
Def.~\ref{def:valueblind} and with the adversarial value assignment
that attains its bound, is our formalization of the same content, and
we present it as such rather than as a separate result: once every
candidate set incurs the identical worst-case value factor, optimality
of mass ranking is immediate. The remark that follows, that a
randomized value-blind rule strictly beats the deterministic bound on a
three-page instance, is the part of this subsection that is not
implicit in prior work. The formal statement, proved in the
remainder of this subsection, is:
\begin{lemma}[Coverage, the exact deficit, and mass-optimal selection]
\label{lem:mass}
Let $R=\max_i\|\mathbf v_i\|$ bound the cached value norms. (i)~For any
nonempty page set $S$ and query $\mathbf q$,
$\|\mathbf o_S(\mathbf q)-\mathbf o(\mathbf q)\|\le 2R\,(1-\gamma_S(\mathbf q))$,
and, with $\varepsilon=1-\gamma_S(\mathbf q)$ and
$\bar{\mathbf v}_S,\bar{\mathbf v}_{S^c}$ the mass-weighted mean values of
the kept and dropped sets, \emph{exactly}
$\mathbf o(\mathbf q)-\mathbf o_S(\mathbf q)=\varepsilon\,(\bar{\mathbf v}_{S^c}-\bar{\mathbf v}_S)$.
(ii)~Fix a budget of $k$ pages: the top-$k$ pages by exact log-mass
$s_j(\mathbf q)=\log\sum_{i\in P_j}e^{\mathbf q^{\top}\mathbf k_i/\sqrt d}$
(ties broken arbitrarily)
maximize $\gamma_S(\mathbf q)$, hence minimize the bound in~(i); among
deterministic selection rules that do not read the values
$\{\mathbf v_i\}$, this rule is minimax-optimal for the worst-case
error (proof and the formal selector class below).
\end{lemma}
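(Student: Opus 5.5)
The plan is to establish the exact identity in (i) first and read the norm bound off it, then handle (ii) in two stages: mass-optimality of top-$k$, followed by the minimax claim over value-blind deterministic rules.

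\textbf{Step 1: the identity in (i).} Write $Z=\sum_i e^{\mathbf q^\top\mathbf k_i/\sqrt d}$ and split it as $Z=Z_S+Z_{S^c}$ by page membership, so that $\gamma_S=Z_S/Z$. Introduce the mass-weighted means $\bar{\mathbf v}_S=Z_S^{-1}\sum_{i\in S}e^{(\cdot)}\mathbf v_i$ and $\bar{\mathbf v}_{S^c}$ defined analogously. Then $\mathbf o=\gamma_S\bar{\mathbf v}_S+(1-\gamma_S)\bar{\mathbf v}_{S^c}$, while $\mathbf o_S=\bar{\mathbf v}_S$. Subtracting gives $\mathbf o-\mathbf o_S=\varepsilon(\bar{\mathbf v}_{S^c}-\bar{\mathbf v}_S)$. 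If $S^c$ carries no mass, define $\bar{\mathbf v}_{S^c}$ arbitrarily, since it is multiplied by $\varepsilon=0$. Both means are convex combinations of values, so each has norm at most $R$. The triangle inequality then yields the bound $2R\varepsilon$.

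\textbf{Step 2: top-$k$ maximizes coverage.} Page masses are additive: $\gamma_S=\sum_{j\in S}e^{s_j}/Z$, and $Z$ does not depend on $S$. Maximizing $\gamma_S$ over $|S|\le k$ is therefore maximizing a sum of nonnegative weights, which top-$k$ by $s_j$ achieves by a one-line exchange argument. Ties do not change the sum. If always-kept pages are present, the same argument applies to the remaining slots.

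\textbf{Step 3: minimax among value-blind deterministic rules.} This is the main obstacle, because the selector class has to be pinned down. I would define a value-blind deterministic rule as a map from $(\mathbf q,\{\mathbf k_i\})$ to a page set of size $k$, with no dependence on $\{\mathbf v_i\}$. The argument has two parts.

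\begin{itemize}
\item \emph{Upper bound.} For any $S$, the worst case of $\|\mathbf o-\mathbf o_S\|$ over values with $\|\mathbf v_i\|\le R$ is at most $2R(1-\gamma_S)$, by Step 1.
\item \emph{Attainment.} Fix a unit vector $\mathbf e$, set every value in $S$ to $-R\mathbf e$, and every value in $S^c$ to $+R\mathbf e$. Then $\bar{\mathbf v}_{S^c}-\bar{\mathbf v}_S=2R\mathbf e$, so the error equals exactly $2R(1-\gamma_S)$ whenever $\varepsilon>0$.
\end{itemize}

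Because the rule's output $S$ is fixed before the values are chosen, the adversary may choose them after seeing $S$. The worst-case error of any value-blind deterministic rule on keys $(\mathbf q,\{\mathbf k_i\})$ is therefore exactly $2R(1-\gamma_{S})$. That quantity is minimized by maximizing $\gamma_S$, which Step 2 settles. The delicate point is stating clearly that the worst case is taken over values, pointwise in the keys, and that determinism is what lets the adversary respond to $S$.

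As a sanity check on this reading, a randomized rule defeats this adversary: the adversary cannot align the values against every set in the rule's support at once. This is consistent with the remark the paper makes afterwards.
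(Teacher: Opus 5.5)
Your proposal is correct and follows the paper's proof essentially step for step. It uses the same decomposition $\mathbf o=\gamma_S\bar{\mathbf v}_S+(1-\gamma_S)\bar{\mathbf v}_{S^c}$ for the exact identity and the $2R\varepsilon$ bound, additivity of page-mass shares for top-$k$ optimality, and the same adversarial assignment ($-R\mathbf u$ on $S$, $+R\mathbf u$ on $S^c$) to show that a deterministic value-blind rule's worst case at its chosen set is exactly $2R\varepsilon_S$.
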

Throughout, for a page set $S$ a sum over $i\in S$ ranges over the tokens
of its pages, $i\in\bigcup_{j\in S}P_j$.

\paragraph{Proof of Lemma~\ref{lem:mass}(i): the exact identity and the truncation bound.}
Let $p_i=\alpha_i$ be the softmax weights over all tokens,
$Z_S=\sum_{i\in S}p_i=\gamma_S$, and
$\bar{\mathbf v}_S=\tfrac1{Z_S}\sum_{i\in S}p_i\mathbf v_i$,
$\bar{\mathbf v}_{S^c}=\tfrac1{1-Z_S}\sum_{i\notin S}p_i\mathbf v_i$. The
renormalized restricted output is $\mathbf o_S=\bar{\mathbf v}_S$, while
$\mathbf o=Z_S\bar{\mathbf v}_S+(1-Z_S)\bar{\mathbf v}_{S^c}$. Subtracting,
$\mathbf o-\mathbf o_S=(1-Z_S)(\bar{\mathbf v}_{S^c}-\bar{\mathbf v}_S)
=\varepsilon(\bar{\mathbf v}_{S^c}-\bar{\mathbf v}_S)$, the exact identity.
(Softmax weights are strictly positive, so $Z_S>0$ for nonempty $S$; if
$S^c$ is empty, $\varepsilon=0$ and the identity reads
$\mathbf o=\mathbf o_S$.) The mass-weighted means are convex combinations of
the $\mathbf v_i$, so $\|\bar{\mathbf v}_{S^c}\|,\|\bar{\mathbf v}_S\|\le R$
and $\|\mathbf o-\mathbf o_S\|\le 2R\varepsilon=2R(1-\gamma_S)$. \qed

\paragraph{Formal value-blind selector class.}
\begin{definition}[Deterministic value-blind selection rule]
\label{def:valueblind}
Fix a budget $k$. A selection rule is a map from the keys and query,
$(\{\mathbf k_i\},\mathbf q)$, to a $k$-subset of pages. It is
\emph{deterministic value-blind} if its output does not depend on the
cached values $\{\mathbf v_i\}$: two problem instances with identical
keys and query but different values receive the same selected set.
\end{definition}

\paragraph{Proof of Lemma~\ref{lem:mass}(ii): mass-ranking maximizes coverage and is minimax value-blind optimal.}
Coverage $\gamma_S=\sum_{j\in S}m_j$ with page-mass share
$m_j=e^{s_j}/\sum_{j'}e^{s_{j'}}$. Maximizing an additive set function over
$k$-subsets is attained by the $k$ largest $m_j$, i.e.\ the top-$k$ by $s_j$;
this minimizes $\varepsilon=1-\gamma_S$ and the bound of part~(i). For
minimax optimality among the rules of Definition~\ref{def:valueblind}, fix
the keys and the query, so each candidate set $S$ determines its dropped
mass $\varepsilon_S$, independent of any rule's knowledge of
$\{\mathbf v_i\}$. A deterministic value-blind rule picks $S$ without
seeing $\{\mathbf v_i\}$, and the exact identity of part~(i) gives error
$\varepsilon_S\|\bar{\mathbf v}_{S^c}-\bar{\mathbf v}_S\|$. Over assignments
with $\max_i\|\mathbf v_i\|\le R$ this never exceeds $2R\varepsilon_S$, and
the assignment $\mathbf v_i=R\mathbf u$ on $S^c$, $\mathbf v_i=-R\mathbf u$ on
$S$ (any unit $\mathbf u$) attains it, so any fixed deterministic
value-blind rule's worst-case error at the set it picks is exactly
$2R\varepsilon_S$. It is minimized by minimizing $\varepsilon_S$ via the mass
top-$k$. \qed

\paragraph{Remark (the deterministic qualifier is necessary).}
The bound is worst-case over value assignments committed before any coin
toss. With three equal-mass single-token pages and budget $k=1$, every
deterministic choice has worst-case error $2R\varepsilon_S=\frac43R$.
Selecting the kept page uniformly at random instead gives, for
\emph{every} such value assignment, expected error
$\frac13\sum_i\|\mathbf v_i-\bar{\mathbf v}\|\le R$ (by Cauchy--Schwarz,
$\sum_i\|\mathbf v_i-\bar{\mathbf v}\|^2=\sum_i\|\mathbf v_i\|^2-3\|\bar{\mathbf
v}\|^2\le3R^2$, so $\sum_i\|\mathbf v_i-\bar{\mathbf v}\|\le\sqrt3\sqrt{3R^2}=3R$),
strictly below $\frac43R$. A randomized value-blind rule can therefore
strictly lower the worst case, so the deterministic qualifier of
Lemma~\ref{lem:mass} is not cosmetic.

\subsection{Content-blindness and the local spectral basis
(Prop.~\ref{prop:blind}, Lemma~\ref{lem:env}, Cor.~\ref{cor:ey})}
\label{app:proofs-blindness}
\paragraph{Envelope bounds are content-blind.}
A cheap surrogate must still distinguish pages by content, which the
extreme-based bounds decode-time selectors rank by cannot do.
\begin{lemma}[Envelope content-blindness]
\label{lem:env}
The envelope score of \citet{tang2024quest}, divided by $\sqrt d$,
upper-bounds every within-page
logit, but depends on a page only through its per-dimension key extremes:
pages with equal extremes receive identical scores for every query, and
for $d\ge2$ there are equal-extreme page pairs whose exact log-masses
separate without bound.
\end{lemma}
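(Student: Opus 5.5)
The argument has three parts. First, the envelope score upper-bounds every logit. Second, the score depends on a page only through its key extremes. Third, an explicit family of equal-extreme pages has log-masses that separate without bound.

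For the upper bound, write $m_t=\min_{i\in P}k_{i,t}$ and $M_t=\max_{i\in P}k_{i,t}$ for the per-dimension extremes of page $P$. Quest's score is $U_P(\mathbf q)=\sum_{t=1}^d\max(q_tm_t,\,q_tM_t)$. For any key $\mathbf k_i$ in the page, each coordinate satisfies $m_t\le k_{i,t}\le M_t$, so $q_tk_{i,t}\le\max(q_tm_t,q_tM_t)$. Summing over $t$ and dividing by $\sqrt d$ gives $\mathbf q^\top\mathbf k_i/\sqrt d\le U_P(\mathbf q)/\sqrt d$ for every $i\in P$, which is the first claim. The second claim is immediate from the formula: $U_P$ is a function of $(\mathbf q,\{m_t\},\{M_t\})$ alone, so two pages with the same extremes receive identical scores for every query.

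For the separation, I work in two dimensions and embed into $d\ge2$ by setting the remaining coordinates of every key to zero. Fix $c>0$ and build two pages.
\begin{itemize}
\item Page $P$ is the ``diagonal'' page: two keys $(c,c)$ and $(-c,-c)$, padded if needed with extra copies of $(0,0)$.
\item Page $P'$ is the ``anti-diagonal'' page: two keys $(c,-c)$ and $(-c,c)$, with the same padding.
\end{itemize}
Both pages have extremes $m=(-c,-c)$ and $M=(c,c)$, so they receive equal envelope scores. The same construction works for any even number of keys, or with one duplicated key when the page size is odd.

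Now take $\mathbf q=(1,1)$ (zero in the other coordinates). The diagonal page has a logit of $2c/\sqrt d$, so $s_P(\mathbf q)\ge 2c/\sqrt d$. Every logit of the anti-diagonal page is $0$, so $s_{P'}(\mathbf q)=\log|P|$. Hence $s_P-s_{P'}\ge 2c/\sqrt d-\log|P|\to\infty$ as $c\to\infty$.

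None of these steps is a real obstacle. The one point needing care is stating precisely which envelope formula counts as Quest's score, so that the upper bound and the dependence on extremes only are literally true. A secondary point is ensuring the two constructed pages have the same size, so that the padding does not itself change the extremes; padding with $(0,0)$ is safe because it lies strictly inside $[-c,c]^2$.
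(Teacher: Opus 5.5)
Your proposal is correct and follows the paper's proof essentially step for step: the same coordinatewise envelope bound $\sum_t\max(q_tm_t,q_tM_t)$, and the same diagonal versus anti-diagonal page pair, padded with zero keys and separated by a query along $\mathbf e_1+\mathbf e_2$. The only difference is cosmetic: you use the unnormalized query $(1,1)$, which gives a gap of $2c/\sqrt d-\log|P|$ instead of the paper's $c\sqrt2/\sqrt d-\log B$.
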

Pages that differ only in identifiers, exactly where selection decides the
answer, receive identical envelope scores, and no tie-break recovers the
distinction; at extreme budgets the envelope loses carrier pages, not just
average mass (Table~\ref{tab:appB-selection}). \method{} instead ranks by a
resident factorization of the mass itself (\S\ref{sec:method}).

\paragraph{Proof of Lemma~\ref{lem:env}.}
The envelope score of page $j$ for query $\mathbf q$ is
$e_j(\mathbf q)=\sum_{c=1}^{d}\max\bigl(q_c\,u_{j,c},\,q_c\,\ell_{j,c}\bigr)$,
where $u_{j,c}=\max_{i\in P_j}k_{i,c}$ and $\ell_{j,c}=\min_{i\in P_j}k_{i,c}$
are the page's per-dimension key extremes. For any $i\in P_j$, coordinatewise
$q_ck_{i,c}\le\max(q_cu_{j,c},q_c\ell_{j,c})$ since
$k_{i,c}\in[\ell_{j,c},u_{j,c}]$; summing over $c$ gives
$\mathbf q^{\top}\mathbf k_i\le e_j(\mathbf q)$, and dividing by $\sqrt d$
upper-bounds every within-page logit. By construction $e_j$ is a function of
$(\mathbf u_j,\boldsymbol\ell_j)$ alone, so two pages with equal extremes
receive identical scores for every query. For the separation, take $c>0$
and let $P'$ hold the keys $(c,c,0,\dots)$ and $(-c,-c,0,\dots)$ and $P$
the keys $(c,-c,0,\dots)$ and $(-c,c,0,\dots)$, both padded with zero
keys: the per-dimension extremes agree ($\pm c$ in the first two
coordinates, $0$ elsewhere), so the envelope scores agree for every
query, while for $\mathbf q=(\mathbf e_1+\mathbf e_2)/\sqrt2$ every key
of $P$ has zero logit and $P'$ holds a key with logit $c\sqrt2/\sqrt d$,
so $s_{P'}(\mathbf q)-s_P(\mathbf q)\ge c\sqrt2/\sqrt d-\log B
\to\infty$. \qed

\paragraph{Proof of Proposition~\ref{prop:blind}.}
Let $a_1,a_2$ be the two perturbed keys, $\mathbf k'_{a_1}=\mathbf k_{a_1}+c\,\mathbf u$,
$\mathbf k'_{a_2}=\mathbf k_{a_2}-c\,\mathbf u$, and $\mathbf k'_i=\mathbf k_i$
otherwise. The centroid is unchanged,
$\boldsymbol\mu'=\boldsymbol\mu+(c\,\mathbf u-c\,\mathbf u)/B=\boldsymbol\mu$,
so only the deviations at $a_1,a_2$ move, by $\pm c\,\mathbf u$
($\boldsymbol\delta'_{a_1}=\boldsymbol\delta_{a_1}+c\,\mathbf u$,
$\boldsymbol\delta'_{a_2}=\boldsymbol\delta_{a_2}-c\,\mathbf u$,
$\boldsymbol\delta'_i=\boldsymbol\delta_i$ otherwise). Since
$\mathbf W^{\top}\mathbf u=\mathbf 0$, every sketched key is unchanged,
$\mathbf W^{\top}\mathbf k'_i=\mathbf W^{\top}\mathbf k_i$, and so are the
centered sketches
$\mathbf W^{\top}\boldsymbol\delta'_i=\mathbf W^{\top}\boldsymbol\delta_i$;
with $\boldsymbol\mu$ itself unchanged, any rule of the stated form scores
the two pages identically. For the masses,
$s_{P'}(\mathbf q)\ge\mathbf q^{\top}\mathbf k'_{a_1}/\sqrt d
=\mathbf q^{\top}\mathbf k_{a_1}/\sqrt d+c\,\mathbf q^{\top}\mathbf u/\sqrt d
\to\infty$ when $\mathbf q^{\top}\mathbf u>0$, while $s_P(\mathbf q)$ does
not move. For queries with $\mathbf q^{\top}\mathbf u<0$, swap $a_1$ and $a_2$
(equivalently, perturb by $-\mathbf u$), so the blind family is closed under
sign. \qed

\begin{corollary}[The per-page basis is residual-optimal]
\label{cor:ey}
Fix a page $j$ and rank $r<d$. Among summaries
$(\boldsymbol\mu_j,\mathbf W,\{\mathbf W^{\top}\boldsymbol\delta_i\})$
over orthonormal $\mathbf W\in\R^{d\times r}$, the spectral basis
$\mathbf V_j$ minimizes the residual energy
$\tau_{j,r}(\mathbf W)=\sum_{i\in P_j}\lVert\boldsymbol\delta_i
-\mathbf W\mathbf W^{\top}\boldsymbol\delta_i\rVert^2$. When
$\sigma_{j,r}>\sigma_{j,r+1}$ the minimizing \emph{subspace}
$\operatorname{span}(\mathbf W)=\operatorname{span}(\mathbf V_j)$ is
unique; since $\tau_{j,r}(\mathbf W)$ depends on $\mathbf W$ only through the
projector $\mathbf W\mathbf W^{\top}$, every orthonormal basis of that
subspace attains the same minimum residual and the same
worst-case-over-queries error bound (the Frobenius-tail bound of
Prop.~\ref{prop:err}); uniqueness is of the subspace, not
of any particular basis for it.
\end{corollary}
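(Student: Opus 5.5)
The plan is to recognize this as the Eckart--Young--Mirsky theorem applied to the centered deviation matrix $\mathbf D_j\in\R^{B\times d}$, whose rows are the $\boldsymbol\delta_i^{\top}$. First, rewrite the residual in matrix form. Since $\mathbf P=\mathbf W\mathbf W^{\top}$ is an orthogonal projector,
\[
\tau_{j,r}(\mathbf W)=\lVert\mathbf D_j-\mathbf D_j\mathbf W\mathbf W^{\top}\rVert_F^2=\lVert\mathbf D_j\rVert_F^2-\operatorname{tr}\bigl(\mathbf W^{\top}\mathbf D_j^{\top}\mathbf D_j\mathbf W\bigr).
\]
This form makes two facts immediate. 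The residual depends on $\mathbf W$ only through $\mathbf P$, because $\operatorname{tr}(\mathbf W^{\top}\mathbf M\mathbf W)=\operatorname{tr}(\mathbf M\mathbf P)$. Minimizing the residual is therefore equivalent to maximizing $\operatorname{tr}(\mathbf M\mathbf P)$ over rank-$r$ orthogonal projectors, where $\mathbf M=\mathbf D_j^{\top}\mathbf D_j$ has eigenvalues $\sigma_{j,1}^2\ge\sigma_{j,2}^2\ge\cdots$ and eigenvectors $\mathbf v_{j,\ell}$. These are the right singular vectors, which the build of \S\ref{sec:lowrank} recovers from the Gram matrix $\mathbf D_j\mathbf D_j^{\top}$ via $\mathbf V_j=\mathbf D_j^{\top}\mathbf U_{j,r}\boldsymbol\Sigma_{j,r}^{-1}$.

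Second, apply Ky Fan's maximum principle: $\max_{\mathbf P}\operatorname{tr}(\mathbf M\mathbf P)=\sum_{\ell\le r}\sigma_{j,\ell}^2$. To see this, write $\operatorname{tr}(\mathbf M\mathbf P)=\sum_\ell\sigma_{j,\ell}^2\,w_\ell$ with $w_\ell=\mathbf v_{j,\ell}^{\top}\mathbf P\mathbf v_{j,\ell}$. The weights satisfy $0\le w_\ell\le1$ and $\sum_\ell w_\ell=r$, so the maximum of this linear program sits at $w_1=\cdots=w_r=1$. The choice $\mathbf W=\mathbf V_j$ attains it, giving minimum residual $\tau_{j,r}=\sum_{\ell>r}\sigma_{j,\ell}^2$. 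This matches the tail defined in \S\ref{sec:lowrank}.

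Zero modes need a short separate check. If $\operatorname{rank}(\mathbf D_j)<r$, the build drops the zero modes and keeps fewer columns. Any completion by vectors in the null space of $\mathbf M$ leaves the trace unchanged and gives residual $0$, so optimality still holds. I would state this in one sentence.

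Third, uniqueness. Suppose $\sigma_{j,r}>\sigma_{j,r+1}$ and $\mathbf P$ attains the maximum. In the linear program above, any weight moved off an index $\ell\le r$ onto an index $\ell'>r$ strictly lowers the objective by $(\sigma_{j,\ell}^2-\sigma_{j,\ell'}^2)>0$ times that weight. Hence a maximizer must have $w_\ell=1$ for all $\ell\le r$. A projector with $\mathbf v^{\top}\mathbf P\mathbf v=1$ for a unit vector $\mathbf v$ fixes $\mathbf v$, so $\operatorname{range}(\mathbf P)\supseteq\operatorname{span}(\mathbf v_{j,1},\dots,\mathbf v_{j,r})$. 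Equal dimensions then force equality of the subspaces. Any two orthonormal bases of that subspace give the same $\mathbf P$, and therefore the same residual.

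The step needing care is the last clause: that the worst-case-over-queries bound of Prop.~\ref{prop:err} is basis-invariant. I would argue that the reconstructed logits $(\mathbf W^{\top}\mathbf q)^{\top}\mathbf W^{\top}\boldsymbol\delta_i=\mathbf q^{\top}\mathbf P\boldsymbol\delta_i$ depend only on $\mathbf P$. It follows that both the score in Eq.~\eqref{eq:score} and any error bound written through the residual $\boldsymbol\delta_i-\mathbf P\boldsymbol\delta_i$ are functions of $\mathbf P$ alone. The expected obstacle is only confirming that Prop.~\ref{prop:err}, which appears later, is indeed phrased through the Frobenius tail $\tau_{j,r}$ or through $\mathbf P$. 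If it is, the clause follows immediately.
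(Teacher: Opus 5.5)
Your proof is correct, and it takes a different route from the paper's.

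\textbf{The paper's route.} It notes that $\mathbf D_j\mathbf W\mathbf W^{\top}$ is a rank-$r$ approximation of $\mathbf D_j$. It then invokes Eckart--Young for the \emph{unconstrained} rank-$r$ minimum $\sum_{\ell>r}\sigma_{j,\ell}^2$, and observes that the truncated SVD happens to be of projected form with $\mathbf W=\mathbf V_j$. Uniqueness under the gap gets only a parenthetical justification (``a strict spectral gap rules out any other rank-$r$ projector'') and is never derived.

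\textbf{Your route.} You work inside the projector class from the start. You reduce to maximizing $\operatorname{tr}(\mathbf M\mathbf P)$ with $\mathbf M=\mathbf D_j^{\top}\mathbf D_j$, $\mathbf P=\mathbf W\mathbf W^{\top}$, and prove Ky Fan's principle directly through the weights $w_\ell=\mathbf v_{j,\ell}^{\top}\mathbf P\mathbf v_{j,\ell}$. This gives an actual proof of the uniqueness clause. It also covers the rank-deficient case, which the paper's proof passes over.

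To make your ``moving weight'' step airtight, write it as the slack bound $\sum_{\ell\le r}\sigma_{j,\ell}^2-\operatorname{tr}(\mathbf M\mathbf P)\ge(\sigma_{j,r}^2-\sigma_{j,r+1}^2)\sum_{\ell\le r}(1-w_\ell)$. That bound uses $\sum_\ell w_\ell=\operatorname{tr}\mathbf P=r$, so $\ell$ must run over a full orthonormal eigenbasis of $\R^d$, null vectors of $\mathbf M$ included.

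\textbf{What the paper has that you lack.} It computes the worst case over queries explicitly: $\sup_{\lVert\mathbf q\rVert\le1}\lVert\mathbf q_{\perp}\rVert\sqrt{\tau_{j,r}(\mathbf W)}/\sqrt d=\sqrt{\tau_{j,r}(\mathbf W)}/\sqrt d$. The supremum is attained by a unit $\mathbf q\perp\operatorname{span}(\mathbf W)$, which exists because $r<d$; this is the paper's only use of that hypothesis. The worst-case bound is therefore monotone in $\tau_{j,r}(\mathbf W)$, so residual-optimality transfers to the bound itself.

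Your observation that every quantity is a function of $\mathbf P$ is enough for the ``same bound'' clause as literally stated. The obstacle you flagged resolves as you expected: the third form in Prop.~\ref{prop:err} is $\lVert\mathbf q_{\perp}\rVert\sqrt{\tau_{j,r}}/\sqrt d$ with $\mathbf q_{\perp}=(\mathbf I-\mathbf P)\mathbf q$.
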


\paragraph{Proof of Corollary~\ref{cor:ey}.}
$\tau_{j,r}(\mathbf W)=\lVert\mathbf D_j-\mathbf D_j\mathbf W\mathbf W^{\top}
\rVert_F^2$ is the error of a rank-$r$ approximation of $\mathbf D_j$ of
the projected form; the unconstrained rank-$r$ minimum is
$\sum_{\ell>r}\sigma_{j,\ell}^2$, attained by the truncated singular value
decomposition \citep{eckart1936approximation}, which is of the projected
form with $\mathbf W=\mathbf V_j$
($\mathbf D_j\mathbf V_j\mathbf V_j^{\top}
=\mathbf U_{j,r}\boldsymbol\Sigma_{j,r}\mathbf V_j^{\top}$); hence
$\mathbf V_j$ minimizes $\tau_{j,r}$. When $\sigma_{j,r}>\sigma_{j,r+1}$ the
rank-$r$ eigenspace of $\mathbf D_j^{\top}\mathbf D_j$ is the unique
minimizing subspace (a strict spectral gap rules out any other rank-$r$
projector attaining the same residual); as noted above, $\tau_{j,r}$ depends on
$\mathbf W$ only through $\mathbf W\mathbf W^{\top}$, so every orthonormal
basis of that subspace, related to $\mathbf V_j$ by an $r\times r$
orthogonal change of basis, is an equally valid minimizer, and $\mathbf
V_j$ is simply one such basis. For the bound, over $\lVert\mathbf
q\rVert\le1$,
$\sup\lVert\mathbf q_{\perp}\rVert\sqrt{\tau_{j,r}(\mathbf W)}/\sqrt d
=\sqrt{\tau_{j,r}(\mathbf W)}/\sqrt d$, attained by any unit
$\mathbf q\perp\operatorname{span}(\mathbf W)$ (nonempty since $r<d$),
and this is monotone in $\tau_{j,r}(\mathbf W)$. \qed

\subsection{The composed retention guarantee (Thm.~\ref{thm:retain})}
\label{app:proofs-retain}
This proof composes three pieces (a per-query score-error bound, a
group-share sandwich, and a ranking-robustness corollary) into the single
end-to-end guarantee stated as Theorem~\ref{thm:retain}. No new
mathematics is introduced; each piece is proved here, adjacent to where it
is used.

\paragraph{Step 1: per-query, per-page score error.}
\begin{proposition}[Score error characterization]
\label{prop:err}
Fix $\mathbf q$ and a page $j$; let $\hat s_j$ be the score of
Eq.~\eqref{eq:score} with fully unquantized summaries,
$\boldsymbol\delta^{\perp}_i=\boldsymbol\delta_i-\mathbf V_j\mathbf c_i$
the reconstruction residuals, and
$\mathbf q_{\perp}=\mathbf q-\mathbf V_j\mathbf V_j^{\top}\mathbf q$. Then
\[
\bigl|\hat s_j(\mathbf q)-s_j(\mathbf q)\bigr|
\;\le\;\max_{i\in P_j}
\frac{\bigl|\mathbf q_{\perp}^{\top}\boldsymbol\delta^{\perp}_i\bigr|}{\sqrt d}
\;\le\;\frac{\lVert\mathbf q_{\perp}\rVert\,\sigma_{j,r+1}}{\sqrt d}
\;\le\;\frac{\lVert\mathbf q_{\perp}\rVert\,\sqrt{\tau_{j,r}}}{\sqrt d},
\]
and $\hat s_j=s_j$ whenever $r\ge\operatorname{rank}(\mathbf D_j)$.
\end{proposition}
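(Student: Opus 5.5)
The plan is to compare the exact and reconstructed within-page logits token by token, and then push that per-token error through the log-sum-exp. Write the exact logit of token $i\in P_j$ as $\ell_i=(\mathbf q^{\top}\boldsymbol\mu_j+\mathbf q^{\top}\boldsymbol\delta_i)/\sqrt d$. By Eq.~\eqref{eq:score} the reconstructed logit is $\hat\ell_i=(\mathbf q^{\top}\boldsymbol\mu_j+\mathbf q^{\top}\mathbf V_j\mathbf c_i)/\sqrt d$. Because the summary is unquantized and $\mathbf c_i=\mathbf V_j^{\top}\boldsymbol\delta_i$, the centroid terms cancel and
\[
\ell_i-\hat\ell_i=\mathbf q^{\top}\boldsymbol\delta^{\perp}_i/\sqrt d .
\]
Here $\boldsymbol\delta^{\perp}_i=(\mathbf I-\mathbf V_j\mathbf V_j^{\top})\boldsymbol\delta_i$ lies in the orthogonal complement of $\operatorname{span}(\mathbf V_j)$. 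Only $\mathbf q_{\perp}$ sees it, so $\mathbf q^{\top}\boldsymbol\delta^{\perp}_i=\mathbf q_{\perp}^{\top}\boldsymbol\delta^{\perp}_i$.

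For the first inequality I will use that log-sum-exp is $1$-Lipschitz in the sup norm. If $|a_i-b_i|\le\epsilon$ for all $i$, then $e^{a_i}\le e^{\epsilon}e^{b_i}$ termwise, so $\log\sum_i e^{a_i}\le\epsilon+\log\sum_i e^{b_i}$, and symmetrically. Applied with $a=\ell$ and $b=\hat\ell$, this gives $|\hat s_j-s_j|\le\max_i|\mathbf q_{\perp}^{\top}\boldsymbol\delta^{\perp}_i|/\sqrt d$.

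For the second inequality I apply Cauchy--Schwarz, $|\mathbf q_{\perp}^{\top}\boldsymbol\delta^{\perp}_i|\le\lVert\mathbf q_{\perp}\rVert\,\lVert\boldsymbol\delta^{\perp}_i\rVert$. I then bound each residual row by the operator norm of the residual matrix. The vector $\boldsymbol\delta^{\perp}_i$ is the $i$-th row of $\mathbf D_j(\mathbf I-\mathbf V_j\mathbf V_j^{\top})$. By the SVD this matrix equals $\sum_{\ell>r}\sigma_{j,\ell}\mathbf u_{j,\ell}\mathbf v_{j,\ell}^{\top}$, whose spectral norm is $\sigma_{j,r+1}$. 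A row is $\mathbf e_i^{\top}$ times the matrix, so $\lVert\boldsymbol\delta^{\perp}_i\rVert\le\sigma_{j,r+1}$. This step needs one point of care: the zero-mode convention of \S\ref{sec:method}. If fewer than $r$ nonzero modes exist, the dropped columns carry $\sigma=0$. Then $\mathbf V_j$ spans the full row space, the residual is zero, and the bound holds trivially. I expect this to be the only mildly delicate step, namely confirming that the stored $\mathbf V_j$ really is the top-$r$ right-singular block, so that the tail operator norm is $\sigma_{j,r+1}$.

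The third inequality is immediate: $\sigma_{j,r+1}^2\le\sum_{\ell>r}\sigma_{j,\ell}^2=\tau_{j,r}$. Finally, if $r\ge\operatorname{rank}(\mathbf D_j)$, every row $\boldsymbol\delta_i$ lies in $\operatorname{span}(\mathbf V_j)$. Hence $\boldsymbol\delta^{\perp}_i=\mathbf 0$ for all $i$, the logits agree exactly, and $\hat s_j=s_j$.
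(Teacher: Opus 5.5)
Your proof is correct and follows the paper's argument step for step: the per-token logit residual, log-sum-exp being $1$-Lipschitz in the sup norm, each residual row bounded by the spectral norm $\sigma_{j,r+1}$ of $\mathbf D_j(\mathbf I-\mathbf V_j\mathbf V_j^{\top})$, and Cauchy--Schwarz. The differences are cosmetic: you prove the Lipschitz step termwise rather than through the softmax gradient, and you get the last link directly from $\sigma_{j,r+1}^2\le\tau_{j,r}$ rather than from the paper's row-versus-trace bound $\lVert\boldsymbol\delta^{\perp}_i\rVert^2\le\tau_{j,r}$.
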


\paragraph{Proof.}
For each $i\in P_j$ the reconstructed logit differs from the true one by
$\mathbf q^{\top}(\mathbf V_j\mathbf c_i-\boldsymbol\delta_i)/\sqrt d
=-\mathbf q^{\top}\boldsymbol\delta^{\perp}_i/\sqrt d
=-\mathbf q_{\perp}^{\top}\boldsymbol\delta^{\perp}_i/\sqrt d$, using
$\mathbf V_j^{\top}\boldsymbol\delta^{\perp}_i=\mathbf 0$. Log-sum-exp is
$1$-Lipschitz in the sup norm of its arguments (its gradient is a softmax,
hence a convex combination), which gives the first inequality. For the
second, $\boldsymbol\delta^{\perp\top}_i$ is the $i$-th row of
$\mathbf D_j(\mathbf I-\mathbf V_j\mathbf V_j^{\top})$, so
$\lVert\boldsymbol\delta^{\perp}_i\rVert
\le\lVert\mathbf D_j(\mathbf I-\mathbf V_j\mathbf V_j^{\top})\rVert_2
=\sigma_{j,r+1}$ by optimality of the truncated SVD in spectral norm,
and Cauchy--Schwarz completes it. The weaker third form follows from
$\lVert\boldsymbol\delta^{\perp}_i\rVert^2
\le\sum_{i'}\lVert\boldsymbol\delta^{\perp}_{i'}\rVert^2
=\mathrm{tr}\bigl((\mathbf I-\mathbf V_j\mathbf V_j^{\top})
\mathbf D_j^{\top}\mathbf D_j\bigr)
=\sum_{\ell>r}\sigma_{j,\ell}^2=\tau_{j,r}$; it is looser by up to
$\sqrt{\min(B,d)-r}$ and we retain it because $\tau_{j,r}$ is the
quantity reported in App.~\ref{app:microscope}. If
$r\ge\operatorname{rank}(\mathbf D_j)$, every residual vanishes and
$\hat s_j=s_j$. Storing the basis and coefficients in int4/int8 adds a
further per-logit perturbation that propagates through the same Lipschitz
step; the bound above is for the fully unquantized summary, and we do not
derive the magnitude of that perturbation. \qed

Maximizing over the group's queries $\{\mathbf q_g\}_{g\le G}$ and the
head's pages gives Theorem~\ref{thm:retain}(i) directly, namely
$\eta:=\max_{g,j}\lvert\hat s_j(\mathbf q_g)-s_j(\mathbf
q_g)\rvert\le\max_{g,j}\lVert(\mathbf q_g)_{\perp}\rVert\,\sigma_{j,r+1}/\sqrt
d\le\max_{g,j}\lVert(\mathbf q_g)_{\perp}\rVert\sqrt{\tau_{j,r}}/\sqrt
d$, with $\eta=0$ whenever $r\ge\operatorname{rank}(\mathbf D_j)$ for
every page $j$ the group ranks. Thm.~\ref{thm:retain}(i) states the
spectral form; the Frobenius form is the weaker one quoted where
$\tau_{j,r}$ is the reported quantity (App.~\ref{app:microscope}).

\paragraph{Step 2: the group-share sandwich.}
Write $\eta=\max_{g\le G}\max_j|\hat s_j(\mathbf q_g)-s_j(\mathbf q_g)|$ as
above. Termwise, $e^{\hat s_j(\mathbf q_g)}\in[e^{-\eta},e^{\eta}]\cdot
e^{s_j(\mathbf q_g)}$ for every page $j$ and every query $\mathbf q_g$ in
the group; normalizing (which multiplies numerator and denominator by
factors in the same range) doubles the exponent, giving
$\hat m_g(j)\in[e^{-2\eta},e^{2\eta}]\cdot m_g(j)$ for every query in the
group. Averaging a family of ratios each sandwiched in
$[e^{-2\eta},e^{2\eta}]$ preserves the sandwich on the average,
$\bar{\hat m}_j=\tfrac1G\sum_g\hat m_g(j)\in[e^{-2\eta},e^{2\eta}]\cdot
\tfrac1G\sum_g m_g(j)=[e^{-2\eta},e^{2\eta}]\cdot\bar m_j$, i.e.
\[
\bigl|\log\bar{\hat m}_j-\log\bar m_j\bigr|\le2\eta
\qquad\text{for every page }j.
\]

\paragraph{Step 3: robustness to bounded score error.}
\label{app:proofs-robust}
\begin{corollary}[Selection is robust to any bounded score error]
\label{cor:robust}
Fix $\mathbf q$ with exact page log-masses $\{s_j(\mathbf q)\}$, and let
$\{g_j\}$ be any ranking scores for which some page-independent constant
$c$ has $\max_j|g_j-s_j(\mathbf q)-c|\le\Delta$. Then any top-$k$
selection under $g$ retains at least $e^{-2\Delta}$ of the attention mass
of the top-$k$ selection under the exact log-masses, always-kept pages
included (Prop.~\ref{prop:quad} bounds $\Delta$ for the moment core, with
$c=-\log B$; Prop.~\ref{prop:err} for the spectral summary).
\end{corollary}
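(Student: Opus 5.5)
The plan is to absorb the constant $c$ into the scores, pass to normalized page shares, and compare the two selected sets by a pairwise exchange argument.

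\emph{Step 1: reduce to shares.} Put $g'_j=g_j-c$. The top-$k$ order under $g'$ is the same as under $g$, so it suffices to treat $g'$ with $\max_j|g'_j-s_j(\mathbf q)|\le\Delta$. Then $e^{g'_j}\in[e^{-\Delta},e^{\Delta}]\,e^{s_j}$ holds termwise. Write $m_j=e^{s_j}/\sum_{j'}e^{s_{j'}}$ for the exact share; coverage is additive, $\gamma_S=\sum_{j\in S}m_j$ (Lemma~\ref{lem:mass}(ii)). Let $\hat S$ be any top-$k$ set under $g'$ and $S^\star$ the exact top-$k$ set. Both contain the same always-kept pages $A$ (sink and most recent), and the remaining $k-|A|$ slots are filled by ranking the non-forced pages.

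\emph{Step 2: exchange.} The shared part $\hat S\cap S^\star$ contributes equally to both coverages. Put $X=\hat S\setminus S^\star$ and $Y=S^\star\setminus\hat S$. These have equal size, since both sets contain $A$ and have $k$ elements, and neither contains a forced page. Because $\hat S$ is a top set under $g'$, every $x\in X$ and $y\in Y$ satisfy $g'_x\ge g'_y$. This gives
\[
s_x\ge g'_x-\Delta\ge g'_y-\Delta\ge s_y-2\Delta,
\]
so $m_x\ge e^{-2\Delta}m_y$. Pair the elements of $X$ with those of $Y$ by any bijection and sum to get $\sum_X m\ge e^{-2\Delta}\sum_Y m$. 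Then
\[
\gamma_{\hat S}=\gamma_{\hat S\cap S^\star}+\sum_X m\;\ge\;e^{-2\Delta}\bigl(\gamma_{\hat S\cap S^\star}+\sum_Y m\bigr)=e^{-2\Delta}\gamma_{S^\star},
\]
where the inequality uses $e^{-2\Delta}\le1$ on the common part. Ties are harmless: the inequality $g'_x\ge g'_y$ holds for any tie-breaking, and $S^\star$ can be any exact maximizer.

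\emph{Main obstacle.} The step needing most care is the always-kept bookkeeping. The exchange only works if the forced pages sit in both sets, so that $X$ and $Y$ range over ranked slots alone. This is why the comparison has to be against the exact top-$k$ \emph{subject to the same forced pages}, which is also the meaning of ``always-kept pages included'' in the statement. If $S^\star$ meant the unconstrained exact top-$k$ instead, I would first bound it by the forced version, accepting the structural loss that Thm.~\ref{thm:retain}(iii) absorbs into $\bar\gamma^\star$. Otherwise the argument is the exchange above plus Lemma~\ref{lem:mass}(ii).
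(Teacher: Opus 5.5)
Your proof is correct, and it takes a different route from the paper's. The paper does no set decomposition. After removing $c$, it uses only that a top-$k$ set $S_g$ under $g$ maximizes the additive objective $\sum_{p\in S}e^{g_p}$, and it sandwiches both ends termwise:
\[
\sum_{p\in S_g}e^{s_p}\ \ge\ e^{-\Delta}\sum_{p\in S_g}e^{g_p}\ \ge\ e^{-\Delta}\sum_{p\in S^\star}e^{g_p}\ \ge\ e^{-2\Delta}\sum_{p\in S^\star}e^{s_p}.
\]
Here $S_g$ and $S^\star$ range over the free slots, and the always-kept mass $A$ is then added to both sides using $A\ge e^{-2\Delta}A$.

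You instead cancel the common pages and compare only the swapped ones, using the order property $g'_x\ge g'_y$ and a bijection $X\to Y$. The paper's route is a three-line chain with no bookkeeping of $X$, $Y$ or ties. Yours is slightly longer, but just before your last step it gives the sharper bound $\gamma_{\hat S}\ge\gamma_{\hat S\cap S^\star}+e^{-2\Delta}\gamma_{S^\star\setminus\hat S}$. So the factor $e^{-2\Delta}$ is paid only on pages where the two selections disagree. The paper's chain, as written, charges it on every free page of $S^\star$ and spares only the forced mass.

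Your placement of the forced pages in the common part is equivalent to the paper's adding $A$ to both sides. Your reading of ``always-kept pages included'' as a comparison against the exact top-$k$ with the same forced pages is the one the paper's proof uses.
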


\paragraph{Proof.}
Subtracting $c$ from every $g_j$ changes no top-$k$ set, so take $c=0$. Let
$S_g$ and $S^\star$ be top-$k$ sets under $g$ and under the exact
log-masses; since $S_g$ maximizes $\sum_{p\in S}e^{g_p}$ over $k$-subsets,
\[
\sum_{p\in S_g}e^{s_p(\mathbf q)}
\;\ge\; e^{-\Delta}\!\sum_{p\in S_g}e^{g_p}
\;\ge\; e^{-\Delta}\!\sum_{p\in S^\star}e^{g_p}
\;\ge\; e^{-2\Delta}\!\sum_{p\in S^\star}e^{s_p(\mathbf q)},
\]
the outer inequalities holding termwise from $|g_j-s_j|\le\Delta$.
Dividing by the partition function turns the sums into attention masses,
and adding the always-kept (sink, recent) mass $A$ to both sides preserves
the bound, since $A\ge e^{-2\Delta}A$. \qed

Applying Cor.~\ref{cor:robust} to $g_j=\log\bar{\hat m}_j$ against the
exact $s_j=\log\bar m_j$ (both already normalized page-mass shares, so
$c=0$), with $\Delta=2\eta$ from Step~2, the shared top-$(k{-}2)$ set
selected on $\bar{\hat m}_j$ retains at least $e^{-4\eta}$ of the
attention mass that the top-$(k{-}2)$ set selected on the exact
group-average share $\bar m_j$ would retain, always-kept pages included.
By Cor.~\ref{cor:comb} the latter set maximizes
$\bar\gamma(S):=\tfrac1G\sum_g\gamma_S(\mathbf q_g)$ over shared $k$-page
sets containing the sink and most recent page, attaining the exact
group-average coverage $\bar\gamma^\star$. This is
Theorem~\ref{thm:retain}(ii). The group-averaged-share top-$k$ set
retains at least $e^{-4\eta}$ of exact group-average coverage.

\paragraph{Step 4: from coverage to output error.}
Lemma~\ref{lem:mass}(i) applies to every query in the group against the
one shared selection $S$. Every query $\mathbf q_g$ reads the same KV
head, hence the same cached values and the same bound $R$, so
$\lVert\mathbf o_S(\mathbf q_g)-\mathbf o(\mathbf q_g)\rVert\le
2R(1-\gamma_S(\mathbf q_g))$ for every $g$. Averaging over the group,
\[
\frac1G\sum_{g\le G}\bigl\lVert\mathbf o_S(\mathbf q_g)-\mathbf o(\mathbf
q_g)\bigr\rVert
\;\le\;2R\Bigl(1-\frac1G\sum_{g\le G}\gamma_S(\mathbf q_g)\Bigr)
\;=\;2R\bigl(1-\bar\gamma(S)\bigr).
\]
Substituting the Step~3 bound $\bar\gamma(S)\ge e^{-4\eta}\bar\gamma^\star$
gives
\[
\frac1G\sum_{g\le G}\bigl\lVert\mathbf o_S(\mathbf q_g)-\mathbf o(\mathbf
q_g)\bigr\rVert \;\le\; 2R\bigl(1-e^{-4\eta}\,\bar\gamma^{\star}\bigr),
\]
Theorem~\ref{thm:retain}(iii). \qed

\subsection{Group-shared selection (Cor.~\ref{cor:comb})}
\label{app:proofs-combine}
The combine's normalized-share form follows TokenSelect's head soft
vote \citep{tokenselect2024}; the corollary establishes its exact-share
optimality.
\begin{corollary}[Group-shared selection]
\label{cor:comb}
Let $m_g(j)$ be query $\mathbf q_g$'s exact normalized page-mass share
and $\bar m_j=\tfrac1G\sum_g m_g(j)$ its group average. For $k\ge2$, among
shared sets of $k$ pages that always include the sink and most recent page,
the top-$(k{-}2)$ by $\bar m_j$ over the remaining pages maximizes the
equal-weight group-average coverage $\tfrac1G\sum_g\gamma_S(\mathbf
q_g)$, hence minimizes the correspondingly averaged bound of
Lemma~\ref{lem:mass}(i).
\end{corollary}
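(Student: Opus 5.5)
The plan is to write the group-average coverage as an additive set function of the shared page set and then repeat the argument used for Lemma~\ref{lem:mass}(ii). For a page set $S$, coverage under query $\mathbf q_g$ is $\gamma_S(\mathbf q_g)=\sum_{j\in S}m_g(j)$, where the $m_g(j)$ are the exact normalized shares: they are positive and sum to one over all pages. Exchanging the two finite sums gives
\[
\bar\gamma(S)=\frac1G\sum_{g\le G}\sum_{j\in S}m_g(j)=\sum_{j\in S}\bar m_j ,
\]
so $\bar\gamma$ is a modular (additive) set function with nonnegative weights $\bar m_j$.

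Next I impose the constraint. Let $A=\{\text{sink},\text{recent}\}$ and write every admissible $S$ as $S=A\cup T$, with $T$ a $(k{-}2)$-subset of the remaining pages. When the sink and the most recent page are distinct, additivity gives $\bar\gamma(S)=\sum_{j\in A}\bar m_j+\sum_{j\in T}\bar m_j$. The first term is the same for every admissible set. So maximizing $\bar\gamma$ reduces to maximizing $\sum_{j\in T}\bar m_j$ over $(k{-}2)$-subsets of the remaining pages.

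An exchange argument settles that maximization. Suppose $T$ is optimal but not a top-$(k{-}2)$ set by $\bar m$. Then some $j\in T$ and $j'\notin T$ satisfy $\bar m_{j'}>\bar m_j$, and swapping them strictly increases the sum, a contradiction. Ties are broken arbitrarily, as in Lemma~\ref{lem:mass}(ii).

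For the final clause, average the per-query bound of Lemma~\ref{lem:mass}(i) over the group. All $G$ queries read the same KV head, so the value bound $R$ is common to them, and
\[
\frac1G\sum_{g\le G}2R\bigl(1-\gamma_S(\mathbf q_g)\bigr)=2R\bigl(1-\bar\gamma(S)\bigr).
\]
This is decreasing in $\bar\gamma(S)$, so the same set minimizes the averaged bound.

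The main obstacle is bookkeeping rather than analysis. If the sink and the most recent page coincide, as in a very short context, the constraint frees one more slot, and the statement should be read with $|A|=1$ and a top-$(k{-}1)$ selection; the argument itself is unchanged. If there are fewer than $k$ pages in total, the whole cache is selected and the claim is trivial.
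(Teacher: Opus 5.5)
Your proposal is correct and follows the paper's proof. Both write the group-average coverage as the additive set function $\sum_{j\in S}\bar m_j$, note that the forced sink and most recent pages contribute a constant, take the top-$(k{-}2)$ remaining pages, and average the bound of Lemma~\ref{lem:mass}(i) using the common value bound $R$ of the shared KV head; your exchange argument and edge-case remarks spell out what the paper states in one line.
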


\paragraph{Proof.}
Let $m_g(j)=e^{s_j(\mathbf q_g)}/Z_g$ be query $\mathbf q_g$'s normalized
page-mass share, so $\gamma_S(\mathbf q_g)=\sum_{j\in S}m_g(j)$ by
definition of coverage. A shared set $S$ therefore has
$\tfrac1G\sum_g\gamma_S(\mathbf q_g)
=\tfrac1G\sum_g\sum_{j\in S}m_g(j)
=\sum_{j\in S}\bar m_j$, additive over pages, so among all $k$-subsets
the $k$ largest $\bar m_j$ maximize it. With the sink and most recent
page forced into every candidate $S$, their contribution to
$\sum_{j\in S}\bar m_j$ is the same constant across every such $S$, so
maximizing the sum over the remaining $k{-}2$ free slots is exactly the
top-$(k{-}2)$ by $\bar m_j$ among the other pages. Every
query in the group shares one KV head, hence one value-norm bound $R$,
so the same bound of Lemma~\ref{lem:mass}(i) applied to each
$\gamma_S(\mathbf q_g)$ and averaged gives
$2R\bigl(1-\tfrac1G\sum_g\gamma_S(\mathbf q_g)\bigr)
=2R\bigl(1-\sum_{j\in S}\bar m_j\bigr)$, minimized by the same set. \qed

\subsection{The moment core and score-error characterization
(Prop.~\ref{prop:quad}, Lemma~\ref{lem:outlier})}
\label{app:proofs-moment}
In the notation of \S\ref{sec:lowrank}, with
$a_j=\mathbf q^{\top}\boldsymbol\mu_j/\sqrt d$ and
$z_i=\mathbf q^{\top}\boldsymbol\delta_i/\sqrt d$, the exact mass factors
as
\begin{equation}
s_j(\mathbf q)=a_j+\log\!\sum_{i\in P_j} e^{z_i}
 =a_j+\log B+\log\,\mathbb E_i\!\big[e^{z_i}\big],
\label{eq:cgf}
\end{equation}
with $\mathbb E_i$ the average over the page's $B$ keys, so the last term
is the CGF of the centered deviations at argument~$1$ ($\log B$ is
page-independent for ranking); truncating it at second order gives
\begin{equation}
\hat s^{\,\mathrm q}_j(\mathbf q)=\frac{\mathbf q^{\top}\boldsymbol\mu_j}{\sqrt d}
 +\frac{1}{2Bd}\,\mathbf q^{\top}\mathbf S_j\,\mathbf q,
 \qquad
 \mathbf S_j=\!\sum_{i\in P_j}\!\boldsymbol\delta_i\boldsymbol\delta_i^{\top}
 =\mathbf D_j^{\top}\mathbf D_j^{\vphantom{\top}},
\end{equation}
whose rank-$r$ form (the same symbol) keeps
$\sum_{\ell\le r}\sigma_{j,\ell}^2(\mathbf v_{j,\ell}^{\top}\mathbf q)^2$. Prop.~\ref{prop:quad}
is positioned against the closest rival that also scores by a quadratic
truncation of a block's mass (COBS~\citep{cobs2026}) by making precise
where that truncation is exact (Gaussian pages) and where its
worst-case bound is weakest (the high-dynamic-range, peaky pages a
carrier lives on). Expected Attention~\citep{expectedattention2025}
takes the complementary Gaussian, over future queries rather than a
page's keys, and scores per token.%

\begin{proposition}[The moment core is the second-cumulant truncation of the mass]
\label{prop:quad}
Fix a query $\mathbf q$ and page $j$; let $\{z_i\}$ be the centered per-key
logits of Eq.~\eqref{eq:cgf} with empirical cumulants $\kappa_m$, and let
$L_j=\max_{i\in P_j}|z_i|$. Define $R^{\mathrm{cum}}_j$ and
$R^{\mathrm{rank}}_j$ by
\[
\underbrace{s_j(\mathbf q)-\log B}_{\text{exact}}
 =\Big(a_j+\tfrac12\kappa_2\Big)+R^{\mathrm{cum}}_j,
\quad
\underbrace{\tfrac12\kappa_2-\tfrac{1}{2Bd}\!\sum_{\ell\le r}\!\sigma_{j,\ell}^2(\mathbf v_{j,\ell}^{\top}\mathbf q)^2}_{\text{rank truncation}}
 =\underbrace{\tfrac{1}{2Bd}\!\sum_{\ell>r}\!\sigma_{j,\ell}^2(\mathbf v_{j,\ell}^{\top}\mathbf q)^2}_{\textstyle R^{\mathrm{rank}}_j},
\]
so that $s_j-\log B=\hat s^{\,\mathrm q}_j+R^{\mathrm{cum}}_j+R^{\mathrm{rank}}_j$
exactly: $\hat s^{\,\mathrm q}_j$ is the second-cumulant, rank-$r$
truncation of the mass. The remainders obey, unconditionally,
\[
|R^{\mathrm{cum}}_j|\le L_j^3/3,
\qquad
R^{\mathrm{rank}}_j\le
\sigma_{j,r+1}^2\,
\|\mathbf q\|^2/(2Bd).
\]
$R^{\mathrm{cum}}_j$ measures the page's departure from Gaussianity: for a
Gaussian law the CGF is exactly quadratic and the remainder vanishes.
\end{proposition}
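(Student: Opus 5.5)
The plan is to prove the two exact identities by expanding the definitions, and then to bound the two remainders separately. The rank remainder is pure linear algebra; the cumulant remainder comes from a third-order Taylor expansion of the page's cumulant generating function, with the third derivative controlled uniformly along an exponential tilt.

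\emph{Identities.} The deviations are centered, $\sum_{i\in P_j}\boldsymbol\delta_i=\mathbf 0$, so the logits $z_i=\mathbf q^{\top}\boldsymbol\delta_i/\sqrt d$ have empirical mean $\kappa_1=0$. Hence
\[
\kappa_2=\frac1B\sum_i z_i^2=\frac{\mathbf q^{\top}\mathbf S_j\mathbf q}{Bd}
=\frac{1}{Bd}\sum_{\ell}\sigma_{j,\ell}^2(\mathbf v_{j,\ell}^{\top}\mathbf q)^2,
\]
using the eigendecomposition $\mathbf S_j=\mathbf D_j^{\top}\mathbf D_j=\sum_\ell\sigma_{j,\ell}^2\mathbf v_{j,\ell}\mathbf v_{j,\ell}^{\top}$. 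Subtracting the rank-$r$ part of the sum yields the stated expression for $R^{\mathrm{rank}}_j$ directly. The relation $s_j-\log B=\hat s^{\,\mathrm q}_j+R^{\mathrm{cum}}_j+R^{\mathrm{rank}}_j$ then follows by adding the two defining equations, with Eq.~\eqref{eq:cgf} identifying $s_j-\log B-a_j$ as $\log\mathbb E_i e^{z_i}$.

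\emph{Rank remainder.} Every term of $R^{\mathrm{rank}}_j$ is nonnegative, and $\sigma_{j,\ell}^2\le\sigma_{j,r+1}^2$ for all $\ell>r$. Therefore
\[
R^{\mathrm{rank}}_j\le\frac{\sigma_{j,r+1}^2}{2Bd}\sum_{\ell>r}(\mathbf v_{j,\ell}^{\top}\mathbf q)^2\le\frac{\sigma_{j,r+1}^2\,\lVert\mathbf q\rVert^2}{2Bd},
\]
where the last step is Bessel's inequality for the orthonormal family $\{\mathbf v_{j,\ell}\}$, completed to a basis of $\R^d$ if needed.

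\emph{Cumulant remainder (the main step).} Let $f(t)=\log\mathbb E_i e^{t z_i}$, so that $R^{\mathrm{cum}}_j=f(1)-\tfrac12\kappa_2$. We have $f(0)=0$, $f'(0)=\kappa_1=0$ and $f''(0)=\kappa_2$. Taylor's theorem with integral remainder gives
\[
R^{\mathrm{cum}}_j=\int_0^1\frac{(1-t)^2}{2}f'''(t)\,dt .
\]
The key observation is that $f'''(t)$ equals the third central moment of $z$ under the tilted law $p_t(i)\propto e^{t z_i}$. That law is supported on $[-L_j,L_j]$, so its mean $m_t$ also lies in $[-L_j,L_j]$ and $|z-m_t|\le 2L_j$. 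Combining this with Popoviciu's inequality $\mathrm{Var}_t(z)\le L_j^2$ gives
\[
|f'''(t)|\le\mathbb E_t|z-m_t|^3\le 2L_j\,\mathrm{Var}_t(z)\le 2L_j^3 .
\]
Integrating, and using $\int_0^1(1-t)^2/2\,dt=1/6$, yields $|R^{\mathrm{cum}}_j|\le 2L_j^3/6=L_j^3/3$. I expect this to be the main obstacle: a naive Taylor bound on $e^{z}$ introduces a factor $e^{L_j}$, and what avoids it is applying the expansion to the CGF and controlling $f'''$ uniformly along the tilt.

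Finally, for Gaussian data the CGF is exactly quadratic, so $f'''\equiv0$ and $R^{\mathrm{cum}}_j$ vanishes, which is the closing remark of the statement.
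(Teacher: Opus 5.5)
Your proposal is correct and follows the paper's proof essentially step for step. Both apply Taylor's theorem with integral remainder to the page's empirical CGF, identify $f'''(t)$ as the third central moment of the exponentially tilted law, bound it by $2L_j\,\mathrm{Var}_t(z)\le 2L_j^3$, and bound the rank remainder by $\sigma_{j,r+1}^2$ times Bessel's inequality; the only cosmetic difference is that you cite Popoviciu's inequality for $\mathrm{Var}_t(z)\le L_j^2$, where the paper uses the more elementary $\mathrm{Var}_t(z)\le\mathbb E_t z^2\le L_j^2$.
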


\paragraph{Proof of Proposition~\ref{prop:quad}.}
By Eq.~\eqref{eq:cgf}, $K(t)\coloneqq\log\mathbb E_i[e^{tz_i}]$ is the cumulant
generating function of the empirical deviations $\{z_i\}$, finite and smooth
for all real $t$, with $K(0)=0$, $K'(0)=\kappa_1=0$ (the page is centered),
$K''(0)=\kappa_2$, and $s_j(\mathbf q)-a_j-\log B=K(1)$. Taylor's theorem with
integral remainder at $t=1$ gives
\[
K(1)=\tfrac12\kappa_2
 +\underbrace{\int_0^1\tfrac{(1-t)^2}{2}\,K'''(t)\,dt}_{R^{\mathrm{cum}}_j}.
\]
Here $K'''(t)$ is the third cumulant of the exponentially tilted law
$P_t(i)\propto e^{tz_i}$, which is supported on the same deviations. Under it
$|z|\le L_j$, so its variance is at most $L_j^2$ and, since
$|z-\mathbb E_t z|\le 2L_j$ pointwise,
$|K'''(t)|=|\mathbb E_t(z-\mathbb E_t z)^3|\le
2L_j\,\mathrm{Var}_t(z)\le 2L_j^3$. Integrating,
$|R^{\mathrm{cum}}_j|\le 2L_j^3\int_0^1\tfrac{(1-t)^2}{2}\,dt=L_j^3/3$. For a
Gaussian law the CGF is $\tfrac12\sigma^2t^2$, whose third derivative
vanishes identically, so the corresponding remainder is zero;
$R^{\mathrm{cum}}_j$ thus measures the deviations' departure from
Gaussianity. For the rank term,
$\kappa_2=\mathbb E_i z_i^2=\mathbf q^{\top}\mathbf S_j\mathbf q/(Bd)$ with
$\mathbf S_j=\sum_\ell\sigma_{j,\ell}^2\mathbf v_{j,\ell}\mathbf v_{j,\ell}^{\top}$, so
$\tfrac12\kappa_2-\tfrac{1}{2Bd}\sum_{\ell\le r}\sigma_{j,\ell}^2(\mathbf v_{j,\ell}^{\top}\mathbf q)^2
=\tfrac{1}{2Bd}\sum_{\ell>r}\sigma_{j,\ell}^2(\mathbf v_{j,\ell}^{\top}\mathbf q)^2$.
Each $(\mathbf v_{j,\ell}^{\top}\mathbf q)^2\le\|\mathbf q\|^2$ and, the
$\mathbf v_{j,\ell}$ being orthonormal,
$\sum_{\ell>r}(\mathbf v_{j,\ell}^{\top}\mathbf q)^2\le\|\mathbf q\|^2$; the two
give
$R^{\mathrm{rank}}_j\le\min\bigl(\sum_{\ell>r}\sigma_{j,\ell}^2,\;\sigma_{j,r+1}^2\bigr)\|\mathbf q\|^2/(2Bd)$. \qed

\begin{lemma}[Outlier capture]
\label{lem:outlier}
Fix $\mathbf q$, a page $j$ with centered deviations $\{\boldsymbol\delta_i\}$,
and $1\le r<d$. For every $i^\star\in P_j$,
\[
\sum_{\ell\le r}\sigma_{j,\ell}^2(\mathbf v_{j,\ell}^{\top}\mathbf q)^2
\;\ge\;(\mathbf q^{\top}\boldsymbol\delta_{i^\star})^2
-\sigma_{j,r+1}^2\,\|\mathbf q\|^2,
\qquad
\sigma_{j,r+1}^2\le
\lambda_r\Bigl(\textstyle\sum_{i\ne i^\star}\boldsymbol\delta_i\boldsymbol\delta_i^{\top}\Bigr),
\]
with $\lambda_r$ the $r$-th largest eigenvalue.
\end{lemma}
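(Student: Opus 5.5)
The plan is to work throughout with the page's second-moment matrix $\mathbf S_j=\mathbf D_j^{\top}\mathbf D_j=\sum_{i\in P_j}\boldsymbol\delta_i\boldsymbol\delta_i^{\top}=\sum_{\ell}\sigma_{j,\ell}^2\mathbf v_{j,\ell}\mathbf v_{j,\ell}^{\top}$, the same object used in the proof of Prop.~\ref{prop:quad}. The left-hand side of the first inequality is $\mathbf q^{\top}\mathbf S_j^{(r)}\mathbf q$, where $\mathbf S_j^{(r)}$ is the rank-$r$ spectral truncation. Both claims then reduce to two standard facts: a PSD lower bound for the full quadratic form, and rank-one eigenvalue interlacing for the tail eigenvalue.

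For the first inequality, I split the full quadratic form into head and tail:
\[
\mathbf q^{\top}\mathbf S_j\mathbf q=\sum_{\ell\le r}\sigma_{j,\ell}^2(\mathbf v_{j,\ell}^{\top}\mathbf q)^2+\sum_{\ell>r}\sigma_{j,\ell}^2(\mathbf v_{j,\ell}^{\top}\mathbf q)^2 .
\]
The tail is at most $\sigma_{j,r+1}^2\sum_{\ell>r}(\mathbf v_{j,\ell}^{\top}\mathbf q)^2\le\sigma_{j,r+1}^2\|\mathbf q\|^2$, since the singular values are nonincreasing and the $\mathbf v_{j,\ell}$ are orthonormal. This is exactly the $R^{\mathrm{rank}}_j$ bound of Prop.~\ref{prop:quad}, without the $2Bd$ normalisation. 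For the lower bound, every summand $\boldsymbol\delta_i\boldsymbol\delta_i^{\top}$ is PSD, so dropping all terms except $i^\star$ gives $\mathbf q^{\top}\mathbf S_j\mathbf q\ge(\mathbf q^{\top}\boldsymbol\delta_{i^\star})^2$. Rearranging the two bounds yields the first claim.

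For the second inequality, I identify $\sigma_{j,r+1}^2$ with $\lambda_{r+1}(\mathbf S_j)$. If the page has fewer than $r+1$ nonzero modes, this value is $0$ and the claim holds trivially, because the right-hand side is an eigenvalue of a PSD matrix. Otherwise I write $\mathbf S_j=\mathbf S'+\boldsymbol\delta_{i^\star}\boldsymbol\delta_{i^\star}^{\top}$ with $\mathbf S'=\sum_{i\ne i^\star}\boldsymbol\delta_i\boldsymbol\delta_i^{\top}$, and apply Weyl's inequality for a rank-one PSD update, $\lambda_{r+1}(\mathbf S'+\mathbf x\mathbf x^{\top})\le\lambda_r(\mathbf S')$. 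This requires $r\ge1$, which is the hypothesis. If a self-contained argument is wanted, it follows from Courant--Fischer. Take the $(r+1)$-dimensional span $\mathcal U$ of the top $r+1$ eigenvectors of $\mathbf S_j$. Intersect it with $\boldsymbol\delta_{i^\star}^{\perp}$ and with the orthogonal complement of the top $r-1$ eigenvectors of $\mathbf S'$. Dimension counting, $(r+1)-1-(r-1)=1$, leaves a unit vector $\mathbf x$ in that intersection. It satisfies
\[
\lambda_{r+1}(\mathbf S_j)\le\mathbf x^{\top}\mathbf S_j\mathbf x=\mathbf x^{\top}\mathbf S'\mathbf x\le\lambda_r(\mathbf S').
\]

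No step here is really hard. The main point needing care is the interlacing step. One indexing check is that the index drops by exactly one, from $r+1$ to $r$. The other is that the centering does not break anything: the $\boldsymbol\delta_i$ were centred with the full page mean, including $i^\star$, but $\mathbf S'$ is simply a sum of PSD outer products. The decomposition is therefore purely algebraic, and no re-centering of the remaining keys is needed or claimed.
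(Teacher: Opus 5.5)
Your proposal is correct and follows the paper's proof step for step. The paper uses the same head/tail split of $\mathbf q^{\top}\mathbf S_j\mathbf q$ with the tail bounded by $\sigma_{j,r+1}^2\|\mathbf q\|^2$, the same PSD domination $\mathbf S_j\succeq\boldsymbol\delta_{i^\star}\boldsymbol\delta_{i^\star}^{\top}$, and the same Weyl inequality $\lambda_{r+1}(\mathbf S_j)\le\lambda_2(\boldsymbol\delta_{i^\star}\boldsymbol\delta_{i^\star}^{\top})+\lambda_r(\mathbf E)=\lambda_r(\mathbf E)$ for the rank-one split; your Courant--Fischer dimension count is a valid self-contained derivation of that Weyl step, which the paper simply cites.
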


\paragraph{Proof of Lemma~\ref{lem:outlier}.}
Write
$\mathbf S_j=\boldsymbol\delta_{i^\star}\boldsymbol\delta_{i^\star}^{\top}+\mathbf E$
with
$\mathbf E=\sum_{i\ne i^\star}\boldsymbol\delta_i\boldsymbol\delta_i^{\top}\succeq0$.
First, with the eigenvalues in nonincreasing order,
$\sum_{\ell>r}\sigma_{j,\ell}^2(\mathbf v_{j,\ell}^{\top}\mathbf q)^2
\le\sigma_{j,r+1}^2\sum_{\ell>r}(\mathbf v_{j,\ell}^{\top}\mathbf q)^2
\le\sigma_{j,r+1}^2\|\mathbf q\|^2$, so
$\sum_{\ell\le r}\sigma_{j,\ell}^2(\mathbf v_{j,\ell}^{\top}\mathbf q)^2
=\mathbf q^{\top}\mathbf S_j\mathbf q
 -\sum_{\ell>r}\sigma_{j,\ell}^2(\mathbf v_{j,\ell}^{\top}\mathbf q)^2
\ge\mathbf q^{\top}\mathbf S_j\mathbf q-\sigma_{j,r+1}^2\|\mathbf q\|^2$.
Second,
$\mathbf S_j\succeq\boldsymbol\delta_{i^\star}\boldsymbol\delta_{i^\star}^{\top}$
gives
$\mathbf q^{\top}\mathbf S_j\mathbf q\ge(\mathbf q^{\top}\boldsymbol\delta_{i^\star})^2$.
Third, Weyl's inequality for the rank-one split,
$\lambda_{r+1}(\mathbf S_j)\le
\lambda_2(\boldsymbol\delta_{i^\star}\boldsymbol\delta_{i^\star}^{\top})
+\lambda_r(\mathbf E)=\lambda_r(\mathbf E)$,
since a rank-one positive-semidefinite matrix has $\lambda_2=0$. \qed

\clearpage
\section{The Microscope: Selection Fidelity and Estimator Error}
\label{app:microscope}
\FloatBarrier
Decode-time verification recorded zero violations of the paper's proven
bounds on every audited page; this is an implementation check on the
released kernels, not a result.
Tables~\ref{tab:appB-selection} and~\ref{tab:appB-delta} come from one
instrumented decode harness (released, App.~\ref{app:repro}): greedy
decode of five RULER-32K records per model (\texttt{niah\_single\_2},
\texttt{niah\_multikey\_2}, \texttt{niah\_multiquery}, \texttt{vt},
\texttt{qa\_1}; every generated answer correct), queries and cached
keys captured bit-identically at the attention call, statistics in fp64
over pages of $B{=}16$, carrier pages taken from the answer strings'
token spans. A few records instrumented exhaustively characterize the
estimator in depth; coverage across tasks and models is
\S\ref{sec:eval}'s role.

\paragraph{Figure~\ref{fig:banner} protocol.} Panel (a): a seeded,
mass-stratified subsample of the released pairs; exact and
reconstructed shares are normalized over the same complete
candidate-page set. Panel (b): carrier retention conditional on the
exact ranking keeping the carrier, isolating estimator error from the
budget's information limit; sink, recent window, and tail always kept
and excluded from candidacy; the measured $0.25\%$ point is undrawn
(noisy; values in the released data). Panel (c): all scope rows
scored without quantization. The \emph{global} row is faithful offline Loki (one PCA
basis per (layer, KV head), fit on held-out WikiText-103 at the
evaluation length, applied after rotary as a pure rotation with no mean
subtraction and no per-sequence state; the stronger post-rotary fit
reported). Loki does not mean-subtract, so that row is the method as
published; but its missing mean inflates logit-level error while barely
moving ranking, and reporting it alone would overstate the scope effect
on the first two rungs. We therefore also show \emph{global}${+}\mu$,
the same basis with the calibration mean restored: level error falls
from $6.60$ to $0.93$ logit MAE on Llama, a $2.5\times$ gap to page
scope rather than $18\times$, while top-page recall moves only from
$0.61$ to $0.66$ against page scope's $0.93$. The ranking and carrier
rungs, the ones selection actually depends on, are where the scope gap
lives; the level rungs are dominated by a missing mean.
Two properties of that row follow from its construction and are worth
stating, since the mean enters as
$\boldsymbol\mu+\mathbf V\mathbf V^{\top}(\mathbf k-\boldsymbol\mu)
=\mathbf V\mathbf V^{\top}\mathbf k+(\mathbf I-\mathbf V\mathbf
V^{\top})\boldsymbol\mu$, a shift that is common to every key and every
page but depends on the query. Each query's page ranking is therefore
untouched, which is why the rank-correlation rung is identical to the
global row to the reported precision. The rungs computed after the group
reduction are not: this ladder reduces a group with the exact
log-sum-exp over its queries rather than \method{}'s share-average rule
(\S\ref{sec:method}), so that it compares representations without
assuming \method{}'s combine, and a query-dependent shift reweights
queries inside that reduction. Under the share-average rule the shift
cancels identically and the recall and carrier rungs would coincide with
the global row. Note also that the offline basis
is calibrated on WikiText-103 and evaluated on RULER, so both global
rows carry a domain shift on top of the scope effect being tested,
while the per-sequence row does not and sits between them. Panel (d):
the six RULER-16K families map to their constituent tasks (full sweep
released with the artifact). Fig.~\ref{fig:quant-ablation}:
the same traces at matched attended tokens, coefficients and centroid
int8.

\paragraph{Spectral tails and the isotropic null.} The exactness margin of Prop.~\ref{prop:err} is set by
the per-page \emph{absolute} tail energy $\tau_{j,r}=\sum_{\ell>r}\sigma_{j,\ell}^2$
of the within-page key Gram; we summarize its concentration scale-free by the
\emph{normalized} residual fraction
$\bar\rho_{j,r}=\tau_{j,r}\big/\sum_\ell\sigma_{j,\ell}^2$. Median over pages,
$\bar\rho_{j,8}=0.11$ (both models; rank-$8$ captures ${\sim}89\%$ of per-page
key energy, Table~\ref{tab:scope-main}), $\bar\rho_{j,4}=0.32$--$0.33$,
and $\bar\rho_{j,2}=0.54$--$0.56$ (Llama-3.1-8B/Qwen3-4B range). The captured fraction rises steeply with rank because the
per-page spectrum decays fast, which is why $r{=}8$ suffices and $r{=}2$ does
not. A flat (isotropic) spectrum is the relevant null. At $B{=}16$ a page
has at most $B-1=15$ nonzero centered modes, so an unbounded pool of
structure was never on offer, yet a rank-$8$ basis fit to $N{=}20{,}000$
synthetic isotropic pages captures only a median $67.8\%$ of their
variance (mean $0.679$, std $0.009$; CPU simulation, no model or
checkpoint involved). Real pages clear this isotropic null by a wide
margin, so the observed concentration is genuine local structure, not an
artifact of the mode-count cap.

\begin{table}[!htb]\centering
\caption{\textbf{Selection fidelity at extreme and moderate budgets}
(RULER-32K, greedy decode; mean over records, layers, query heads, decode
steps; sink, recent window, and tail always kept). $r$N denotes rank $N$,
i$M$ an int-$M$ basis, fp a full-precision basis; coefficients and centroid
are int8 throughout. $\gamma_c$: contested
attention mass captured (always-kept share excluded from numerator and
denominator); $\gamma$: total mass captured. Recall: overlap with the
exact-mass top-$k$. Needle: carrier pages kept, conditional on the exact
ranking keeping them. The r8i4 summary sits near the ceiling on
mass and carrier retention; the envelope keeps mass but collapses on
carriers at the smallest budgets, and the int4 basis costs bulk-page
recall, not carrier retention. The rank-matched moment core
(\S\ref{sec:law}, Prop.~\ref{prop:quad}) is the strongest rival on both
metrics, and the separation from it is not uniform: \method{} leads
top-$k$ recall by $11$--$25$ points in every cell and leads carrier
retention at the tight budget on both models, but at the $5\%$ budget on
Llama the moment core edges both \method{} precisions on carriers ($99$
against $97$--$98$). Moment-core ranks $1$, $2$, and $4$ are monotone
below rank $8$ and ship with the artifact.}
\label{tab:appB-selection}
\footnotesize
\setlength{\tabcolsep}{3.4pt}
\begin{tabular}{l l rrrr rrrr}
\toprule
& & \multicolumn{4}{c}{Llama-3.1-8B (32K)} & \multicolumn{4}{c}{Qwen3-4B (32K)}\\
\cmidrule(lr){3-6}\cmidrule(lr){7-10}
Budget & Scorer & $\gamma_c$ & $\gamma$ & recall & needle
                & $\gamma_c$ & $\gamma$ & recall & needle\\
\midrule
\multirow{6}{*}{$0.5\%$}
& exact LSE (ceiling)       & 32 & 86.0 & 100 & 100 & 52 & 81.5 & 100 & 100\\
& \method{} r8fp            & 32 & 86.0 & 90  & 95  & 51 & 81.3 & 90  & 94\\
& \method{} r8i4            & 32 & 85.9 & 85  & 93  & 49 & 80.1 & 77  & 93\\
& rank-$8$ moment core      & 28 & 84.9 & 65  & 88  & 37 & 74.3 & 52  & 79\\
& centroid only             & 24 & 84.4 & 55  & 47  & 21 & 65.7 & 34  & 78\\
& min/max envelope (Quest)  & 24 & 84.0 & 39  & 45  & 36 & 74.5 & 47  & 39\\
\midrule
\multirow{6}{*}{$5\%$}
& exact LSE (ceiling)       & 63 & 92.1 & 100 & 100 & 82 & 93.5 & 100 & 100\\
& \method{} r8fp            & 63 & 92.1 & 94  & 98  & 82 & 93.3 & 93  & 96\\
& \method{} r8i4            & 62 & 92.0 & 90  & 97  & 81 & 92.9 & 87  & 94\\
& rank-$8$ moment core      & 60 & 91.4 & 79  & 99  & 76 & 90.8 & 74  & 88\\
& centroid only             & 53 & 90.2 & 67  & 69  & 44 & 73.9 & 61  & 57\\
& min/max envelope (Quest)  & 53 & 89.9 & 55  & 75  & 73 & 90.1 & 55  & 60\\
\bottomrule
\end{tabular}
\end{table}

\begin{table}[!htb]\centering
\caption{\textbf{Measured log-mass error of the integer-stored summary.} Pooled
over all pages, decode steps, query heads, and layers of both models'
RULER-32K microscope traces, for the r8fp and r8i4 variants
(notation of Table~\ref{tab:appB-selection}). The absolute score error $|\hat s-s|$ is
measured per page and query against the exact log-sum-exp mass; the
group-share log error is measured on the group-averaged shares
$\bar{\hat m}_j$ that Thm.~\ref{thm:retain} bounds (percentiles from
0.01-dex histograms; max exact). Integer storage roughly doubles the error
at matched percentiles; the heavy tail comes from a small set of
outlier pages.}
\label{tab:appB-delta}
\footnotesize
\setlength{\tabcolsep}{5pt}
\begin{tabular}{l l rrr rrr}
\toprule
& & \multicolumn{3}{c}{abs.\ score error $|\hat s-s|$} & \multicolumn{3}{c}{group-share log error}\\
\cmidrule(lr){3-5}\cmidrule(lr){6-8}
Model & Variant & p50 & p95 & max & p50 & p95 & max\\
\midrule
\multirow{2}{*}{Llama-3.1-8B (32K)}
 & r8fp             & 0.084 & 0.422 & 11.02 & 0.072 & 0.335 & 6.38\\
 & r8i4              & 0.119 & 0.507 & 10.87 & 0.180 & 0.700 & 6.74\\
\multirow{2}{*}{Qwen3-4B (32K)}
 & r8fp             & 0.114 & 0.610 & 17.32 & 0.088 & 0.484 & 12.95\\
 & r8i4              & 0.180 & 0.923 & 16.72 & 0.248 & 1.244 & 18.01\\
\bottomrule
\end{tabular}
\end{table}

\begin{figure}[!htb]\centering
\includegraphics[width=\linewidth]{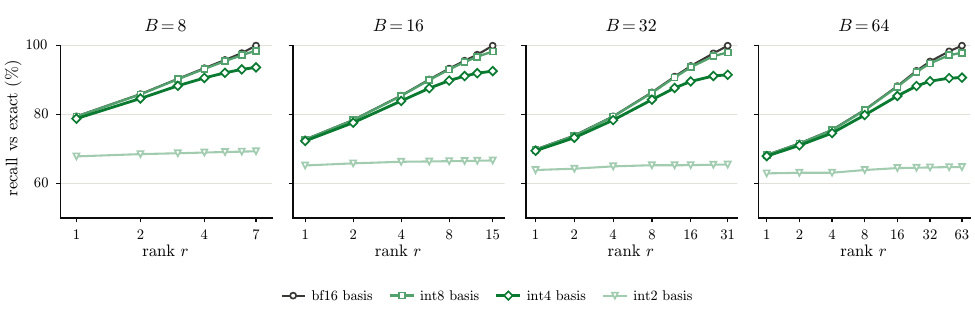}
\caption{\textbf{Rank and basis precision at matched attended tokens} ($1024$ per head; Llama-3.1-8B). Recall vs.\ the exact ranking up to each page size's cap ($r{=}B{-}1$). int8 tracks bf16; int4 costs a few points and flattens past $r{\sim}16$; int2 stays near centroid level at every rank, so int4 is the lowest tested precision that preserves the benefit of added rank. Full grid released with the artifact.}
\label{fig:quant-ablation}
\end{figure}

\clearpage
\section{Detailed Results}
\label{app:extres}
\FloatBarrier
This appendix reports the full per-subtask breakdown behind the quality
figures, for readers who want the underlying numbers, namely the measured methods
at every budget $b\in\{64,\dots,2048\}$, on each LongBench-v1 subset
(Table~\ref{tab:appC-lbv1}) and each
RULER task (Tables~\ref{tab:appC-ruler16} and~\ref{tab:appC-ruler32}, at
$16$K and $32$K), on the InfiniteBench $100$K+ context suite
\citep{infinitebench2024} (Table~\ref{tab:appC-infinitebench}), plus AIME26
(Table~\ref{tab:appC-aime26}) and MATH-500
(Table~\ref{tab:appC-math500}). Retrieval and QA use Llama-3.1-8B; the InfiniteBench long-context
suite uses GLM-4-9B-Chat-1M (its native $1$M context; $11$ tasks on a seeded
subset of $50$ records per task, \texttt{math\_calc} excluded); reasoning uses Qwen3-4B
(thinking on). Every score is a same-engine measurement on identical records;
\method{}$_{r}$ denotes the local rank-$r$ summary, and \emph{Oracle} (also
labelled \emph{exact-LSE}) is exact-LSE selection with the share-average
(nrm) group combine. On InfiniteBench \method{} at r8i4
is measured directly and tracks that oracle within $0.1$--$1.4$ points. Every
entry is a direct read of the measured per-run metrics.

\emph{Oracle} names a specific construct and not a bound on achievable
quality. It is exact within the value-blind mass criterion of
\S\ref{sec:setup}, at \method{}'s page granularity, under \method{}'s
share-average combine, so it upper-bounds what an estimator in this
design space can reach and nothing beyond that. Three consequences are
visible in the tables below and we do not paper over them: a
query-aware scorer can exceed it on individual tasks (RocketKV does on
NIAH-MK3 at both RULER lengths), a per-head combine exceeds it on
coverage (Table~\ref{tab:appD-combine}), and on aggregation tasks it
falls far below FullKV, which is a property of the criterion rather
than of any estimator (\S\ref{sec:eval-acc}).

\subsection{Per-method budget accounting}
\label{app:extres-config}
A nominal $b$ does not buy the same number of reads
in every method, so a comparison at equal $b$ is not a comparison at equal cost.
That applies to \method{} as much as to the baselines, and we state our
own term first because it is the largest.

\begin{itemize}
  \setlength{\itemsep}{3pt}\setlength{\parskip}{0pt}\setlength{\topsep}{3pt}
  \item \textbf{\method{}.} Attends $b$ tokens, but \emph{scans the whole
        summary} every decode step to produce the ranking, so in a
        context of $N$ the per-step read is $\alpha N+b$
        tokens-equivalent, linear in context and not in the budget, and
        the reduction against dense is capped near $1/\alpha$ at any
        budget. At $B{=}16$, $d{=}128$ and bf16 KV a page holds
        $8192$ B of KV, and $\alpha$ is set by the summary rank
        (stored payload, counting the per-column and per-row scale
        factors):
        \begin{center}\small
        \begin{tabular}{l rrr}
        \toprule
        & $r{=}8$ & $r{=}4$ & $r{=}2$\\
        \midrule
        stored payload (B/page)       & $818$ & $490$ & $326$\\
        $\alpha$ (share of KV)        & $0.100$ & $0.060$ & $0.040$\\
        read reduction ceiling $1/\alpha$ & $10\times$ & $17\times$ & $25\times$\\
        at $N{=}1$M, $b{=}2048$       & $9.8\times$ & $16.2\times$ & $24.0\times$\\
        summary share of the read     & $98\%$ & $97\%$ & $95\%$\\
        \bottomrule
        \end{tabular}
        \end{center}
        Resident state is the full cache plus $\alpha$. So $b$ bounds
        neither \method{}'s total per-step traffic nor its resident
        state: read $={\alpha}N+b$, residency $=$ full KV $+$ summary.
        These are analytic consequences of the payload formula of
        \S\ref{sec:lowrank}, not separate measurements; the decode
        timings of App.~\ref{app:eff-protocol} all use $r{=}8$, the
        largest of the three.
  \item \textbf{Quest.} Quest is specified for multi-head attention; our GQA
        port selects independently per QUERY head, the natural extension, so the
        pages a group reads are the union over its $G$ heads: at most $Gb$ tokens, at
        least $b$, with $G{=}4$ for Llama-3.1-8B. The upper bound is reached when
        the $G$ heads select disjoint pages. Every other arm here selects per KV
        head and therefore pays no such union.
  \item \textbf{ShadowKV.} Selects per KV head via the group max, so no union,
        but the always-attended outlier block ($48$ chunks of $8$) and local
        window ($32$ tokens) are a fixed $416$ tokens that we count \emph{inside}
        $b$. Actual selection is therefore $96$, $608$ and $1632$ tokens at
        $b=512$, $1024$ and $2048$. At $b\le256$ the fixed block does not fit,
        the chunk solve returns zero selected chunks, and the method degenerates
        to dense; we do not report those points. This rule is consistent
        across arms but its bite is not symmetric: the fixed block
        consumes $81\%$ of ShadowKV's budget at $b{=}512$ where
        \method{}'s always-kept sink and recent pages consume $6\%$, so
        the two are compared at $96$ against $480$ selected tokens
        there. Read against \emph{actual} selected tokens, ShadowKV
        closes most of the gap (RULER-32K: $87.0$ at $1632$ selected
        against \method{}'s $88.0$ at $2016$), and the curves in
        \S\ref{sec:eval-acc} should be read with that in mind.
  \item \textbf{RocketKV.} Spends $b$ as decode TRAFFIC rather than residency,
        split evenly between the attention half and the top-$k$ estimator, so
        attended tokens are $b/2$ and measured traffic is $0.77$ to
        $1.00\times$ nominal. Its budget does not bound the resident cache,
        which stays at prompt plus generated.
  \item \textbf{KVzip.} Bounds only the retained CONTEXT. It never evicts the
        question or the generated turn, so decode reads are $b+|q|+g$ with $g$
        the tokens generated so far: about $2.4\times$ nominal at $b{=}64$, and
        $1.05\times$ at $b{=}2048$. It is also the one arm run at
        \texttt{max\_num\_seqs}$=1$: its selection is bound outside the attention
        path through a single active kept-set, so a second concurrent decode row
        would read another record's selection. Every other arm runs at
        \texttt{max\_num\_seqs}$=4$.
\end{itemize}

The shared axis is $b$ tokens per (layer, KV-head) at page size
$B{=}16$. Each port is a self-contained module of the
released code, driven by one shared benchmark runner that refuses to
record a sparse cell whose measured sparse-decode activation is not
$1.00$ on every decode row.

\textbf{Budget counting.} Every method's per-(layer, KV-head) token budget
$b$ is counted the same way. The always-kept sink and most-recent pages
fill two of the $k=\lceil b/B\rceil$ page slots, counted \emph{within}
$b$, never added on top of it (\S\ref{sec:setup}). Each baseline runs at
its native block or page granularity mapped onto this shared budget
grid, and the per-suite table notes below state each arm's
configuration, record subset, and sampling.

\textbf{What this axis does and does not settle.} Reporting quality
against nominal $b$ answers ``how much selected context does a method
need,'' which is the question the budget-floor analysis of
\S\ref{sec:eval-reason} asks, and it is the axis on which \method{}'s
fidelity result is stated. It is not an equal-cost axis, and the
accounting above shows the deviations run in both directions: Quest and
KVzip read more than nominal, RocketKV and ShadowKV attend materially
less, and \method{} adds a context-proportional scan on top. A
comparison plotted against measured bytes read per decode step would
reorder these arms, and we do not run one; where a quality difference
here is within the noted intervals, it should not be read as a cost
advantage.

\subsection{Reasoning baselines (MATH-500, AIME26)}
\label{app:extres-reasoning-baselines}
The reasoning suite adds three decode-time eviction baselines, each a parity-gated
faithful port, using the published configuration where compatible with the shared
budget; any budget-mapping deviation is stated explicitly below.
R-KV and LazyEviction are
approximately residency-matched to $b$, though LazyEviction meters its budget per
query head, so its resident set carries the GQA-group factor $G$ detailed in its
entry below. TriAttention is run in its native prompt-protected
reasoning configuration; therefore $b$ controls only its generated-region
selection and its actual resident/read state can exceed the nominal budget. The
shared always-kept sink and recent pages are counted inside $b$ as elsewhere.
\begin{itemize}
  \setlength{\itemsep}{3pt}\setlength{\parskip}{0pt}\setlength{\topsep}{3pt}
  \item \textbf{R-KV}~\citep{cai2025rkv}. Redundancy-aware eviction for reasoning:
        each token is scored by importance (attention mass) and non-redundancy
        (feature dissimilarity to already-kept tokens), mixed by $\lambda$, with a
        fixed retained buffer plus a recent-token buffer. We run the published
        reasoning config ($\lambda{=}0.1$, buffer $128$, retain ratio $0.2$; an
        earlier draft used $\lambda{=}0.07$/buffer $64$, which our config audit
        corrected to the reported values). It is a strong reasoning-specific
        baseline.
  \item \textbf{TriAttention}~\citep{mao2026triattention}. Trigonometric KV
        compression: keys are scored per query head via a complex-domain
        (real, imaginary, magnitude) query-mean functional, normalized within
        each head and reduced by the max over each GQA group, and the top pages
        are selected under a periodic rescoring schedule. Our port protects the
        entire prompt span and selects only over the generated region with the
        published HF-reasoning selection order; TriAttention reports outperforming R-KV on
        long reasoning, which reproduces here.
  \item \textbf{LazyEviction}~\citep{zhang2025lazyeviction}. Lagged eviction for
        reasoning: prefill stays dense, and during decode it tracks each token's
        maximum recurrence interval (the longest lag between successive steps at
        which a query head attends to it) and evicts only once every $W$ decode
        steps, at which point it retains the highest-importance tokens (softmax
        attention above threshold $\alpha$) up to budget and always keeps the $W$
        most recent tokens, so a token re-attended on a lag survives an
        intervening lull. We run the published reasoning config
        ($\alpha{=}10^{-4}$ on Qwen3-4B, $5{\times}10^{-4}$ on Llama; a single
        decode path, parity-gated against the official implementation). The official
        implementation's AIME configuration (a $363$-token window at $1492$
        capacity) does not fit our tighter budgets, so we hold $W$ at that
        window-to-capacity ratio ($\approx 0.243$) across the shared
        $64$--$2048$ grid rather than at a fixed absolute size; this ratio-mapping
        is a harness convenience to keep $W$ below the budget, not an official
        knob. Its budget is metered \emph{per query head}, so its resident set is
        not directly comparable to a per-KV-head method's $b$ without the
        GQA-group factor $G$, the same residency-accounting caveat that applies to
        the other decode-time arms here.
\end{itemize}
Because all three commit their eviction irreversibly during decode, once a
chain-relevant token is dropped it cannot be recovered: at tight $b$ the model
loses coherence and generates to the length cap (a $\approx 2\times$ decode-cost
inflation, measured on our cells), which is why their accuracy
collapses while \method{}, which retains the selection state and re-selects
exactly each step, tracks the read-every-key oracle and degrades far more gracefully.

\subsection{Retrieval and long-document QA}
Llama-3.1-8B; LongBench-v1 per-subset \texttt{score} (full split), RULER per-task at $16$K and $32$K. FullKV is the budget-independent same-engine reference. All baselines are parity-gated ports at a common per-KV-head budget $b$.

\subsubsection{LongBench-v1}
\begingroup\scriptsize\setlength{\tabcolsep}{2.6pt}
\begin{longtable}{lrrrrrrrrr}
\caption{\textbf{LongBench-v1 per subset} (Llama-3.1-8B, score), all methods at each budget $b$.}\label{tab:appC-lbv1}\\
\toprule
Subtask & FullKV & Oracle & \method{} & r4i4 & r2i4 & Quest & KVzip & ShadowKV & RocketKV \\
\midrule\endfirsthead
\multicolumn{10}{c}{\tablename\ \thetable\ (cont.)}\\
\toprule Subtask & FullKV & Oracle & \method{} & r4i4 & r2i4 & Quest & KVzip & ShadowKV & RocketKV \\
\midrule\endhead
\bottomrule\endfoot
\midrule
\multicolumn{10}{l}{\emph{budget } $b=64$}\\
\midrule
narrativeqa & 29.4 & 28.7 & 29.3 & 28.0 & 28.4 & 17.3 & 21.2 & N/A & 26.3 \\
qasper & 47.2 & 44.9 & 45.5 & 45.9 & 43.7 & 27.3 & 21.1 & N/A & 28.0 \\
multifieldqa\_en & 55.4 & 56.6 & 56.0 & 55.3 & 56.4 & 35.8 & 34.7 & N/A & 55.9 \\
hotpotqa & 58.2 & 59.5 & 58.5 & 57.1 & 57.9 & 42.8 & 52.0 & N/A & 57.1 \\
2wikimqa & 51.9 & 50.9 & 50.8 & 50.6 & 50.5 & 36.0 & 44.8 & N/A & 48.4 \\
musique & 32.9 & 33.0 & 31.7 & 31.5 & 32.4 & 21.6 & 24.5 & N/A & 28.5 \\
dureader & 33.3 & 32.2 & 31.9 & 29.4 & 30.3 & 14.1 & 19.7 & N/A & 23.1 \\
gov\_report & 35.4 & 29.5 & 29.8 & 28.9 & 26.9 & 16.3 & 15.7 & N/A & 22.2 \\
qmsum & 24.8 & 24.2 & 24.1 & 23.9 & 24.0 & 18.0 & 20.4 & N/A & 22.5 \\
multi\_news & 27.2 & 25.9 & 25.8 & 25.1 & 24.8 & 18.2 & 18.6 & N/A & 23.6 \\
trec & 29.5 & 28.0 & 29.0 & 27.0 & 24.0 & 22.8 & 9.5 & N/A & 19.5 \\
triviaqa & 92.1 & 91.6 & 92.0 & 91.3 & 91.0 & 85.1 & 91.5 & N/A & 91.2 \\
samsum & 40.9 & 38.5 & 38.6 & 38.8 & 38.7 & 25.1 & 38.6 & N/A & 37.6 \\
passage\_retrieval\_en & 100.0 & 98.5 & 97.5 & 98.0 & 97.0 & 62.5 & 99.5 & N/A & 98.5 \\
\midrule
\textbf{Avg} & \textbf{47.0} & \textbf{45.9} & \textbf{45.7} & \textbf{45.1} & \textbf{44.7} & \textbf{31.6} & \textbf{36.6} & \textbf{N/A} & \textbf{41.6} \\
\midrule
\multicolumn{10}{l}{\emph{budget } $b=128$}\\
\midrule
narrativeqa & 29.4 & 28.7 & 28.8 & 28.2 & 28.2 & 22.0 & 21.9 & N/A & 26.7 \\
qasper & 47.2 & 46.5 & 46.6 & 46.6 & 47.2 & 37.5 & 27.6 & N/A & 40.3 \\
multifieldqa\_en & 55.4 & 57.1 & 56.6 & 56.4 & 55.9 & 46.3 & 40.5 & N/A & 57.1 \\
hotpotqa & 58.2 & 59.1 & 59.1 & 59.2 & 59.3 & 52.5 & 51.2 & N/A & 58.9 \\
2wikimqa & 51.9 & 52.8 & 51.0 & 50.5 & 51.0 & 47.0 & 45.7 & N/A & 51.0 \\
musique & 32.9 & 31.6 & 31.6 & 31.4 & 31.6 & 26.0 & 25.0 & N/A & 31.0 \\
dureader & 33.3 & 34.1 & 35.0 & 33.9 & 33.0 & 20.1 & 20.1 & N/A & 27.1 \\
gov\_report & 35.4 & 34.1 & 33.6 & 32.6 & 32.0 & 21.1 & 17.0 & N/A & 24.2 \\
qmsum & 24.8 & 24.1 & 24.5 & 24.6 & 24.0 & 19.8 & 21.0 & N/A & 23.3 \\
multi\_news & 27.2 & 26.3 & 26.3 & 26.5 & 26.2 & 24.2 & 23.6 & N/A & 24.0 \\
trec & 29.5 & 28.0 & 30.0 & 29.5 & 28.5 & 25.5 & 11.0 & N/A & 23.5 \\
triviaqa & 92.1 & 91.7 & 91.8 & 91.8 & 91.8 & 90.0 & 91.5 & N/A & 91.4 \\
samsum & 40.9 & 38.7 & 38.4 & 38.2 & 38.8 & 33.4 & 38.9 & N/A & 38.8 \\
passage\_retrieval\_en & 100.0 & 99.5 & 100.0 & 100.0 & 100.0 & 92.2 & 99.5 & N/A & 99.5 \\
\midrule
\textbf{Avg} & \textbf{47.0} & \textbf{46.6} & \textbf{46.7} & \textbf{46.4} & \textbf{46.2} & \textbf{39.8} & \textbf{38.2} & \textbf{N/A} & \textbf{44.1} \\
\midrule
\multicolumn{10}{l}{\emph{budget } $b=256$}\\
\midrule
narrativeqa & 29.4 & 28.6 & 28.1 & 28.1 & 29.0 & 26.0 & 23.2 & N/A & 29.6 \\
qasper & 47.2 & 45.8 & 46.5 & 46.2 & 46.1 & 44.0 & 39.5 & N/A & 43.8 \\
multifieldqa\_en & 55.4 & 56.9 & 56.8 & 56.3 & 56.0 & 51.9 & 50.5 & N/A & 56.9 \\
hotpotqa & 58.2 & 58.3 & 58.6 & 58.4 & 58.7 & 55.6 & 52.5 & N/A & 58.4 \\
2wikimqa & 51.9 & 52.5 & 51.6 & 51.7 & 51.5 & 49.3 & 48.5 & N/A & 51.9 \\
musique & 32.9 & 32.0 & 31.8 & 32.4 & 33.2 & 30.1 & 25.1 & N/A & 32.2 \\
dureader & 33.3 & 34.0 & 33.7 & 33.4 & 33.9 & 27.1 & 19.6 & N/A & 30.3 \\
gov\_report & 35.4 & 34.4 & 34.2 & 33.6 & 33.9 & 29.4 & 20.8 & N/A & 26.6 \\
qmsum & 24.8 & 24.4 & 25.0 & 24.7 & 24.7 & 22.0 & 22.2 & N/A & 24.5 \\
multi\_news & 27.2 & 27.1 & 26.8 & 26.8 & 26.6 & 26.7 & 25.8 & N/A & 24.5 \\
trec & 29.5 & 30.0 & 30.5 & 30.0 & 30.0 & 26.6 & 21.1 & N/A & 30.0 \\
triviaqa & 92.1 & 91.8 & 92.1 & 91.5 & 92.5 & 90.8 & 91.5 & N/A & 91.7 \\
samsum & 40.9 & 39.5 & 38.5 & 38.8 & 39.1 & 37.4 & 39.0 & N/A & 40.0 \\
passage\_retrieval\_en & 100.0 & 99.5 & 100.0 & 100.0 & 100.0 & 97.0 & 99.0 & N/A & 100.0 \\
\midrule
\textbf{Avg} & \textbf{47.0} & \textbf{46.8} & \textbf{46.7} & \textbf{46.6} & \textbf{46.8} & \textbf{43.8} & \textbf{41.3} & \textbf{N/A} & \textbf{45.7} \\
\midrule
\multicolumn{10}{l}{\emph{budget } $b=512$}\\
\midrule
narrativeqa & 29.4 & 29.0 & 29.1 & 29.1 & 28.3 & 28.5 & 24.9 & 29.5 & 29.9 \\
qasper & 47.2 & 45.9 & 46.6 & 47.2 & 46.6 & 45.3 & 44.4 & 45.6 & 46.4 \\
multifieldqa\_en & 55.4 & 55.6 & 56.0 & 56.4 & 56.8 & 52.8 & 54.0 & 56.7 & 58.3 \\
hotpotqa & 58.2 & 58.4 & 59.0 & 58.8 & 58.8 & 57.9 & 55.3 & 58.6 & 59.1 \\
2wikimqa & 51.9 & 51.8 & 51.8 & 52.1 & 51.9 & 51.7 & 49.8 & 51.0 & 50.2 \\
musique & 32.9 & 32.4 & 32.4 & 32.4 & 32.7 & 32.3 & 28.3 & 33.1 & 32.6 \\
dureader & 33.3 & 33.1 & 33.9 & 33.2 & 33.0 & 32.3 & 24.0 & 32.1 & 32.2 \\
gov\_report & 35.4 & 34.8 & 34.8 & 34.5 & 34.2 & 33.8 & 26.4 & 32.7 & 30.3 \\
qmsum & 24.8 & 24.9 & 24.9 & 24.6 & 24.6 & 24.4 & 23.6 & 24.8 & 25.1 \\
multi\_news & 27.2 & 27.2 & 27.2 & 27.2 & 27.2 & 27.7 & 26.8 & 26.8 & 25.8 \\
trec & 29.5 & 30.0 & 30.5 & 30.5 & 30.5 & 28.0 & 28.5 & 29.5 & 30.0 \\
triviaqa & 92.1 & 92.1 & 92.2 & 92.1 & 91.7 & 91.0 & 91.9 & 91.7 & 92.2 \\
samsum & 40.9 & 40.4 & 40.4 & 39.6 & 40.1 & 37.3 & 39.3 & 38.4 & 40.2 \\
passage\_retrieval\_en & 100.0 & 100.0 & 100.0 & 100.0 & 99.5 & 99.5 & 97.0 & 98.0 & 99.5 \\
\midrule
\textbf{Avg} & \textbf{47.0} & \textbf{46.8} & \textbf{47.1} & \textbf{47.0} & \textbf{46.8} & \textbf{45.9} & \textbf{43.9} & \textbf{46.3} & \textbf{46.6} \\
\midrule
\multicolumn{10}{l}{\emph{budget } $b=1024$}\\
\midrule
narrativeqa & 29.4 & 29.5 & 30.0 & 29.7 & 30.0 & 28.7 & 28.3 & 29.8 & 28.7 \\
qasper & 47.2 & 47.0 & 47.3 & 46.8 & 47.0 & 45.7 & 46.4 & 46.3 & 47.0 \\
multifieldqa\_en & 55.4 & 55.8 & 55.8 & 55.6 & 56.0 & 53.7 & 56.2 & 57.0 & 56.3 \\
hotpotqa & 58.2 & 58.8 & 59.6 & 59.6 & 59.0 & 57.5 & 58.9 & 59.2 & 59.0 \\
2wikimqa & 51.9 & 52.4 & 52.1 & 51.9 & 51.6 & 52.2 & 51.7 & 50.7 & 51.5 \\
musique & 32.9 & 32.5 & 33.0 & 32.8 & 32.6 & 32.8 & 32.2 & 33.7 & 31.6 \\
dureader & 33.3 & 33.9 & 33.2 & 33.3 & 33.1 & 33.5 & 31.0 & 33.2 & 32.6 \\
gov\_report & 35.4 & 35.2 & 35.0 & 34.9 & 34.5 & 34.8 & 31.7 & 33.9 & 32.7 \\
qmsum & 24.8 & 24.9 & 24.8 & 24.9 & 24.4 & 25.0 & 24.5 & 24.9 & 25.4 \\
multi\_news & 27.2 & 27.1 & 26.9 & 27.2 & 27.1 & 27.3 & 26.8 & 26.9 & 26.5 \\
trec & 29.5 & 30.5 & 30.0 & 30.0 & 30.0 & 29.0 & 30.0 & 30.0 & 29.5 \\
triviaqa & 92.1 & 92.1 & 92.1 & 92.2 & 92.2 & 91.1 & 91.9 & 91.8 & 92.2 \\
samsum & 40.9 & 40.2 & 40.5 & 40.6 & 40.3 & 39.5 & 39.2 & 40.6 & 40.3 \\
passage\_retrieval\_en & 100.0 & 100.0 & 100.0 & 100.0 & 99.5 & 99.5 & 91.0 & 99.5 & 100.0 \\
\midrule
\textbf{Avg} & \textbf{47.0} & \textbf{47.1} & \textbf{47.2} & \textbf{47.1} & \textbf{46.9} & \textbf{46.5} & \textbf{45.7} & \textbf{47.0} & \textbf{46.7} \\
\midrule
\multicolumn{10}{l}{\emph{budget } $b=2048$}\\
\midrule
narrativeqa & 29.4 & 30.3 & 30.2 & 30.0 & 30.3 & 29.7 & 29.1 & 29.6 & 30.1 \\
qasper & 47.2 & 47.0 & 47.3 & 47.4 & 47.2 & 46.2 & 47.6 & 46.4 & 46.9 \\
multifieldqa\_en & 55.4 & 54.8 & 55.0 & 54.9 & 54.9 & 55.3 & 56.4 & 56.2 & 55.0 \\
hotpotqa & 58.2 & 59.0 & 59.5 & 59.5 & 59.7 & 57.7 & 59.2 & 59.4 & 58.7 \\
2wikimqa & 51.9 & 51.7 & 51.6 & 51.2 & 51.0 & 52.4 & 51.5 & 50.7 & 51.6 \\
musique & 32.9 & 32.9 & 33.1 & 33.1 & 32.9 & 31.6 & 33.4 & 33.1 & 32.3 \\
dureader & 33.3 & 33.6 & 33.2 & 33.6 & 33.5 & 34.2 & 33.4 & 32.6 & 32.8 \\
gov\_report & 35.4 & 35.1 & 35.5 & 35.1 & 34.5 & 35.7 & 34.3 & 34.5 & 33.8 \\
qmsum & 24.8 & 25.3 & 25.1 & 25.2 & 24.7 & 25.3 & 24.9 & 25.1 & 24.9 \\
multi\_news & 27.2 & 27.1 & 27.1 & 26.9 & 26.9 & 26.9 & 26.9 & 26.9 & 27.2 \\
trec & 29.5 & 30.0 & 30.0 & 30.0 & 30.0 & 29.5 & 30.5 & 30.0 & 30.5 \\
triviaqa & 92.1 & 92.1 & 92.2 & 92.2 & 92.1 & 91.6 & 92.1 & 91.8 & 92.2 \\
samsum & 40.9 & 40.4 & 40.9 & 40.3 & 40.6 & 40.5 & 40.0 & 40.7 & 40.6 \\
passage\_retrieval\_en & 100.0 & 100.0 & 100.0 & 100.0 & 99.5 & 100.0 & 98.5 & 99.5 & 100.0 \\
\midrule
\textbf{Avg} & \textbf{47.0} & \textbf{47.1} & \textbf{47.2} & \textbf{47.1} & \textbf{47.0} & \textbf{46.9} & \textbf{47.0} & \textbf{46.9} & \textbf{46.9} \\
\end{longtable}
\endgroup

\subsubsection{RULER-16K}
\begingroup\scriptsize\setlength{\tabcolsep}{2.6pt}
\begin{longtable}{lrrrrrrrrr}
\caption{\textbf{RULER-16K per task} (Llama-3.1-8B), all methods at each budget $b$.}\label{tab:appC-ruler16}\\
\toprule
Subtask & FullKV & Oracle & \method{} & r4i4 & r2i4 & Quest & KVzip & ShadowKV & RocketKV \\
\midrule\endfirsthead
\multicolumn{10}{c}{\tablename\ \thetable\ (cont.)}\\
\toprule Subtask & FullKV & Oracle & \method{} & r4i4 & r2i4 & Quest & KVzip & ShadowKV & RocketKV \\
\midrule\endhead
\bottomrule\endfoot
\midrule
\multicolumn{10}{l}{\emph{budget } $b=64$}\\
\midrule
NIAH-S1 & 100.0 & 100.0 & 100.0 & 100.0 & 100.0 & 100.0 & 100.0 & N/A & 100.0 \\
NIAH-S2 & 100.0 & 100.0 & 100.0 & 100.0 & 98.0 & 74.0 & 0.0 & N/A & 100.0 \\
NIAH-S3 & 100.0 & 100.0 & 100.0 & 100.0 & 96.0 & 4.0 & 0.0 & N/A & 100.0 \\
NIAH-MK1 & 100.0 & 100.0 & 100.0 & 98.0 & 96.0 & 88.0 & 0.0 & N/A & 100.0 \\
NIAH-MK2 & 100.0 & 98.0 & 98.0 & 78.0 & 80.0 & 12.0 & 0.0 & N/A & 98.0 \\
NIAH-MK3 & 98.0 & 70.0 & 44.0 & 14.0 & 4.0 & 0.0 & 0.0 & N/A & 86.0 \\
NIAH-MV & 98.5 & 79.5 & 82.5 & 77.0 & 67.0 & 26.5 & 0.0 & N/A & 44.0 \\
NIAH-MQ & 99.0 & 90.0 & 87.5 & 81.5 & 74.5 & 25.5 & 0.0 & N/A & 37.5 \\
VT & 100.0 & 82.8 & 78.4 & 76.8 & 76.0 & 59.6 & 88.4 & N/A & 20.8 \\
CWE & 87.4 & 5.0 & 6.6 & 6.0 & 3.0 & 6.0 & 0.2 & N/A & 0.8 \\
FWE & 96.7 & 75.3 & 78.0 & 76.0 & 72.0 & 44.0 & 0.0 & N/A & 76.7 \\
QA-1 & 84.0 & 82.0 & 82.0 & 82.0 & 82.0 & 44.0 & 56.0 & N/A & 84.0 \\
QA-2 & 62.0 & 58.0 & 58.0 & 52.0 & 50.0 & 30.0 & 38.0 & N/A & 54.0 \\
\midrule
\textbf{Avg} & \textbf{94.3} & \textbf{80.0} & \textbf{78.1} & \textbf{72.4} & \textbf{69.1} & \textbf{39.5} & \textbf{21.7} & \textbf{N/A} & \textbf{69.4} \\
\midrule
\multicolumn{10}{l}{\emph{budget } $b=128$}\\
\midrule
NIAH-S1 & 100.0 & 100.0 & 100.0 & 100.0 & 100.0 & 100.0 & 100.0 & N/A & 100.0 \\
NIAH-S2 & 100.0 & 100.0 & 100.0 & 100.0 & 100.0 & 88.0 & 2.0 & N/A & 100.0 \\
NIAH-S3 & 100.0 & 100.0 & 100.0 & 100.0 & 100.0 & 32.0 & 0.0 & N/A & 100.0 \\
NIAH-MK1 & 100.0 & 100.0 & 100.0 & 100.0 & 100.0 & 98.0 & 2.0 & N/A & 100.0 \\
NIAH-MK2 & 100.0 & 100.0 & 100.0 & 96.0 & 96.0 & 40.0 & 0.0 & N/A & 100.0 \\
NIAH-MK3 & 98.0 & 100.0 & 100.0 & 78.0 & 62.0 & 0.0 & 0.0 & N/A & 100.0 \\
NIAH-MV & 98.5 & 96.0 & 96.5 & 96.5 & 90.0 & 65.0 & 0.0 & N/A & 84.5 \\
NIAH-MQ & 99.0 & 97.5 & 97.0 & 97.5 & 96.0 & 59.0 & 0.0 & N/A & 93.0 \\
VT & 100.0 & 95.6 & 94.0 & 96.4 & 98.0 & 81.2 & 96.8 & N/A & 89.2 \\
CWE & 87.4 & 19.4 & 19.2 & 13.6 & 13.2 & 7.8 & 0.2 & N/A & 5.8 \\
FWE & 96.7 & 82.7 & 80.0 & 80.0 & 82.0 & 62.7 & 0.0 & N/A & 86.7 \\
QA-1 & 84.0 & 86.0 & 88.0 & 88.0 & 88.0 & 76.0 & 62.0 & N/A & 88.0 \\
QA-2 & 62.0 & 60.0 & 62.0 & 62.0 & 60.0 & 42.0 & 38.0 & N/A & 60.0 \\
\midrule
\textbf{Avg} & \textbf{94.3} & \textbf{87.5} & \textbf{87.4} & \textbf{85.2} & \textbf{83.5} & \textbf{57.8} & \textbf{23.2} & \textbf{N/A} & \textbf{85.2} \\
\midrule
\multicolumn{10}{l}{\emph{budget } $b=256$}\\
\midrule
NIAH-S1 & 100.0 & 100.0 & 100.0 & 100.0 & 100.0 & 100.0 & 100.0 & N/A & 100.0 \\
NIAH-S2 & 100.0 & 100.0 & 100.0 & 100.0 & 100.0 & 98.0 & 16.0 & N/A & 100.0 \\
NIAH-S3 & 100.0 & 100.0 & 100.0 & 100.0 & 100.0 & 58.0 & 0.0 & N/A & 100.0 \\
NIAH-MK1 & 100.0 & 100.0 & 100.0 & 100.0 & 100.0 & 98.0 & 16.0 & N/A & 100.0 \\
NIAH-MK2 & 100.0 & 100.0 & 100.0 & 100.0 & 100.0 & 80.0 & 46.0 & N/A & 100.0 \\
NIAH-MK3 & 98.0 & 100.0 & 100.0 & 100.0 & 90.0 & 0.0 & 0.0 & N/A & 100.0 \\
NIAH-MV & 98.5 & 98.0 & 98.0 & 97.5 & 96.5 & 88.0 & 3.0 & N/A & 97.5 \\
NIAH-MQ & 99.0 & 99.0 & 98.0 & 99.5 & 99.0 & 79.5 & 7.5 & N/A & 98.0 \\
VT & 100.0 & 99.6 & 99.6 & 99.6 & 98.8 & 84.8 & 98.0 & N/A & 95.6 \\
CWE & 87.4 & 43.4 & 46.0 & 35.2 & 34.4 & 21.6 & 0.2 & N/A & 16.4 \\
FWE & 96.7 & 90.0 & 86.7 & 88.7 & 88.7 & 75.3 & 1.3 & N/A & 84.0 \\
QA-1 & 84.0 & 88.0 & 88.0 & 88.0 & 88.0 & 86.0 & 64.0 & N/A & 84.0 \\
QA-2 & 62.0 & 60.0 & 62.0 & 62.0 & 60.0 & 52.0 & 38.0 & N/A & 62.0 \\
\midrule
\textbf{Avg} & \textbf{94.3} & \textbf{90.6} & \textbf{90.6} & \textbf{90.0} & \textbf{88.9} & \textbf{70.9} & \textbf{30.0} & \textbf{N/A} & \textbf{87.5} \\
\midrule
\multicolumn{10}{l}{\emph{budget } $b=512$}\\
\midrule
NIAH-S1 & 100.0 & 100.0 & 100.0 & 100.0 & 100.0 & 100.0 & 100.0 & 100.0 & 100.0 \\
NIAH-S2 & 100.0 & 100.0 & 100.0 & 100.0 & 100.0 & 98.0 & 94.0 & 100.0 & 100.0 \\
NIAH-S3 & 100.0 & 100.0 & 100.0 & 100.0 & 100.0 & 82.0 & 50.0 & 100.0 & 98.0 \\
NIAH-MK1 & 100.0 & 100.0 & 100.0 & 100.0 & 100.0 & 100.0 & 94.0 & 100.0 & 100.0 \\
NIAH-MK2 & 100.0 & 100.0 & 100.0 & 100.0 & 100.0 & 96.0 & 100.0 & 98.0 & 100.0 \\
NIAH-MK3 & 98.0 & 98.0 & 98.0 & 98.0 & 98.0 & 18.0 & 68.0 & 78.0 & 100.0 \\
NIAH-MV & 98.5 & 98.5 & 97.0 & 97.5 & 97.0 & 94.5 & 38.5 & 91.5 & 98.0 \\
NIAH-MQ & 99.0 & 98.5 & 98.0 & 98.5 & 98.5 & 92.0 & 65.0 & 95.0 & 98.5 \\
VT & 100.0 & 99.6 & 99.6 & 99.6 & 99.6 & 94.8 & 98.4 & 92.4 & 98.4 \\
CWE & 87.4 & 58.6 & 57.2 & 53.2 & 55.4 & 30.0 & 0.2 & 16.6 & 36.4 \\
FWE & 96.7 & 90.0 & 90.0 & 91.3 & 94.0 & 81.3 & 37.3 & 83.3 & 88.0 \\
QA-1 & 84.0 & 86.0 & 88.0 & 88.0 & 88.0 & 86.0 & 76.0 & 84.0 & 86.0 \\
QA-2 & 62.0 & 60.0 & 60.0 & 62.0 & 62.0 & 58.0 & 52.0 & 60.0 & 62.0 \\
\midrule
\textbf{Avg} & \textbf{94.3} & \textbf{91.5} & \textbf{91.4} & \textbf{91.4} & \textbf{91.7} & \textbf{79.3} & \textbf{67.2} & \textbf{84.5} & \textbf{89.6} \\
\midrule
\multicolumn{10}{l}{\emph{budget } $b=1024$}\\
\midrule
NIAH-S1 & 100.0 & 100.0 & 100.0 & 100.0 & 100.0 & 100.0 & 100.0 & 100.0 & 100.0 \\
NIAH-S2 & 100.0 & 100.0 & 100.0 & 100.0 & 100.0 & 96.0 & 100.0 & 100.0 & 100.0 \\
NIAH-S3 & 100.0 & 100.0 & 100.0 & 100.0 & 100.0 & 94.0 & 100.0 & 100.0 & 100.0 \\
NIAH-MK1 & 100.0 & 100.0 & 100.0 & 100.0 & 100.0 & 100.0 & 100.0 & 100.0 & 100.0 \\
NIAH-MK2 & 100.0 & 100.0 & 100.0 & 100.0 & 100.0 & 98.0 & 100.0 & 100.0 & 100.0 \\
NIAH-MK3 & 98.0 & 98.0 & 98.0 & 98.0 & 98.0 & 44.0 & 98.0 & 94.0 & 98.0 \\
NIAH-MV & 98.5 & 98.0 & 98.0 & 98.0 & 98.5 & 97.5 & 77.5 & 94.0 & 98.0 \\
NIAH-MQ & 99.0 & 99.5 & 99.5 & 99.5 & 98.5 & 95.5 & 92.5 & 98.0 & 99.0 \\
VT & 100.0 & 100.0 & 100.0 & 100.0 & 98.4 & 96.4 & 98.8 & 96.8 & 99.2 \\
CWE & 87.4 & 69.2 & 72.2 & 65.8 & 70.4 & 50.0 & 3.8 & 59.0 & 54.6 \\
FWE & 96.7 & 93.3 & 93.3 & 92.0 & 94.0 & 84.0 & 76.7 & 93.3 & 88.7 \\
QA-1 & 84.0 & 86.0 & 88.0 & 88.0 & 86.0 & 82.0 & 88.0 & 86.0 & 86.0 \\
QA-2 & 62.0 & 64.0 & 60.0 & 60.0 & 60.0 & 54.0 & 56.0 & 60.0 & 62.0 \\
\midrule
\textbf{Avg} & \textbf{94.3} & \textbf{92.9} & \textbf{93.0} & \textbf{92.4} & \textbf{92.6} & \textbf{84.0} & \textbf{83.9} & \textbf{90.9} & \textbf{91.2} \\
\midrule
\multicolumn{10}{l}{\emph{budget } $b=2048$}\\
\midrule
NIAH-S1 & 100.0 & 100.0 & 100.0 & 100.0 & 100.0 & 100.0 & 100.0 & 100.0 & 100.0 \\
NIAH-S2 & 100.0 & 100.0 & 100.0 & 100.0 & 100.0 & 100.0 & 100.0 & 100.0 & 100.0 \\
NIAH-S3 & 100.0 & 100.0 & 100.0 & 100.0 & 100.0 & 98.0 & 100.0 & 100.0 & 100.0 \\
NIAH-MK1 & 100.0 & 100.0 & 100.0 & 100.0 & 100.0 & 100.0 & 100.0 & 100.0 & 100.0 \\
NIAH-MK2 & 100.0 & 100.0 & 100.0 & 100.0 & 100.0 & 100.0 & 100.0 & 100.0 & 100.0 \\
NIAH-MK3 & 98.0 & 98.0 & 98.0 & 98.0 & 98.0 & 82.0 & 100.0 & 98.0 & 98.0 \\
NIAH-MV & 98.5 & 98.5 & 98.0 & 98.0 & 98.0 & 98.0 & 97.0 & 92.5 & 98.0 \\
NIAH-MQ & 99.0 & 99.5 & 98.5 & 98.5 & 98.5 & 98.0 & 98.0 & 98.0 & 99.0 \\
VT & 100.0 & 100.0 & 100.0 & 100.0 & 100.0 & 97.2 & 100.0 & 100.0 & 99.6 \\
CWE & 87.4 & 80.4 & 79.6 & 77.6 & 78.0 & 67.6 & 50.4 & 73.8 & 73.2 \\
FWE & 96.7 & 94.0 & 94.0 & 94.0 & 95.3 & 89.3 & 92.0 & 92.7 & 92.7 \\
QA-1 & 84.0 & 86.0 & 88.0 & 86.0 & 86.0 & 84.0 & 88.0 & 86.0 & 86.0 \\
QA-2 & 62.0 & 62.0 & 62.0 & 62.0 & 60.0 & 60.0 & 62.0 & 62.0 & 64.0 \\
\midrule
\textbf{Avg} & \textbf{94.3} & \textbf{93.7} & \textbf{93.7} & \textbf{93.4} & \textbf{93.4} & \textbf{90.3} & \textbf{91.3} & \textbf{92.5} & \textbf{93.1} \\
\end{longtable}
\endgroup

\subsubsection{RULER-32K}
\begingroup\scriptsize\setlength{\tabcolsep}{2.6pt}
\begin{longtable}{lrrrrrrrrr}
\caption{\textbf{RULER-32K per task} (Llama-3.1-8B), all methods at each budget $b$.}\label{tab:appC-ruler32}\\
\toprule
Subtask & FullKV & Oracle & \method{} & r4i4 & r2i4 & Quest & KVzip & ShadowKV & RocketKV \\
\midrule\endfirsthead
\multicolumn{10}{c}{\tablename\ \thetable\ (cont.)}\\
\toprule Subtask & FullKV & Oracle & \method{} & r4i4 & r2i4 & Quest & KVzip & ShadowKV & RocketKV \\
\midrule\endhead
\bottomrule\endfoot
\midrule
\multicolumn{10}{l}{\emph{budget } $b=64$}\\
\midrule
NIAH-S1 & 100.0 & 100.0 & 100.0 & 100.0 & 100.0 & 100.0 & 100.0 & N/A & 100.0 \\
NIAH-S2 & 100.0 & 100.0 & 100.0 & 100.0 & 100.0 & 78.0 & 0.0 & N/A & 100.0 \\
NIAH-S3 & 100.0 & 100.0 & 100.0 & 100.0 & 92.0 & 0.0 & 0.0 & N/A & 100.0 \\
NIAH-MK1 & 100.0 & 100.0 & 98.0 & 96.0 & 94.0 & 82.0 & 0.0 & N/A & 98.0 \\
NIAH-MK2 & 100.0 & 92.0 & 86.0 & 60.0 & 64.0 & 10.0 & 0.0 & N/A & 90.0 \\
NIAH-MK3 & 100.0 & 20.0 & 20.0 & 0.0 & 0.0 & 0.0 & 0.0 & N/A & 10.0 \\
NIAH-MV & 99.5 & 80.0 & 79.5 & 77.5 & 67.0 & 22.5 & 0.0 & N/A & 26.0 \\
NIAH-MQ & 99.5 & 89.5 & 85.0 & 86.5 & 76.5 & 25.5 & 0.0 & N/A & 2.0 \\
VT & 100.0 & 80.4 & 80.4 & 75.2 & 72.0 & 66.8 & 80.0 & N/A & 11.6 \\
CWE & 28.8 & 1.0 & 1.8 & 0.8 & 0.2 & 2.6 & 0.0 & N/A & 0.4 \\
FWE & 82.7 & 58.7 & 61.3 & 56.7 & 55.3 & 38.0 & 0.0 & N/A & 56.0 \\
QA-1 & 90.0 & 88.0 & 88.0 & 88.0 & 88.0 & 48.0 & 62.0 & N/A & 86.0 \\
QA-2 & 54.0 & 50.0 & 50.0 & 50.0 & 52.0 & 28.0 & 34.0 & N/A & 44.0 \\
\midrule
\textbf{Avg} & \textbf{88.8} & \textbf{73.8} & \textbf{73.1} & \textbf{68.5} & \textbf{66.2} & \textbf{38.6} & \textbf{21.2} & \textbf{N/A} & \textbf{55.7} \\
\midrule
\multicolumn{10}{l}{\emph{budget } $b=128$}\\
\midrule
NIAH-S1 & 100.0 & 100.0 & 100.0 & 100.0 & 100.0 & 100.0 & 100.0 & N/A & 100.0 \\
NIAH-S2 & 100.0 & 100.0 & 100.0 & 100.0 & 100.0 & 92.0 & 0.0 & N/A & 100.0 \\
NIAH-S3 & 100.0 & 100.0 & 100.0 & 100.0 & 100.0 & 28.0 & 0.0 & N/A & 100.0 \\
NIAH-MK1 & 100.0 & 100.0 & 100.0 & 100.0 & 100.0 & 100.0 & 0.0 & N/A & 100.0 \\
NIAH-MK2 & 100.0 & 100.0 & 100.0 & 96.0 & 94.0 & 38.0 & 0.0 & N/A & 100.0 \\
NIAH-MK3 & 100.0 & 90.0 & 84.0 & 40.0 & 20.0 & 0.0 & 0.0 & N/A & 98.0 \\
NIAH-MV & 99.5 & 95.0 & 96.0 & 95.0 & 91.5 & 49.5 & 0.0 & N/A & 84.0 \\
NIAH-MQ & 99.5 & 96.5 & 97.5 & 98.0 & 95.0 & 53.5 & 0.0 & N/A & 95.0 \\
VT & 100.0 & 89.6 & 90.4 & 92.8 & 89.2 & 75.2 & 85.2 & N/A & 80.0 \\
CWE & 28.8 & 4.0 & 1.8 & 2.6 & 0.8 & 3.6 & 0.0 & N/A & 0.2 \\
FWE & 82.7 & 68.7 & 68.7 & 73.3 & 71.3 & 53.3 & 0.0 & N/A & 65.3 \\
QA-1 & 90.0 & 90.0 & 90.0 & 88.0 & 90.0 & 72.0 & 62.0 & N/A & 90.0 \\
QA-2 & 54.0 & 52.0 & 56.0 & 56.0 & 58.0 & 32.0 & 38.0 & N/A & 56.0 \\
\midrule
\textbf{Avg} & \textbf{88.8} & \textbf{83.5} & \textbf{83.4} & \textbf{80.1} & \textbf{77.7} & \textbf{53.6} & \textbf{21.9} & \textbf{N/A} & \textbf{82.2} \\
\midrule
\multicolumn{10}{l}{\emph{budget } $b=256$}\\
\midrule
NIAH-S1 & 100.0 & 100.0 & 100.0 & 100.0 & 100.0 & 100.0 & 100.0 & N/A & 100.0 \\
NIAH-S2 & 100.0 & 100.0 & 100.0 & 100.0 & 100.0 & 92.0 & 2.0 & N/A & 100.0 \\
NIAH-S3 & 100.0 & 100.0 & 100.0 & 100.0 & 100.0 & 44.0 & 0.0 & N/A & 100.0 \\
NIAH-MK1 & 100.0 & 100.0 & 100.0 & 100.0 & 100.0 & 100.0 & 2.0 & N/A & 100.0 \\
NIAH-MK2 & 100.0 & 100.0 & 100.0 & 100.0 & 100.0 & 58.0 & 6.0 & N/A & 100.0 \\
NIAH-MK3 & 100.0 & 100.0 & 96.0 & 92.0 & 70.0 & 0.0 & 0.0 & N/A & 100.0 \\
NIAH-MV & 99.5 & 97.5 & 98.5 & 97.5 & 97.5 & 76.5 & 0.0 & N/A & 97.0 \\
NIAH-MQ & 99.5 & 99.0 & 99.0 & 99.0 & 99.5 & 81.0 & 2.0 & N/A & 98.0 \\
VT & 100.0 & 93.6 & 92.4 & 92.4 & 91.6 & 84.4 & 91.2 & N/A & 91.6 \\
CWE & 28.8 & 5.6 & 8.4 & 3.4 & 4.0 & 4.0 & 0.0 & N/A & 1.6 \\
FWE & 82.7 & 71.3 & 71.3 & 73.3 & 72.7 & 66.0 & 0.0 & N/A & 67.3 \\
QA-1 & 90.0 & 90.0 & 90.0 & 90.0 & 90.0 & 80.0 & 64.0 & N/A & 90.0 \\
QA-2 & 54.0 & 54.0 & 56.0 & 56.0 & 56.0 & 48.0 & 36.0 & N/A & 54.0 \\
\midrule
\textbf{Avg} & \textbf{88.8} & \textbf{85.5} & \textbf{85.5} & \textbf{84.9} & \textbf{83.2} & \textbf{64.1} & \textbf{23.3} & \textbf{N/A} & \textbf{84.6} \\
\midrule
\multicolumn{10}{l}{\emph{budget } $b=512$}\\
\midrule
NIAH-S1 & 100.0 & 100.0 & 100.0 & 100.0 & 100.0 & 100.0 & 100.0 & 100.0 & 100.0 \\
NIAH-S2 & 100.0 & 100.0 & 100.0 & 100.0 & 100.0 & 98.0 & 42.0 & 100.0 & 100.0 \\
NIAH-S3 & 100.0 & 100.0 & 100.0 & 100.0 & 100.0 & 70.0 & 2.0 & 100.0 & 100.0 \\
NIAH-MK1 & 100.0 & 100.0 & 100.0 & 100.0 & 100.0 & 100.0 & 30.0 & 98.0 & 100.0 \\
NIAH-MK2 & 100.0 & 100.0 & 100.0 & 100.0 & 100.0 & 88.0 & 46.0 & 96.0 & 100.0 \\
NIAH-MK3 & 100.0 & 100.0 & 100.0 & 98.0 & 98.0 & 0.0 & 0.0 & 30.0 & 100.0 \\
NIAH-MV & 99.5 & 98.5 & 98.5 & 98.5 & 98.5 & 92.5 & 8.0 & 90.0 & 98.5 \\
NIAH-MQ & 99.5 & 97.5 & 98.0 & 98.5 & 99.5 & 90.0 & 25.0 & 95.5 & 99.5 \\
VT & 100.0 & 96.4 & 94.0 & 95.2 & 94.0 & 84.8 & 92.8 & 86.0 & 92.4 \\
CWE & 28.8 & 14.4 & 12.6 & 9.6 & 10.4 & 6.2 & 0.0 & 1.0 & 5.4 \\
FWE & 82.7 & 70.7 & 70.7 & 70.7 & 71.3 & 74.0 & 0.0 & 71.3 & 70.0 \\
QA-1 & 90.0 & 90.0 & 90.0 & 90.0 & 90.0 & 86.0 & 70.0 & 90.0 & 90.0 \\
QA-2 & 54.0 & 56.0 & 56.0 & 58.0 & 56.0 & 48.0 & 40.0 & 52.0 & 56.0 \\
\midrule
\textbf{Avg} & \textbf{88.8} & \textbf{86.4} & \textbf{86.1} & \textbf{86.0} & \textbf{86.0} & \textbf{72.1} & \textbf{35.1} & \textbf{77.7} & \textbf{85.5} \\
\midrule
\multicolumn{10}{l}{\emph{budget } $b=1024$}\\
\midrule
NIAH-S1 & 100.0 & 100.0 & 100.0 & 100.0 & 100.0 & 100.0 & 100.0 & 100.0 & 100.0 \\
NIAH-S2 & 100.0 & 100.0 & 100.0 & 100.0 & 100.0 & 98.0 & 96.0 & 100.0 & 100.0 \\
NIAH-S3 & 100.0 & 100.0 & 100.0 & 100.0 & 100.0 & 86.0 & 66.0 & 100.0 & 100.0 \\
NIAH-MK1 & 100.0 & 100.0 & 100.0 & 100.0 & 100.0 & 100.0 & 96.0 & 100.0 & 100.0 \\
NIAH-MK2 & 100.0 & 100.0 & 100.0 & 100.0 & 100.0 & 98.0 & 98.0 & 100.0 & 100.0 \\
NIAH-MK3 & 100.0 & 100.0 & 100.0 & 100.0 & 96.0 & 22.0 & 42.0 & 92.0 & 100.0 \\
NIAH-MV & 99.5 & 99.0 & 99.5 & 99.5 & 99.0 & 96.0 & 46.0 & 98.5 & 99.0 \\
NIAH-MQ & 99.5 & 98.0 & 97.5 & 97.0 & 99.0 & 94.0 & 76.5 & 98.5 & 99.0 \\
VT & 100.0 & 96.8 & 97.2 & 97.2 & 97.2 & 88.8 & 94.8 & 93.2 & 94.8 \\
CWE & 28.8 & 18.4 & 16.0 & 13.8 & 17.4 & 12.8 & 0.0 & 14.6 & 15.0 \\
FWE & 82.7 & 72.7 & 73.3 & 72.0 & 71.3 & 76.0 & 20.0 & 70.7 & 66.7 \\
QA-1 & 90.0 & 90.0 & 90.0 & 90.0 & 90.0 & 86.0 & 84.0 & 90.0 & 90.0 \\
QA-2 & 54.0 & 56.0 & 60.0 & 60.0 & 60.0 & 50.0 & 44.0 & 54.0 & 56.0 \\
\midrule
\textbf{Avg} & \textbf{88.8} & \textbf{87.0} & \textbf{87.2} & \textbf{86.9} & \textbf{86.9} & \textbf{77.5} & \textbf{66.4} & \textbf{85.5} & \textbf{86.2} \\
\midrule
\multicolumn{10}{l}{\emph{budget } $b=2048$}\\
\midrule
NIAH-S1 & 100.0 & 100.0 & 100.0 & 100.0 & 100.0 & 100.0 & 100.0 & 100.0 & 100.0 \\
NIAH-S2 & 100.0 & 100.0 & 100.0 & 100.0 & 100.0 & 96.0 & 100.0 & 100.0 & 100.0 \\
NIAH-S3 & 100.0 & 100.0 & 100.0 & 100.0 & 100.0 & 90.0 & 100.0 & 100.0 & 100.0 \\
NIAH-MK1 & 100.0 & 100.0 & 100.0 & 100.0 & 100.0 & 100.0 & 100.0 & 100.0 & 100.0 \\
NIAH-MK2 & 100.0 & 100.0 & 100.0 & 100.0 & 100.0 & 98.0 & 100.0 & 100.0 & 100.0 \\
NIAH-MK3 & 100.0 & 100.0 & 98.0 & 98.0 & 96.0 & 64.0 & 96.0 & 100.0 & 100.0 \\
NIAH-MV & 99.5 & 99.5 & 99.5 & 99.5 & 99.5 & 98.0 & 76.0 & 99.0 & 99.5 \\
NIAH-MQ & 99.5 & 99.0 & 98.5 & 99.0 & 99.0 & 97.5 & 91.0 & 99.0 & 98.0 \\
VT & 100.0 & 98.0 & 98.8 & 98.4 & 98.4 & 93.6 & 96.0 & 96.8 & 96.8 \\
CWE & 28.8 & 25.2 & 25.0 & 19.8 & 21.6 & 13.0 & 0.2 & 20.0 & 32.2 \\
FWE & 82.7 & 74.0 & 74.7 & 75.3 & 74.7 & 80.7 & 63.3 & 72.7 & 71.3 \\
QA-1 & 90.0 & 90.0 & 90.0 & 90.0 & 90.0 & 90.0 & 86.0 & 90.0 & 90.0 \\
QA-2 & 54.0 & 58.0 & 60.0 & 60.0 & 60.0 & 56.0 & 54.0 & 54.0 & 56.0 \\
\midrule
\textbf{Avg} & \textbf{88.8} & \textbf{88.0} & \textbf{88.0} & \textbf{87.7} & \textbf{87.6} & \textbf{82.8} & \textbf{81.7} & \textbf{87.0} & \textbf{88.0} \\
\end{longtable}
\endgroup

\subsection{Long-form reasoning}
Qwen3-4B (thinking on); MATH-500 ($16384$-token cap) and AIME 2026 I+II ($30$ problems), avg@$4$ sampling (seeds $0$--$3$). Retrieval-only ports (KVzip, ShadowKV) carry no reasoning data. RocketKV is omitted here because its stage-1 eviction is inert on long generation: the prompt budget $\mathrm{PB}=\mathrm{tcb}-\mathrm{max\_new}$ approaches the prompt length, so almost nothing is evicted and residency is prompt plus all generated tokens. Its READ budget is not the reason: RocketKV sizes its chunk size and per-chunk dimension against prompt plus generation length, so decode traffic stays at approximately the nominal budget ($0.93$--$0.99\times$ measured on our cells), matching the method's own accounting.

\subsubsection{MATH-500}
We evaluate MATH-500 with a $16384$-token generation cap, the reasoning-model
protocol of R-KV~\citep{cai2025rkv} (which reports no accuracy gain beyond it);
the common $4096$ cap is an instruction-tuned convention that truncates $42\%$ of
Qwen3-4B's thinking chains and depresses the FullKV reference to a spurious
truncation floor rather than the model's true ability. At $16384$ only $2$--$4\%$
of chains are capped and FullKV reaches $94.0$. As on AIME26, \method{} tracks the
exact-LSE Oracle within a couple of points across the sweep and reaches FullKV
once the budget clears the task floor ($94.0{=}94.0$ at $b{=}1024$), while every deployable baseline collapses at tight
budgets: even R-KV, a strong reasoning-specific baseline, falls to $5$--$8\%$
at $b{\le}128$ (vs.\ \method{} at $54$--$80\%$) and recovers only with a large
budget. Baselines are the same parity-gated faithful ports at their published
configs as in the AIME26 comparison.
\begin{table}[htbp]\centering
\caption{\textbf{MATH-500} accuracy (Qwen3-4B, thinking on, avg@$4$) vs.\ per-(layer, KV-head) token budget $b$, all reasoning methods. MATH-500 (16384-token generation cap). \method{} tracks the exact-LSE Oracle across the sweep and approaches FullKV as the budget clears the task floor; every deployable baseline collapses at tight budgets and only partly recovers; three of the four (R-KV, TriAttention, LazyEviction) evict irreversibly, so their collapse is partly a property of that class rather than of estimator quality (\S\ref{sec:eval-reason}). Baselines are parity-gated faithful ports at their published configs.}\label{tab:appC-math500}
\small\setlength{\tabcolsep}{4.5pt}
\begin{tabular}{lrrrrrrrrr}
\toprule
$b$ & FullKV & Oracle & \method{} & r4i4 & r2i4 & Quest & R-KV & TriAtt & LazyEvict \\
\midrule
64 & 94.0 & 56.0 & 54.5 & 55.0 & 53.0 & 1.5 & 5.0 & 11.5 & 0.5 \\
128 & 94.0 & 78.0 & 80.5 & 81.5 & 82.5 & 3.0 & 8.0 & 20.5 & 1.5 \\
256 & 94.0 & 92.0 & 91.0 & 90.5 & 90.5 & 9.0 & 31.5 & 44.5 & 19.5 \\
512 & 94.0 & 94.0 & 92.0 & 92.0 & 92.5 & 43.5 & 60.0 & 66.0 & 45.0 \\
1024 & 94.0 & 94.0 & 94.0 & 91.5 & 93.5 & 68.5 & 77.0 & 81.5 & 75.0 \\
2048 & 94.0 & 93.5 & 93.5 & 93.5 & 93.5 & 87.5 & 90.5 & 91.5 & 92.0 \\
\bottomrule
\end{tabular}
\end{table}

\subsubsection{AIME26}
AIME26 here is the combined AIME 2026 I and II set~\citep{aime2026}
($30$ problems; AIME~I on
Feb~5 and AIME~II on Feb~11, $2026$), which post-dates Qwen3's public release
and therefore could not have appeared in Qwen3's public release-era
evaluation suite. Every deployable baseline here is a parity-gated faithful port at its \emph{published}
config: R-KV at the reported reasoning hyperparameters ($\lambda{=}0.1$, buffer $128$)
and TriAttention with the prompt-protecting HF-reasoning selection and GQA
max-over-group. \method{} closely tracks the exact-LSE Oracle throughout the budget
sweep and approaches FullKV once the task's intrinsic budget floor is reached, while
all baselines collapse at tight budgets and only partly recover.
\begin{table}[htbp]\centering
\caption{\textbf{AIME26} accuracy (Qwen3-4B, thinking on, avg@$4$) vs.\ per-(layer, KV-head) token budget $b$, all reasoning methods. AIME26 is the combined AIME 2026 I+II set ($30$ problems), post-dating Qwen3's public release. \method{} closely tracks the exact-LSE Oracle across the budget sweep and approaches FullKV once the task's intrinsic budget floor is reached; every deployable baseline collapses at tight budgets and only partly recovers; three of the four (R-KV, TriAttention, LazyEviction) evict irreversibly, so their collapse is partly a property of that class rather than of estimator quality (\S\ref{sec:eval-reason}). Rank ordering is not resolved at this sample size ($30$ problems, avg@$4$). Baselines are parity-gated faithful ports at their published configs.}\label{tab:appC-aime26}
\small\setlength{\tabcolsep}{4.5pt}
\begin{tabular}{lrrrrrrrrr}
\toprule
$b$ & FullKV & Oracle & \method{} & r4i4 & r2i4 & Quest & R-KV & TriAtt & LazyEvict \\
\midrule
64 & 59.2 & 0.0 & 0.0 & 0.0 & 0.8 & 0.0 & 0.0 & 0.0 & 0.0 \\
128 & 59.2 & 28.3 & 25.8 & 26.7 & 25.8 & 0.0 & 0.0 & 0.0 & 0.0 \\
256 & 59.2 & 52.5 & 51.7 & 55.0 & 54.2 & 0.0 & 0.0 & 0.0 & 0.0 \\
512 & 59.2 & 62.5 & 60.8 & 64.2 & 57.5 & 0.0 & 0.0 & 3.3 & 1.7 \\
1024 & 59.2 & 62.5 & 65.0 & 64.2 & 65.0 & 5.8 & 4.2 & 22.5 & 6.7 \\
2048 & 59.2 & 59.2 & 63.3 & 65.8 & 60.8 & 33.3 & 24.2 & 45.0 & 34.2 \\
\bottomrule
\end{tabular}
\end{table}

\subsection{Long context: InfiniteBench}
GLM-4-9B-Chat-1M, $11$ tasks, seeded subset of $50$ records per task (\texttt{math\_calc} excluded: a prompting artifact plus $30$K-token generation the baselines cannot serve fairly). \emph{exact-LSE} is the exact-LSE selection; on this $100$K-context benchmark it reaches FullKV parity ($43.9$ vs $43.0$ at $b{=}2048$) while every baseline trails.

\begingroup\scriptsize\setlength{\tabcolsep}{4.0pt}
\providecommand{\cipm}[2]{#1\,{\tiny$\pm$#2}}
\begin{longtable}{lrrrrrr}
\caption{\textbf{InfiniteBench per task} (GLM-4-9B-Chat-1M, \texttt{score}, seeded subset of $50$ records per task), all methods at each budget $b$; each cell is the mean with its $95\%$ bootstrap CI half-width ($\pm$) over the $50$ records. \emph{exact-LSE} is the exact-LSE selection (the retrieval tables' Oracle); ShadowKV has no $b\le256$ (its $416$-token fixed block cannot fit). KVzip is excluded (Llama-family reconstruction scorer).}\label{tab:appC-infinitebench}\\
\toprule
Task & FullKV & exact-LSE & \method{} & RocketKV & ShadowKV & Quest \\
\midrule\endfirsthead
\multicolumn{7}{c}{\tablename\ \thetable\ (cont.)}\\
\toprule Task & FullKV & exact-LSE & \method{} & RocketKV & ShadowKV & Quest \\
\midrule\endhead
\bottomrule\endfoot
\midrule
\multicolumn{7}{l}{\emph{budget } $b=64$}\\
\midrule
passkey & \cipm{100.0}{0.0} & \cipm{100.0}{0.0} & \cipm{100.0}{0.0} & \cipm{86.0}{9.6} & N/A & \cipm{100.0}{0.0} \\
num-str & \cipm{100.0}{0.0} & \cipm{98.0}{3.9} & \cipm{100.0}{0.0} & \cipm{68.0}{12.8} & N/A & \cipm{70.0}{12.7} \\
kv-retr & \cipm{28.0}{12.7} & \cipm{2.0}{3.9} & \cipm{0.0}{0.0} & \cipm{6.0}{6.6} & N/A & \cipm{0.0}{0.0} \\
dialog-qa & \cipm{34.0}{12.9} & \cipm{20.0}{10.9} & \cipm{16.0}{10.3} & \cipm{2.0}{3.9} & N/A & \cipm{12.0}{8.9} \\
book-sum & \cipm{27.6}{2.1} & \cipm{18.0}{2.3} & \cipm{16.4}{2.3} & \cipm{14.2}{1.8} & N/A & \cipm{14.9}{1.1} \\
book-choice & \cipm{82.0}{10.6} & \cipm{86.0}{9.5} & \cipm{82.0}{10.4} & \cipm{78.0}{11.5} & N/A & \cipm{60.0}{13.5} \\
book-qa-en & \cipm{16.2}{5.7} & \cipm{19.2}{7.2} & \cipm{18.2}{6.8} & \cipm{10.4}{5.7} & N/A & \cipm{8.1}{4.9} \\
book-qa-zh & \cipm{16.8}{3.8} & \cipm{14.7}{4.9} & \cipm{16.7}{4.9} & \cipm{13.8}{5.1} & N/A & \cipm{7.4}{2.8} \\
math-find & \cipm{30.0}{12.6} & \cipm{24.0}{11.7} & \cipm{26.0}{12.0} & \cipm{22.0}{11.3} & N/A & \cipm{28.0}{12.3} \\
code-run & \cipm{6.0}{6.5} & \cipm{0.0}{0.0} & \cipm{0.0}{0.0} & \cipm{0.0}{0.0} & N/A & \cipm{0.0}{0.0} \\
code-debug & \cipm{32.0}{13.0} & \cipm{28.0}{12.5} & \cipm{34.0}{13.2} & \cipm{36.0}{13.3} & N/A & \cipm{32.0}{12.6} \\
\midrule
\textbf{Avg} & \cipm{\textbf{43.0}}{2.1} & \cipm{\textbf{37.3}}{2.0} & \cipm{\textbf{37.2}}{2.2} & \cipm{\textbf{30.6}}{2.4} & N/A & \cipm{\textbf{30.2}}{2.6} \\
\midrule
\multicolumn{7}{l}{\emph{budget } $b=128$}\\
\midrule
passkey & \cipm{100.0}{0.0} & \cipm{100.0}{0.0} & \cipm{100.0}{0.0} & \cipm{98.0}{3.9} & N/A & \cipm{100.0}{0.0} \\
num-str & \cipm{100.0}{0.0} & \cipm{100.0}{0.0} & \cipm{100.0}{0.0} & \cipm{66.0}{13.1} & N/A & \cipm{84.0}{10.1} \\
kv-retr & \cipm{28.0}{12.7} & \cipm{2.0}{3.9} & \cipm{2.0}{3.8} & \cipm{16.0}{10.2} & N/A & \cipm{0.0}{0.0} \\
dialog-qa & \cipm{34.0}{12.9} & \cipm{36.0}{13.3} & \cipm{28.0}{12.4} & \cipm{10.0}{8.3} & N/A & \cipm{24.0}{11.8} \\
book-sum & \cipm{27.6}{2.1} & \cipm{17.8}{2.1} & \cipm{18.1}{2.3} & \cipm{15.5}{2.0} & N/A & \cipm{13.8}{1.6} \\
book-choice & \cipm{82.0}{10.6} & \cipm{82.0}{10.6} & \cipm{84.0}{10.2} & \cipm{82.0}{10.6} & N/A & \cipm{72.0}{12.5} \\
book-qa-en & \cipm{16.2}{5.7} & \cipm{18.6}{6.9} & \cipm{20.0}{6.8} & \cipm{14.3}{6.5} & N/A & \cipm{14.7}{6.9} \\
book-qa-zh & \cipm{16.8}{3.8} & \cipm{14.3}{3.9} & \cipm{14.9}{4.5} & \cipm{16.9}{6.0} & N/A & \cipm{6.3}{2.2} \\
math-find & \cipm{30.0}{12.6} & \cipm{32.0}{12.8} & \cipm{30.0}{12.6} & \cipm{26.0}{12.1} & N/A & \cipm{30.0}{12.4} \\
code-run & \cipm{6.0}{6.5} & \cipm{0.0}{0.0} & \cipm{0.0}{0.0} & \cipm{0.0}{0.0} & N/A & \cipm{2.0}{3.8} \\
code-debug & \cipm{32.0}{13.0} & \cipm{38.0}{13.5} & \cipm{34.0}{13.2} & \cipm{38.0}{13.5} & N/A & \cipm{14.0}{9.7} \\
\midrule
\textbf{Avg} & \cipm{\textbf{43.0}}{2.1} & \cipm{\textbf{40.1}}{2.4} & \cipm{\textbf{39.2}}{2.3} & \cipm{\textbf{34.8}}{2.2} & N/A & \cipm{\textbf{32.8}}{2.0} \\
\midrule
\multicolumn{7}{l}{\emph{budget } $b=256$}\\
\midrule
passkey & \cipm{100.0}{0.0} & \cipm{100.0}{0.0} & \cipm{100.0}{0.0} & \cipm{98.0}{3.9} & N/A & \cipm{100.0}{0.0} \\
num-str & \cipm{100.0}{0.0} & \cipm{100.0}{0.0} & \cipm{100.0}{0.0} & \cipm{84.0}{10.2} & N/A & \cipm{96.0}{5.4} \\
kv-retr & \cipm{28.0}{12.7} & \cipm{16.0}{10.3} & \cipm{6.0}{6.6} & \cipm{18.0}{11.0} & N/A & \cipm{0.0}{0.0} \\
dialog-qa & \cipm{34.0}{12.9} & \cipm{32.0}{12.9} & \cipm{30.0}{12.9} & \cipm{24.0}{11.9} & N/A & \cipm{16.0}{10.1} \\
book-sum & \cipm{27.6}{2.1} & \cipm{21.5}{2.4} & \cipm{21.4}{2.3} & \cipm{18.8}{1.8} & N/A & \cipm{14.8}{1.2} \\
book-choice & \cipm{82.0}{10.6} & \cipm{82.0}{10.6} & \cipm{82.0}{10.6} & \cipm{82.0}{10.6} & N/A & \cipm{76.0}{11.9} \\
book-qa-en & \cipm{16.2}{5.7} & \cipm{21.0}{7.5} & \cipm{21.6}{7.5} & \cipm{18.7}{7.3} & N/A & \cipm{17.9}{7.3} \\
book-qa-zh & \cipm{16.8}{3.8} & \cipm{16.5}{4.5} & \cipm{16.3}{4.6} & \cipm{18.3}{5.8} & N/A & \cipm{10.8}{3.6} \\
math-find & \cipm{30.0}{12.6} & \cipm{32.0}{12.9} & \cipm{30.0}{12.6} & \cipm{26.0}{12.0} & N/A & \cipm{32.0}{12.8} \\
code-run & \cipm{6.0}{6.5} & \cipm{2.0}{3.9} & \cipm{2.0}{3.8} & \cipm{0.0}{0.0} & N/A & \cipm{0.0}{0.0} \\
code-debug & \cipm{32.0}{13.0} & \cipm{38.0}{13.5} & \cipm{36.0}{13.4} & \cipm{36.0}{13.5} & N/A & \cipm{36.0}{13.4} \\
\midrule
\textbf{Avg} & \cipm{\textbf{43.0}}{2.1} & \cipm{\textbf{41.9}}{2.6} & \cipm{\textbf{40.5}}{2.4} & \cipm{\textbf{38.5}}{2.7} & N/A & \cipm{\textbf{36.3}}{2.3} \\
\midrule
\multicolumn{7}{l}{\emph{budget } $b=512$}\\
\midrule
passkey & \cipm{100.0}{0.0} & \cipm{100.0}{0.0} & \cipm{100.0}{0.0} & \cipm{96.0}{5.4} & \cipm{100.0}{0.0} & \cipm{100.0}{0.0} \\
num-str & \cipm{100.0}{0.0} & \cipm{100.0}{0.0} & \cipm{100.0}{0.0} & \cipm{68.0}{13.0} & \cipm{98.0}{3.8} & \cipm{88.0}{9.1} \\
kv-retr & \cipm{28.0}{12.7} & \cipm{24.0}{11.9} & \cipm{14.0}{9.7} & \cipm{28.0}{12.7} & \cipm{0.0}{0.0} & \cipm{0.0}{0.0} \\
dialog-qa & \cipm{34.0}{12.9} & \cipm{28.0}{12.4} & \cipm{26.0}{12.2} & \cipm{26.0}{12.2} & \cipm{14.0}{9.4} & \cipm{24.0}{11.5} \\
book-sum & \cipm{27.6}{2.1} & \cipm{24.8}{2.4} & \cipm{23.3}{2.2} & \cipm{21.6}{2.1} & \cipm{17.6}{2.0} & \cipm{12.9}{1.5} \\
book-choice & \cipm{82.0}{10.6} & \cipm{84.0}{10.1} & \cipm{82.0}{10.6} & \cipm{84.0}{10.2} & \cipm{78.0}{11.5} & \cipm{72.0}{12.4} \\
book-qa-en & \cipm{16.2}{5.7} & \cipm{20.5}{7.5} & \cipm{19.0}{7.4} & \cipm{19.5}{7.5} & \cipm{19.1}{7.0} & \cipm{14.5}{5.8} \\
book-qa-zh & \cipm{16.8}{3.8} & \cipm{18.0}{4.6} & \cipm{17.9}{4.6} & \cipm{16.7}{5.1} & \cipm{17.0}{5.2} & \cipm{10.0}{3.1} \\
math-find & \cipm{30.0}{12.6} & \cipm{32.0}{12.8} & \cipm{34.0}{13.0} & \cipm{24.0}{11.6} & \cipm{22.0}{11.4} & \cipm{32.0}{12.8} \\
code-run & \cipm{6.0}{6.5} & \cipm{2.0}{3.8} & \cipm{2.0}{3.8} & \cipm{0.0}{0.0} & \cipm{0.0}{0.0} & \cipm{0.0}{0.0} \\
code-debug & \cipm{32.0}{13.0} & \cipm{32.0}{13.1} & \cipm{34.0}{13.2} & \cipm{40.0}{13.6} & \cipm{30.0}{12.4} & \cipm{28.0}{12.5} \\
\midrule
\textbf{Avg} & \cipm{\textbf{43.0}}{2.1} & \cipm{\textbf{42.3}}{3.0} & \cipm{\textbf{41.1}}{2.4} & \cipm{\textbf{38.5}}{2.7} & \cipm{\textbf{36.0}}{2.1} & \cipm{\textbf{34.7}}{2.7} \\
\midrule
\multicolumn{7}{l}{\emph{budget } $b=1024$}\\
\midrule
passkey & \cipm{100.0}{0.0} & \cipm{100.0}{0.0} & \cipm{100.0}{0.0} & \cipm{100.0}{0.0} & \cipm{100.0}{0.0} & \cipm{100.0}{0.0} \\
num-str & \cipm{100.0}{0.0} & \cipm{100.0}{0.0} & \cipm{100.0}{0.0} & \cipm{98.0}{3.8} & \cipm{100.0}{0.0} & \cipm{82.0}{10.7} \\
kv-retr & \cipm{28.0}{12.7} & \cipm{28.0}{12.5} & \cipm{28.0}{12.5} & \cipm{36.0}{13.5} & \cipm{6.0}{6.6} & \cipm{0.0}{0.0} \\
dialog-qa & \cipm{34.0}{12.9} & \cipm{32.0}{13.0} & \cipm{36.0}{13.3} & \cipm{28.0}{12.5} & \cipm{34.0}{13.0} & \cipm{26.0}{11.9} \\
book-sum & \cipm{27.6}{2.1} & \cipm{24.9}{2.1} & \cipm{26.5}{2.1} & \cipm{22.1}{2.2} & \cipm{19.3}{2.3} & \cipm{15.6}{1.9} \\
book-choice & \cipm{82.0}{10.6} & \cipm{84.0}{10.1} & \cipm{82.0}{10.6} & \cipm{84.0}{10.2} & \cipm{78.0}{11.5} & \cipm{76.0}{11.7} \\
book-qa-en & \cipm{16.2}{5.7} & \cipm{19.0}{7.4} & \cipm{18.7}{7.2} & \cipm{19.6}{7.6} & \cipm{19.3}{7.5} & \cipm{17.2}{6.0} \\
book-qa-zh & \cipm{16.8}{3.8} & \cipm{18.0}{4.6} & \cipm{18.5}{4.8} & \cipm{16.0}{3.8} & \cipm{19.2}{4.1} & \cipm{13.7}{4.1} \\
math-find & \cipm{30.0}{12.6} & \cipm{32.0}{12.8} & \cipm{32.0}{12.8} & \cipm{30.0}{12.6} & \cipm{28.0}{12.3} & \cipm{30.0}{12.7} \\
code-run & \cipm{6.0}{6.5} & \cipm{4.0}{5.4} & \cipm{4.0}{5.4} & \cipm{2.0}{3.8} & \cipm{0.0}{0.0} & \cipm{2.0}{3.8} \\
code-debug & \cipm{32.0}{13.0} & \cipm{34.0}{13.2} & \cipm{36.0}{13.4} & \cipm{36.0}{13.4} & \cipm{32.0}{12.7} & \cipm{32.0}{12.8} \\
\midrule
\textbf{Avg} & \cipm{\textbf{43.0}}{2.1} & \cipm{\textbf{43.3}}{2.9} & \cipm{\textbf{43.8}}{2.6} & \cipm{\textbf{42.9}}{2.4} & \cipm{\textbf{39.6}}{2.0} & \cipm{\textbf{35.9}}{2.6} \\
\midrule
\multicolumn{7}{l}{\emph{budget } $b=2048$}\\
\midrule
passkey & \cipm{100.0}{0.0} & \cipm{100.0}{0.0} & \cipm{100.0}{0.0} & \cipm{98.0}{3.9} & \cipm{100.0}{0.0} & \cipm{100.0}{0.0} \\
num-str & \cipm{100.0}{0.0} & \cipm{100.0}{0.0} & \cipm{100.0}{0.0} & \cipm{92.0}{7.5} & \cipm{100.0}{0.0} & \cipm{74.0}{12.1} \\
kv-retr & \cipm{28.0}{12.7} & \cipm{30.0}{12.7} & \cipm{28.0}{12.6} & \cipm{36.0}{13.6} & \cipm{14.0}{9.7} & \cipm{0.0}{0.0} \\
dialog-qa & \cipm{34.0}{12.9} & \cipm{34.0}{13.1} & \cipm{38.0}{13.5} & \cipm{32.0}{13.0} & \cipm{32.0}{12.7} & \cipm{26.0}{12.0} \\
book-sum & \cipm{27.6}{2.1} & \cipm{27.6}{2.1} & \cipm{27.4}{2.3} & \cipm{23.7}{2.3} & \cipm{20.1}{2.0} & \cipm{16.4}{2.0} \\
book-choice & \cipm{82.0}{10.6} & \cipm{84.0}{10.1} & \cipm{82.0}{10.6} & \cipm{82.0}{10.6} & \cipm{76.0}{11.8} & \cipm{80.0}{11.1} \\
book-qa-en & \cipm{16.2}{5.7} & \cipm{17.9}{6.8} & \cipm{18.0}{6.8} & \cipm{20.5}{7.6} & \cipm{17.5}{6.8} & \cipm{16.8}{5.8} \\
book-qa-zh & \cipm{16.8}{3.8} & \cipm{17.3}{4.2} & \cipm{17.8}{4.2} & \cipm{17.0}{4.5} & \cipm{20.0}{5.4} & \cipm{16.0}{4.9} \\
math-find & \cipm{30.0}{12.6} & \cipm{32.0}{12.8} & \cipm{32.0}{12.8} & \cipm{30.0}{12.6} & \cipm{32.0}{12.8} & \cipm{32.0}{12.9} \\
code-run & \cipm{6.0}{6.5} & \cipm{6.0}{6.6} & \cipm{4.0}{5.4} & \cipm{0.0}{0.0} & \cipm{2.0}{3.8} & \cipm{2.0}{3.8} \\
code-debug & \cipm{32.0}{13.0} & \cipm{34.0}{13.2} & \cipm{32.0}{13.0} & \cipm{34.0}{13.2} & \cipm{30.0}{12.4} & \cipm{30.0}{12.5} \\
\midrule
\textbf{Avg} & \cipm{\textbf{43.0}}{2.1} & \cipm{\textbf{43.9}}{2.8} & \cipm{\textbf{43.6}}{2.6} & \cipm{\textbf{42.3}}{2.4} & \cipm{\textbf{40.3}}{2.3} & \cipm{\textbf{35.7}}{2.6} \\
\end{longtable}
\endgroup

\FloatBarrier
\subsection{KV-cache quantization (KIVI): full grid}
\label{app:extres-kivi}

Table~\ref{tab:appC-kivi} restates the quantization grid of
\S\ref{sec:eval-kivi} in flat per-budget blocks, with the measurement
protocol in its caption: KIVI~\citep{liu2024kivi}, the standard
tuning-free asymmetric KV quantizer (per-channel keys, per-token values,
grouped scales), at $4$ and $2$ bits,
\textsc{post}- and \textsc{pre}-selection, at every budget
($64$--$2048$) and summary rank ($r\in\{8,4,2\}$), each against the bf16
\method{} control matched in budget and rank, on identical record sets
(RULER-32K, all $13$ tasks, $50$ records each). The $r{=}4$ rows sit between the $r{=}8$ and
$r{=}2$ rows at most operating points; the rank-ordering of the
\textsc{pre}--\textsc{post} gap at the mid budget $b{=}256$ ($r{=}8$:
$-1.2$, $r{=}4$: $-1.4$, $r{=}2$: $-1.7$) is
the stacked-quantizer signature: lower-rank summaries are more
sensitive to quantized inputs exactly where the budget makes selection
precision binding.

\begin{table}[t]\centering
\caption{\textbf{Full KIVI $\times$ \method{} grid.} RULER-32K average (13
tasks, Llama-3.1-8B), every (budget, rank, bit-width, stage) cell with its
budget- and rank-matched bf16 \method{} control. \textsc{post} = summaries
from bf16 keys, decode attends KIVI-quantized KV; \textsc{pre} = summaries
built from the quantized keys (and decode attends quantized KV). Protocol:
identical record sets across all arms of a row block; fake-quantization
(quantize--dequantize in bf16) of complete pages at page-finalize, with the
partial tail page unquantized inside the always-attended window (the
page-aligned analogue of KIVI's fp16 residual, ${\le}16$ tokens, strictly
harsher than KIVI's $128$).}
\label{tab:appC-kivi}
\footnotesize
\setlength{\tabcolsep}{4.5pt}
\begin{tabular}{r l r rr rr}
\toprule
& & & \multicolumn{2}{c}{4-bit} & \multicolumn{2}{c}{2-bit}\\
\cmidrule(lr){4-5}\cmidrule(lr){6-7}
$b$ & rank & bf16 & \textsc{post} & \textsc{pre} & \textsc{post} & \textsc{pre}\\
\midrule
$64$   & $r{=}8$ & $73.6$ & $72.8$ & $73.0$ & $68.0$ & $67.6$\\
       & $r{=}4$ & $68.0$ & $68.6$ & $68.8$ & $64.9$ & $63.7$\\
       & $r{=}2$ & $66.3$ & $65.4$ & $65.9$ & $60.0$ & $60.5$\\
\midrule
$128$  & $r{=}8$ & $83.4$ & $83.3$ & $83.7$ & $78.7$ & $76.7$\\
       & $r{=}4$ & $80.1$ & $78.1$ & $79.4$ & $75.6$ & $74.8$\\
       & $r{=}2$ & $77.7$ & $76.1$ & $76.5$ & $72.7$ & $72.8$\\
\midrule
$256$  & $r{=}8$ & $85.6$ & $85.2$ & $85.2$ & $81.9$ & $80.7$\\
       & $r{=}4$ & $84.7$ & $83.8$ & $84.4$ & $80.4$ & $79.0$\\
       & $r{=}2$ & $82.7$ & $82.7$ & $82.2$ & $78.2$ & $76.5$\\
\midrule
$512$  & $r{=}8$ & $86.1$ & $86.2$ & $85.9$ & $83.4$ & $82.8$\\
       & $r{=}4$ & $86.0$ & $85.7$ & $85.7$ & $83.7$ & $82.7$\\
       & $r{=}2$ & $86.0$ & $85.6$ & $85.3$ & $81.6$ & $80.7$\\
\midrule
$1024$ & $r{=}8$ & $87.3$ & $87.3$ & $87.1$ & $84.9$ & $85.3$\\
       & $r{=}4$ & $87.1$ & $86.6$ & $86.5$ & $83.8$ & $84.3$\\
       & $r{=}2$ & $86.9$ & $86.6$ & $86.5$ & $84.0$ & $83.2$\\
\midrule
$2048$ & $r{=}8$ & $88.0$ & $87.6$ & $87.4$ & $85.0$ & $85.3$\\
       & $r{=}4$ & $87.7$ & $87.2$ & $87.6$ & $84.5$ & $84.9$\\
       & $r{=}2$ & $87.6$ & $87.5$ & $87.8$ & $84.8$ & $84.3$\\
\bottomrule
\end{tabular}
\end{table}

\FloatBarrier
\subsection{Decode efficiency: measurement protocol and full numbers}
\label{app:eff-protocol}

\textbf{Configuration.} All cells are batch size $1$, tensor parallel
degree $1$, on one H200 NVL GPU (\texttt{sm\_90a}), model
GLM-4-9B-Chat-1M ($40$ layers, $4$ KV heads, head dim $128$, page size
$B{=}16$, \texttt{bf16}), at a budget of $2048$ tokens per (layer, KV
head). The dense controls are FlashAttention-3 (FA3) and FlashInfer
(FI); \method{} runs its full kernel set, and every cell must show the
engine's \method{}-active and r8i4-score banners and no fallback
marker, else it is discarded rather than published. FA3, not FI,
anchors the parity claim, since \method{} delegates prefill to FA3.

\textbf{What one cell measures.} A warm prefill and a clock-ramp
decode absorb one-time costs (kernel JIT, autotuning, CUDA-graph
capture) and detect a stable SM clock; one measurement prefill then
feeds three windows of $64$ decode steps on the same live batch, $192$
timed steps in all, with requests aborted rather than finished so
end-of-sequence effects never enter. Each window must pass a lockstep
check (every step emits exactly the batch's worth of tokens) and a
throttle check (NVML-detected slowdowns drop the window); a cell with
no clean window is refused. TTFT is measured with three repeated
prefills; the series is bimodal, and we report the low-sample mean as
steady TTFT and the single high sample, which carries the one-time
summary build, as the build-inclusive first token.

\textbf{Error semantics.} TPOT is mean\,$\pm$\,sd over the $192$
steps (coefficient of variation $0.5$--$1.7\%$ for the dense backends,
$2.5$--$4.0\%$ for \method{}, the extra variance being per-step
selection and the periodic page-crossing refresh). Steady-TTFT
dispersion is under $0.6\%$, so its column is omitted. The build
column is mean\,$\pm$\,sd over $8$--$10$ boots per rung; it is a
genuinely heavy-tailed one-time cost dominated by first-request
allocator and cache effects, so its deviation can rival its mean and
its profile across context is non-monotone (the wide $128$K reading is
one boot's stall: median $2.5$\,s versus the $2.9$\,s mean). We
report the full dispersion rather than hide the noise; overall the
build grows only slowly ($0.5$\,s at $16$K to $3.7$\,s at $1$M),
consistent with its $O(L)$ summary work.

\begin{table}[t]
\centering
\small
\setlength{\tabcolsep}{4.5pt}
\caption{\textbf{Decode efficiency, full numbers with dispersion.} bs$=1$,
TP$=1$, GLM-4-9B-Chat-1M, one H200 NVL, \method{} path
verified per cell. TPOT is mean\,$\pm$\,sd over $192$ timed steps; the \method{} column is bold throughout, including the two short-context rows where it trails. Speedup is
\method{} against the \emph{faster} dense backend at each context (the
conservative comparison). TTFT is the steady (prefill-only) mean; \method{}
steady TTFT matches FA3 within measurement noise (prefill is byte-identical
stock FA3). Build is the one-time first-token summary-build overhead
(\S\ref{app:eff-protocol}); mean\,$\pm$\,sd over $8$--$10$ boots per rung,
$16$K--$1$M. The build distribution is right-skewed at every rung, so
these means exceed the corresponding medians (at $128$K, $2.9$\,s mean
against a $2.5$\,s median); Fig.~\ref{fig:eff-glm-dense}(b) draws the
median build, which is why the gap above dense there is smaller than
this column.}
\label{tab:appC-eff-full}
\begin{tabular}{l ccc c cc c}
\toprule
& \multicolumn{3}{c}{TPOT (ms)} & & \multicolumn{2}{c}{steady TTFT (s)} & build \\
\cmidrule(lr){2-4}\cmidrule(lr){6-7}
ctx & FA3 & FI & \method{} & speedup & FA3 & \method{} & (ms) \\
\midrule
16K  & $6.93\,\pm\,0.05$ & $7.20\,\pm\,0.05$ & $\mathbf{7.08}\,\pm\,0.21$ & $0.98\times$ & $0.63$ & $0.63$ & $547\,\pm\,349$ \\
32K  & $7.33\,\pm\,0.03$ & $6.91\,\pm\,0.04$ & $\mathbf{7.24}\,\pm\,0.21$ & $0.96\times$ & $1.56$ & $1.57$ & $810\,\pm\,525$ \\
64K  & $7.97\,\pm\,0.08$ & $8.06\,\pm\,0.04$ & $\mathbf{7.38}\,\pm\,0.20$ & $1.08\times$ & $4.42$ & $4.41$ & $916\,\pm\,395$ \\
128K & $8.98\,\pm\,0.05$ & $8.74\,\pm\,0.05$ & $\mathbf{8.27}\,\pm\,0.22$ & $1.06\times$ & $13.9$ & $14.1$ & $2904\,\pm\,2282$ \\
256K & $11.63\,\pm\,0.09$ & $11.89\,\pm\,0.08$ & $\mathbf{8.96}\,\pm\,0.23$ & $1.30\times$ & $49.6$ & $49.7$ & $1323\,\pm\,317$ \\
512K & $17.24\,\pm\,0.18$ & $16.17\,\pm\,0.18$ & $\mathbf{10.47}\,\pm\,0.28$ & $1.54\times$ & $184.1$ & $184.2$ & $1993\,\pm\,253$ \\
1M   & $27.50\,\pm\,0.40$ & $26.19\,\pm\,0.45$ & $\mathbf{12.95}\,\pm\,0.51$ & $2.02\times$ & $699.0$ & $699.4$ & $3717\,\pm\,756$ \\
\bottomrule
\end{tabular}
\end{table}

\begin{table}[tbp]
\centering
\small
\setlength{\tabcolsep}{4pt}
\caption{\textbf{Batched decode, GLM-4-9B-Chat-1M.} Throughput speedup of \method{}
over the \emph{faster} dense backend (FA3/FI) at each (context, batch size),
measured matched-pair on one H200 NVL at the $2048$-token budget. The per-context frontier (largest
batch that fits) is in \textbf{bold}; the right column is \method{}'s decode throughput there
(tokens/s). The speedup grows with both context and batch, reaching
$\mathbf{1.80\times}$ at $256$K, and sits at parity at short context
with small batch. A cell is marked OOM if either member of the matched
pair cannot complete the requested batch without fallback. That
convention keeps every reported speedup a like-for-like pair, but it
also means the frontier here is the pair's, not each arm's: \method{}
carries the full cache plus $4$--$10\%$ more by summary rank
($10\%$ at the r8i4 used here), so on a finer batch
grid its own capacity frontier can only be at or below dense's, and
this table does not measure by how much.}
\label{tab:bs-batched}
\begin{tabular*}{\linewidth}{@{\extracolsep{\fill}} l cccccc c@{}}
\toprule
 & \multicolumn{6}{c}{batch size} & frontier \\
\cmidrule(lr){2-7}
context & $1$ & $2$ & $4$ & $8$ & $16$ & $32$ & \method{} (tok/s) \\
\midrule
$256$K & $1.30$ & $1.43$ & $\mathbf{1.80}$ & OOM & OOM & OOM & $275$ \\
$128$K & $1.06$ & $1.21$ & $1.44$ & $\mathbf{1.59}$ & OOM & OOM & $483$ \\
$64$K  & $1.08$ & $1.08$ & $1.17$ & $1.33$ & $\mathbf{1.43}$ & OOM & $872$ \\
$32$K  & $0.96$ & $1.01$ & $1.02$ & $1.07$ & $1.14$ & $\mathbf{1.26}$ & $1503$ \\
\bottomrule
\end{tabular*}
\end{table}

\textbf{Reading the table.} The saving grows with context because the
selected budget is fixed while the dense read grows linearly: below the
crossover near $40$K, selection plus the summary scan cost slightly
more than the attention they remove (the $0.96$--$0.98\times$ rows);
from $64$K onward \method{} wins, reaching $2.02\times$ at $1$M.
Steady TTFT is at parity at every context, and the one-time build falls
from roughly half the first token at $16$K to about half a percent at
$1$M, against a prefill that spans $0.6$ to $699$ seconds.

\textbf{Batched decode (Table~\ref{tab:bs-batched}).} For each
(backend, context) one engine boots and the batch size is swept on a
single live batch, repeating the same windowed measurement and guards
at each size with no re-prefill, so \method{} and the dense controls
are matched-pair in the same session; the reported speedup is against
the \emph{faster} dense backend per cell. The batched prefetch
(\S\ref{app:kernel}) engages only at $\mathrm{bs}\ge8$ and is
byte-identical to the single-sequence path elsewhere, so the
$\mathrm{bs}{=}1$ column reproduces Table~\ref{tab:appC-eff-full}
exactly rather than being remeasured. The per-context frontier is the
largest batch that fits one H200 NVL at $0.90$ utilization;
combinations beyond it are marked OOM.

\clearpage
\section{Extended Studies}
\label{app:extstudies}
\FloatBarrier
This appendix collects studies that replay the paper's protocol on model
architectures and scales beyond the three headline models
(Llama-3.1-8B, Qwen3-4B, and GLM-4-9B-Chat-1M). Each study states its
model, context regime, and any protocol deltas inline; everything else,
namely the budget grid $b\in\{64,\dots,2048\}$ at page size $B{=}16$,
the per-method budget accounting, the parity gates, and the same-engine
FullKV reference, is inherited unchanged from App.~\ref{app:extres}.

\subsection{GQA}
\label{app:extstudies-heads}
Table~\ref{tab:appD-combine} isolates the combine rule from summary
error (every rule scores the same exact per-head shares, so only the
combine rule differs): share-average (Cor.~\ref{cor:comb}) beats
group-max and group-mass on every statistic, including the worst head,
and trails a per-head oracle costing up to $G{\times}$ the
selected-page traffic and attention work by ${\sim}2$ points of
group-mean coverage, though by $5$--$6$ points at p5.

\begin{table}[!htb]\centering
\caption{\textbf{GQA combine ablation} at budget $b{=}2048$: per-query-head
attention coverage under each rule's single shared page selection, on
\emph{exact} per-head scores so the combine rule is isolated from summary error
(Qwen3-4B, $G{=}4$; ordering holds across the $512$--$2048$ sweep). Columns:
\emph{mean} (group-average coverage, what Cor.~\ref{cor:comb} maximizes),
\emph{worst} (minimum over the group's heads, the starved-head test), and
\emph{p5} (the $5$th percentile). Share-average stays within ${\sim}2$ points of
a per-head oracle on group-mean coverage, and $5$--$6$ points at p5, at
up to $G{\times}$ the selected-page traffic and attention work. Bold: \method{}'s combine rule.}
\label{tab:appD-combine}
\small
\setlength{\tabcolsep}{5pt}
\begin{tabular}{l ccc ccc}
\toprule
& \multicolumn{3}{c}{reasoning} & \multicolumn{3}{c}{retrieval} \\
\cmidrule(lr){2-4}\cmidrule(lr){5-7}
Combine rule & mean & worst & p5 & mean & worst & p5 \\
\midrule
\textbf{Share-average} & $\mathbf{0.872}$ & $\mathbf{0.785}$ & $\mathbf{0.595}$ & $\mathbf{0.879}$ & $\mathbf{0.787}$ & $\mathbf{0.612}$ \\
Group-max & $0.850$ & $0.748$ & $0.504$ & $0.860$ & $0.757$ & $0.544$ \\
Group-mass & $0.856$ & $0.756$ & $0.523$ & $0.866$ & $0.763$ & $0.563$ \\
\midrule
Per-head oracle (up to $G{\times}$ traffic/work) & $0.893$ & $0.812$ & $0.655$ & $0.898$ & $0.812$ & $0.663$ \\
FullKV & $1.000$ & $1.000$ & $1.000$ & $1.000$ & $1.000$ & $1.000$ \\
\bottomrule
\end{tabular}
\end{table}

\subsection{Rotary Position Embedding}
\label{app:extstudies-rope}
Table~\ref{tab:appD-rope} backs \S\ref{sec:law}'s rotation objection
in full: the before-rotation and after-rotation fit of every scope,
plus the position-bucketed interpolation between page and sequence
scope, both models, at rank $8$. Page scope pays
essentially no rotation penalty, while the bucket ladder degrades
monotonically toward sequence scope, with captured energy falling
fastest.

\begin{table}[!htb]\centering
\caption{\textbf{The before-rotation/after-rotation scope matrix.}
Rank-$8$ bases
at the $1024$-token/head operating point (RULER-32K microscope traces,
App.~\ref{app:microscope}), both models. Captured energy is the
reconstruction-energy form; it is negative when a basis fit in one
regime badly mismatches the geometry it reconstructs in the other, and
coincides with projected variance for matched-fit projection rows.
Before-rotation arms are fit on exactly inverse-rotated keys, with
reconstructions re-rotated per position before scoring (round-trip
verified). Page scope is nearly
unaffected by rotation while every shared scope degrades, and fidelity
falls monotonically as a shared basis widens from a few pages toward
the sequence.}
\label{tab:appD-rope}
\small
\setlength{\tabcolsep}{5pt}
\begin{tabular}{ll rrr rrr}
\toprule
& & \multicolumn{3}{c}{Llama-3.1-8B} & \multicolumn{3}{c}{Qwen3-4B}\\
\cmidrule(lr){3-5}\cmidrule(lr){6-8}
Regime & Scope & capt.\ energy & mass & recall & capt.\ energy & mass & recall\\
\midrule
\multirow{3}{*}{before rotation}
& global & 0.1695 & 0.2546 & 0.4330 & 0.1287 & 0.3950 & 0.4087\\
& seq    & 0.5549 & 0.4804 & 0.6192 & 0.5191 & 0.5011 & 0.5826\\
& page   & 0.9191 & 0.8794 & 0.9442 & 0.9088 & 0.8930 & 0.9275\\
\midrule
\multirow{3}{*}{after rotation}
& global & -1.1261 & 0.1807 & 0.1173 & -0.6649 & 0.2919 & 0.2277\\
& seq    & -0.6680 & 0.2266 & 0.2936 & -0.3172 & 0.4319 & 0.3835\\
& page   & 0.8899 & 0.8792 & 0.9345 & 0.8865 & 0.8926 & 0.9212\\
\midrule
\multirow{3}{*}{\shortstack{after rotation\\(bucketed)}}
& bucket-4  & 0.5101 & 0.8022 & 0.7572 & 0.5707 & 0.8344 & 0.7537\\
& bucket-16 & 0.1505 & 0.6070 & 0.6272 & 0.3061 & 0.6216 & 0.6050\\
& bucket-64 & -0.1748 & 0.5281 & 0.5186 & 0.0936 & 0.5502 & 0.5280\\
\bottomrule
\end{tabular}
\end{table}

\makeatletter\global\app@firstsubtrue\makeatother
\subsection{Page size}
\label{app:extstudies-pagesize}
Page size trades selection granularity against summary cost: at a fixed
rank the summary amortizes over more tokens as $B$ grows, so its
payload falls from $9.4\%$ of the page's KV bytes at $B{=}16$ to
$5.5\%$ and $3.5\%$ at $B{=}32$ and $B{=}64$, while a given budget buys
proportionally fewer pages ($k=\lceil b/B\rceil$), only two of them
free at $b{=}256$, $B{=}64$ once the sink and most recent pages are
kept. Table~\ref{tab:appD-pagesize} sweeps both suites across the
three page sizes.

\begin{table}[!htb]\centering
\caption{\textbf{Page size.} LongBench-v1 and RULER accuracy
(Llama-3.1-8B) vs.\ budget $b$ at page sizes $B\in\{16,32,64\}$,
\method{} at rank $8$ with an int4 basis; FullKV is the
budget-independent reference, above the rule, and the $B{=}16$ columns
are the runs of App.~\ref{app:extres}, whose protocol is inherited.
Coarsening the page is immaterial on LongBench-v1, costs under two
points on RULER-16K, and costs two to four points on
RULER-32K at every budget, not monotonically in $B$: there $B{=}64$
matches or beats $B{=}32$ at three of the four budgets.}
\label{tab:appD-pagesize}
\footnotesize
\setlength{\tabcolsep}{4.5pt}
\begin{tabular}{r rrr rrr rrr}
\toprule
& \multicolumn{3}{c}{LongBench-v1}
& \multicolumn{3}{c}{RULER-16K}
& \multicolumn{3}{c}{RULER-32K}\\
\cmidrule(lr){2-4}\cmidrule(lr){5-7}\cmidrule(lr){8-10}
$b$ & $B{=}16$ & $B{=}32$ & $B{=}64$
    & $B{=}16$ & $B{=}32$ & $B{=}64$
    & $B{=}16$ & $B{=}32$ & $B{=}64$\\
\midrule
FullKV & \multicolumn{3}{c}{$47.0$} & \multicolumn{3}{c}{$94.3$} & \multicolumn{3}{c}{$88.8$}\\
\midrule
$256$  & $46.7$ & $46.8$ & $46.8$ & $90.6$ & $89.7$ & $89.0$ & $85.5$ & $82.7$ & $83.2$\\
$512$  & $47.1$ & $46.9$ & $47.0$ & $91.4$ & $91.3$ & $91.5$ & $86.1$ & $83.8$ & $83.7$\\
$1024$ & $47.2$ & $47.1$ & $47.1$ & $93.0$ & $92.1$ & $92.7$ & $87.2$ & $83.5$ & $84.8$\\
$2048$ & $47.2$ & $47.1$ & $47.0$ & $93.7$ & $93.0$ & $93.6$ & $88.0$ & $84.5$ & $85.1$\\
\bottomrule
\end{tabular}
\end{table}

\subsection{Mixture-of-experts: ERNIE-4.5-21B-A3B-Thinking}
\label{app:extstudies-moe}
ERNIE-4.5-21B-A3B-Thinking~\citep{ernie45report} probes the FFN axis:
a mixture-of-experts reasoning decoder with $21$B total and ${\sim}3$B
activated parameters ($20$ query and $4$ KV heads at $d{=}128$, so
$G{=}5$; a $131$K context window). Expert routing replaces the dense
FFN and leaves attention untouched, so the KV cache \method{}
summarizes and selects over is identical in kind, and the study tests
whether selection fidelity is insensitive to the FFN family; the small
activated share also makes decode more KV-read-bound than a comparable
dense model, the regime where selection has the most to offer.
Table~\ref{tab:appD-ernie-reasoning} reports AIME26 and MATH-500 across
the shared budget grid against FullKV and the exact-LSE oracle.

\begin{table}[!htb]\centering
\caption{\textbf{Reasoning on the mixture-of-experts point.} AIME26 and
MATH-500 accuracy (single sample, seed $0$) vs.\ per-(layer, KV-head) token
budget $b$ for ERNIE-4.5-21B-A3B-Thinking: the FullKV reference above each
panel (AIME26 = matched seed-$0$ slice of its avg@$4$ run), exact-LSE oracle
selection, and \method{} at ranks $8$/$4$/$2$ with an int4 basis; protocol
otherwise inherited from App.~\ref{app:extres}. The oracle marks the budget
floor (AIME26 collapses outright at $b{=}64$); \method{} broadly follows
it across ranks, all three ranks reach the FullKV reference on AIME26 at
$b{=}2048$, and no systematic rank ordering appears.}
\label{tab:appD-ernie-reasoning}
\footnotesize
\setlength{\tabcolsep}{4.5pt}
\begin{minipage}[t]{0.485\linewidth}\centering
\textbf{AIME26}\quad(FullKV: $66.7$)\\[2.5pt]
\begin{tabular}{r r rrr}
\toprule
& & \multicolumn{3}{c}{\method{}}\\
\cmidrule(lr){3-5}
$b$ & Oracle & r8i4 & r4i4 & r2i4\\
\midrule
$64$   & $0.0$  & $0.0$  & $0.0$  & $0.0$\\
$128$  & $33.3$ & $23.3$ & $36.7$ & $30.0$\\
$256$  & $46.7$ & $56.7$ & $50.0$ & $50.0$\\
$512$  & $63.3$ & $63.3$ & $56.7$ & $56.7$\\
$1024$ & $60.0$ & $60.0$ & $66.7$ & $63.3$\\
$2048$ & $60.0$ & $66.7$ & $66.7$ & $66.7$\\
\bottomrule
\end{tabular}
\end{minipage}\hfill
\begin{minipage}[t]{0.485\linewidth}\centering
\textbf{MATH-500}\quad(FullKV: $90.0$)\\[2.5pt]
\begin{tabular}{r r rrr}
\toprule
& & \multicolumn{3}{c}{\method{}}\\
\cmidrule(lr){3-5}
$b$ & Oracle & r8i4 & r4i4 & r2i4\\
\midrule
$64$   & $46.0$ & $48.0$ & $54.0$ & $48.0$\\
$128$  & $78.0$ & $72.0$ & $76.0$ & $76.0$\\
$256$  & $82.0$ & $88.0$ & $90.0$ & $88.0$\\
$512$  & $90.0$ & $88.0$ & $90.0$ & $90.0$\\
$1024$ & $92.0$ & $90.0$ & $88.0$ & $88.0$\\
$2048$ & $86.0$ & $88.0$ & $92.0$ & $86.0$\\
\bottomrule
\end{tabular}
\end{minipage}
\end{table}

\makeatletter\global\app@firstsubtrue\makeatother
\subsection{Large dense: DeepSeek-R1-Distill-Qwen-32B}
\label{app:extstudies-r1d}
DeepSeek-R1-Distill-Qwen-32B~\citep{deepseekr1} probes the scale axis:
a $32$B dense reasoning decoder of the Qwen2 architecture ($40$ query
and $8$ KV heads at $d{=}128$, so $G{=}5$; a $131$K context window),
eight times the parameters of the headline reasoning model. The page
geometry is unchanged ($B{=}16$, $d{=}128$), so the per-page summary,
its byte accounting, and the budget grid carry over verbatim; being
reasoning-tuned, it also extends the budget-floor question of
\S\ref{sec:eval-reason} to a larger dense model.
Table~\ref{tab:appD-r1d-reasoning} reports AIME26 and MATH-500 across
the shared budget grid against FullKV and the exact-LSE oracle.

\begin{table}[!htb]\centering
\caption{\textbf{Reasoning on the large-dense point.} AIME26 and
MATH-500 accuracy (single sample, seed $0$) vs.\ per-(layer, KV-head) token
budget $b$ for DeepSeek-R1-Distill-Qwen-32B: the FullKV reference above each
panel (AIME26 = matched seed-$0$ slice of its avg@$4$ run), exact-LSE oracle
selection, and \method{} at ranks $8$/$4$/$2$ with an int4 basis; protocol
otherwise inherited from App.~\ref{app:extres}. The budget floor bites only
at $b{=}64$; above it, every arm, the oracle included, fluctuates around
the FullKV reference on both suites, with no systematic rank ordering.}
\label{tab:appD-r1d-reasoning}
\footnotesize
\setlength{\tabcolsep}{4.5pt}
\begin{minipage}[t]{0.485\linewidth}\centering
\textbf{AIME26}\quad(FullKV: $53.3$)\\[2.5pt]
\begin{tabular}{r r rrr}
\toprule
& & \multicolumn{3}{c}{\method{}}\\
\cmidrule(lr){3-5}
$b$ & Oracle & r8i4 & r4i4 & r2i4\\
\midrule
$64$   & $6.7$  & $16.7$ & $23.3$ & $16.7$\\
$128$  & $50.0$ & $53.3$ & $50.0$ & $43.3$\\
$256$  & $46.7$ & $56.7$ & $50.0$ & $56.7$\\
$512$  & $70.0$ & $46.7$ & $56.7$ & $66.7$\\
$1024$ & $60.0$ & $60.0$ & $60.0$ & $53.3$\\
$2048$ & $63.3$ & $60.0$ & $63.3$ & $56.7$\\
\bottomrule
\end{tabular}
\end{minipage}\hfill
\begin{minipage}[t]{0.485\linewidth}\centering
\textbf{MATH-500}\quad(FullKV: $94.0$)\\[2.5pt]
\begin{tabular}{r r rrr}
\toprule
& & \multicolumn{3}{c}{\method{}}\\
\cmidrule(lr){3-5}
$b$ & Oracle & r8i4 & r4i4 & r2i4\\
\midrule
$64$   & $76.0$ & $82.0$ & $80.0$ & $76.0$\\
$128$  & $94.0$ & $90.0$ & $88.0$ & $92.0$\\
$256$  & $94.0$ & $92.0$ & $94.0$ & $94.0$\\
$512$  & $92.0$ & $94.0$ & $90.0$ & $92.0$\\
$1024$ & $94.0$ & $94.0$ & $94.0$ & $94.0$\\
$2048$ & $90.0$ & $94.0$ & $94.0$ & $94.0$\\
\bottomrule
\end{tabular}
\end{minipage}
\end{table}

\FloatBarrier
\section{Decode Kernel: Critical Path and Overlap Schedule}
\label{app:kernel}

This section gives a high-level picture of what one \method{} decode step does
on the GPU, and where the two design choices that keep it at or below dense
TPOT live. \method{} decode is two stages -- \emph{score-then-select} (Stage A)
and \emph{sparse paged attention} (Stage B) -- but the win is as much about what
is kept \emph{off} the step's critical path as about the stages themselves.
Figure~\ref{fig:kernel-critpath} shows the schedule for one attention layer.

\begin{figure}[!htb]
\centering
\begin{tikzpicture}[
  font=\small, >={Latex[length=2mm]},
  crit/.style={draw, thick, rounded corners=2pt, minimum height=11mm,
               text width=17mm, align=center, fill=black!4},
  innode/.style={draw, thick, rounded corners=2pt, minimum height=8mm,
             text width=7mm, align=center, fill=black!10},
  hid/.style={draw, densely dashed, thick, rounded corners=2pt,
              minimum height=10mm, text width=24mm, align=center, fill=blue!6},
  off/.style={draw, densely dashed, thick, rounded corners=2pt,
              minimum height=10mm, text width=30mm, align=center, fill=black!3},
  note/.style={font=\scriptsize\itshape, text=black!60, align=center},
  node distance=6mm and 6mm,
]
\node[innode] (h) {$h_t$};
\node[crit, right=of h] (q) {Q-GEMV};
\node[crit, right=of q] (sc) {r8i4 score \\ {\scriptsize (int4 summaries)}};
\node[crit, right=of sc] (se) {select \\ top-$k$};
\node[crit, right=of se] (de) {sparse decode \\ {\scriptsize ($k$ pages)}};
\node[crit, right=of de] (o) {O-GEMV};
\draw[->, thick] (h) -- (q);
\foreach \a/\b in {q/sc, sc/se, se/de, de/o} {\draw[->, thick] (\a) -- (\b);}
\node[hid, below=10mm of se] (kv) {KV-GEMV + RoPE \\ + KV-write};
\draw[->, densely dashed, thick] (h.south) |- (kv.west);
\draw[->, densely dashed, thick] (kv) -- (se);
\node[note, below=1.5mm of kv, text width=42mm]
  {(i) split out of the fused QKV; \textbf{covered} on the selector's idle
   SMs ($4$ of $132$ busy)};
\node[off, above=10mm of sc] (bu) {summary build \\ {\scriptsize gram
  $\to$ eigh $\to$ int4 basis}};
\draw[->, densely dashed, thick] (bu) -- (sc);
\node[note, above=1.5mm of bu, text width=46mm]
  {(ii) \textbf{off the hot path}: query-independent, built once when a page
   fills};
\end{tikzpicture}
\caption{\textbf{Decode-step critical path (one attention layer).} The hot
path (solid) is Q-GEMV $\to$ r8i4 score $\to$ top-$k$ select $\to$ sparse
decode $\to$ O-GEMV: every resident page's int4 summary is scored, the
top-$k$ pages per KV head are kept, and paged attention runs over only
those. Two costs sit \emph{off} it (dashed): the KV projection, rotary
embedding, and cache write, covered under a selector occupying $4$ of the
GPU's $132$ SMs (non-QK-norm schedule, context $\le 64$K); and the per-page
summary, query-independent and built once when a page fills, so the step
only reads it. The exposed work is a summary scan, selection, and sparse
attention over the $k$-page ($b$-token) working set.}
\label{fig:kernel-critpath}
\end{figure}

\textbf{The critical path.} Stage A projects the decode query ($h_t
\!\to\! q$), scores every resident page by contracting the query with that
page's rank-$8$ int4 summary, and keeps the $k$ highest-scoring pages per
KV-head into the shared page-selection table the group attends
(the \emph{union table} in the implementation, \S\ref{app:proofs-combine}). Stage B then runs paged
attention over only those $k$ pages. Everything on this path is fixed-shape and
allocation-free, so the whole step is captured in one CUDA graph.

\textbf{(i) Splitting the QKV GEMM.} A fused QKV projection would place the KV
projection on the critical path even though the newly written token is not what
Stage~A scores. We instead split the projection: the Q-GEMV stays on the path
(the score needs $q$), while the KV-GEMV, its RoPE, and the cache write are
issued as a separate kernel that runs \emph{concurrently with the selector}. The
selector is intentionally barrier-bound and occupies only a handful of SMs, so
the KV projection executes on the idle remainder and its latency is hidden
rather than added. This is the schedule the kernel set uses for
non-QK-norm architectures at context $\le 64$K; for QK-norm models (per-head
$q$/$k$ norm) the rope-absorbing variant is disabled and the projection reverts
to a standard fused form.

\textbf{(ii) Building the summary off the hot path.} The page summary is a
function of the page's keys alone, not of any decode query, so it is built once
when a page is finalized during prefill (or on a page-crossing refresh), not
per decode step. The decode step reads the resulting int4 basis and never pays
the eigendecomposition. This is what lets Stage~A cost a compact-summary
\emph{scan} rather than a re-projection of the full keys, and it is the reason
\method{} touches the prefill kernel not at all (steady-prefill parity;
build-inclusive TTFT additionally carries the one-time summary
construction, App.~\ref{app:eff-protocol}).

\clearpage
\section{Reproducibility}
\label{app:repro}
\FloatBarrier
\paragraph{Code availability.}
\method{} is released as a pip-installable vLLM \emph{general plugin}
(\texttt{pip install locks-kv}), a drop-in attention backend that registers
itself with an unmodified engine, no fork or patch required
(\S\ref{sec:impl}). The \method{} implementation and the Hopper kernels
are released at \url{https://github.com/Js-Hwang1/locks-kv} and
distributed through the \texttt{locks-kv} PyPI package. The evaluation
harness, the baseline ports, the reproduction configurations, and the
microscope harness behind App.~\ref{app:microscope} are released
separately at \url{https://github.com/Js-Hwang1/locks}.

\paragraph{Software.} Every number is measured in vLLM~$0.24.0$ with a
\texttt{bfloat16} KV cache, full CUDA graphs, and prefix caching off.
Every cell runs inside one pinned container image, a custom image built
on NGC PyTorch~\texttt{25.08-py3} (Singularity) with
PyTorch~$2.11.0$~(\texttt{cu130}) and the CUDA~$13.0$ runtime
(cuBLAS~$13.0.0$), so the toolchain is identical across all
measurements.

\paragraph{Hardware.} Efficiency cells are single-GPU on one
NVIDIA~H200~NVL ($143{,}771$\,MiB HBM3e; the node holds $8$, one used per
cell). Each node carries two
Intel~Xeon~6530P CPUs ($64$ cores total, one thread per core),
$1512$\,GiB RAM, and driver~$595.71.05$.

\end{document}